\documentclass{article} % For LaTeX2e
\usepackage{iclr2026_conference,times}
\iclrfinalcopy
\usepackage{amsmath,amsfonts,bm}

\def\eqref#1{equation~\ref{#1}}
\def\1{\bm{1}}

\DeclareMathAlphabet{\mathsfit}{\encodingdefault}{\sfdefault}{m}{sl}
\SetMathAlphabet{\mathsfit}{bold}{\encodingdefault}{\sfdefault}{bx}{n}

\newcommand{\E}{\mathbb{E}}

\usepackage{hyperref}
\usepackage{url}
\usepackage{marvosym}
\usepackage[utf8]{inputenc} % allow utf-8 input
\usepackage[T1]{fontenc}    % use 8-bit T1 fonts
\usepackage{url}            % simple URL typesetting
\usepackage{booktabs}       % professional-quality tables
\usepackage{amsfonts}       % blackboard math symbols
\usepackage{nicefrac}       % compact symbols for 1/2, etc.
\usepackage{microtype}      % microtypography
\usepackage{mathtools}      % for \coloneqq
\usepackage{xcolor}         % colors
\usepackage{multirow}
\usepackage{adjustbox}
\usepackage{algorithm}
\usepackage{algorithmic}
\usepackage{amsmath}
\usepackage{amsthm}
\usepackage{hyperref}       % hyperlinks
\usepackage{enumitem}
\usepackage{amsmath}
\usepackage{graphicx}
\usepackage{xspace}
\usepackage{subcaption} 
\usepackage{listings}
\usepackage{amsmath, amssymb, amsthm}
\usepackage{amsfonts} % For \mathbb
\usepackage{bm} % For bold math symbols like \bm{\theta}
\theoremstyle{definition}

\newtheorem{theorem}{Theorem}
\newtheorem{assumption}{Assumption}
\usepackage{amsthm}
\usepackage{mathtools}
\newcommand{\defeq}{\coloneqq}
\newtheorem{lemma}{Lemma}
\newtheorem{remark}{Remark}
\newtheorem{corollary}{Corollary}
\newcommand{\Loss}{\mathcal{L}}
\usepackage{authblk}
\usepackage{marvosym} % 提供 \Letter 符号
\usepackage{fancyhdr}  % 用于自定义页眉页脚
\usepackage{lastpage}
\newcommand{\ie}{\textit{i.e.}}
\newcommand{\eg}{\textit{e.g.}}

\newcommand{\partitle}[1]{\medskip \noindent \textbf{#1.}}
\newcommand{\Name}{\texttt{XTF}\xspace}

\title{Explainable Token-level Noise Filtering for LLM Fine-tuning Datasets}

\author[*,1,2]{\textbf{Yuchen Yang}}
\author[*,3]{\textbf{Wenze Lin}}
\author[1,2]{\textbf{Enhao Huang}}
\author[\Letter,1,2]{\textbf{Zhixuan Chu}} 
\author[4]{\textbf{Hongbin Zhou}}
\author[4]{\textbf{Lan Tao}}
\author[5]{\textbf{Yiming Li}}
\author[1,2]{\textbf{Zhan Qin}}
\author[1,2]{\textbf{Kui Ren}}
\vspace{0.3em}

\affil[1]{The State Key Laboratory of Blockchain and Data Security, Zhejiang University}
\affil[2]{Hangzhou HighTech Zone (Binjiang) Blockchain and Data Security Research Institute, China}

\affil[ ]{\textsuperscript{3}Qiuzhen College, Tsinghua University \quad \textsuperscript{4}Alibaba Group \quad \textsuperscript{5}Nanyang Technological University}

\date{\vspace{-5ex}} % 隐藏日期
\date{\vspace{-5ex} % 隐藏日期并缩减上方间距
    \vbox{\vspace{2ex}
    \small \textsuperscript{*}Equal contribution \quad \textsuperscript{\Letter}Corresponding author: zhixuanchu@zju.edu.cn}
}
\iclrfinalcopy % Uncomment for camera-ready version, but NOT for submission.

\fancypagestyle{plain}{\pagestyle{fancy}}
\begin{document}

% \author{
%   Yuchen Yang\textsuperscript{1}, 
%   Wenze Lin\textsuperscript{2}, 
%   Enhao Huang\textsuperscript{1}, 
%   Zhixuan Chu\textsuperscript{1, $\dagger$}, % 注意这里加了 $ 符号
%   Hongbin Zhou\textsuperscript{3}, 
%   Lan Tao\textsuperscript{3}, 
%   Yiming Li\textsuperscript{1}, 
%   Zhan Qin\textsuperscript{1}, 
%   Kui Ren\textsuperscript{1} \\
%   \\
%   \textsuperscript{1}Zhejiang University \quad
%   \textsuperscript{2}Tsinghua University \quad
%   \textsuperscript{3}Alibaba Group \\
%   \\
%   \texttt{\textsuperscript{$\dagger$}zhixuanchu@zju.edu.cn} % 这里也加了 $ 符号
% }

% 临时关闭脚注超链接
% \blfootnote{%
% \begin{NoHyper}
% \textsuperscript{*}Equal contribution\quad 
% \textsuperscript{\Letter} Corresponding Author  \quad 
% \textit{Correspond to: \url{zhixuanchu@zju.edu.cn}}
% \end{NoHyper}
% }

\maketitle
\vspace{-35pt}
\begin{center}
\small % 小字号更美观
\textsuperscript{*} Equal contribution \quad \textsuperscript{\Letter} Corresponding to: \texttt{zhixuanchu@zju.edu.cn}
\end{center}
\vspace{20pt}
\begin{abstract}
Large Language Models (LLMs) have seen remarkable advancements, achieving state-of-the-art results in diverse applications. Fine-tuning, an important step for adapting LLMs to specific downstream tasks, typically involves further training on corresponding datasets. However, a fundamental discrepancy exists between current fine-tuning datasets and the token-level optimization mechanism of LLMs: most datasets are designed at the sentence-level, which introduces token-level noise, causing negative influence to final performance. In this paper, we propose \Name, an \emph{explainable token-level noise filtering} framework. \Name decomposes the complex and subtle contributions of token-level data to the fine-tuning process into three distinct and explicit attributes (\emph{reasoning importance}, \emph{knowledge novelty}, and \emph{task relevance}), which can be assessed using scoring methods, and then masks the gradients of selected noisy tokens accordingly to optimize the performance of fine-tuned LLMs. We conduct extensive experiments on three representative downstream tasks (math, code and medicine) across 7 mainstream LLMs. The results demonstrate that \Name can significantly improve downstream performance by up to 13.7\% compared to regular fine-tuning. Our work highlights the importance of token-level dataset optimization, and demonstrates the potential of strategies based on attribute decomposition for explaining complex training mechanisms.
\end{abstract}

\section{Introduction}
\label{sec:introduction}
% %大模型技术在近期发展迅速，具有强大推理能力，能够广泛地应用于各种下游任务。同时，为了让大模型在具体的下游任务中具有更好的表现，开发者们会使用相应的数据集对模型进行训练，小幅度的训练改变模型参数。这种技术被称为微调，是现实应用中非常常用的技术。
LLM technology \citep{an2024makeLLM,nam2024usingLLM,kambhampati2024positionLLM} has developed rapidly in recent years, possessing powerful reasoning capabilities and enabling widespread application across various downstream tasks \citep{thirunavukarasu2023ai4medicine,wen2024ai4edu,guo2024aiagent}. Meanwhile, to enhance the performance of LLMs in specific downstream tasks, developers usually train the base models, \ie, general purpose LLM such as Llama \citep{touvron2023llama} and Deepseek \citep{guo2025deepseek}, using relevant datasets, making adjustments to the model parameters. This technique, known as fine-tuning, is widely used in practical applications \citep{wang2022no-code-llm-finetuning,lin2024data-efficient-fine-tuning-recommendation}.

% %然而，现有微调数据集无法完全匹配LLM微调优化过程。与LLM逐个token计算loss值并进行参数优化不同，现有大部分微调数据集都是基于完整的sentence去设计目标输出。这样的不同会导致训练收敛的方向出现偏差。具体来说，输出label中部分token的预测与提升模型的微调效果并没有关系，但是同样在微调过程中被重视，成为优化的噪音。
However, existing fine-tuning datasets do not align fully with the token-by-token optimization process of LLMs. While LLM fine-tuning involves computing a loss at each token and updating model parameters accordingly, most fine-tuning datasets are designed at the sentence level, providing label sentences as the target output. Since not all tokens (in the label sentence) are valuable for performance improvement \citep{lin2024not-all-tokens,peng2023token-self-evolution}, training in the entire label sentence can possibly introduce token-level noise and misguide the direction of convergence, ultimately reducing performance of fine-tuned LLMs in the target downstream task. 

% %现有研究无法针对LLM微调任务对数据集进行token-level的优化。主流数据优化方法可分为两类：一类是数据过滤；一类是数据扩充。这两类方法都是在样本级别上进行的，无法去除token_level的噪声。一些现有的工作分别从预训练、人类偏好优化、知识蒸馏等角度探究token-level数据与sentence-level数据的差异。然而，这些工作受限于特殊的应用场景（如预训练过程、DPO框架）或不探究token之间的价值差异（信息升维、知识蒸馏），无法适配微调数据集优化任务。
Current research lacks the capability to optimize datasets at the token level for LLM fine-tuning tasks. Mainstream data optimization methods fall into two categories: data filtering \citep{li2024superfiltering,goyal2024scalingDF} and data augmentation \citep{dai2025auggpt,ding2024dataDA}. All of these approaches operate at the sample level and thus fail to further eliminate token-level noise. Some existing studies have explored the differences between token-level and sentence-level data from various perspectives, such as pretraining \citep{lin2024not-all-tokens}, human preference optimization \citep{zeng2024tokenTDPO,yoon2024tlcrtoken-level}, and knowledge distillation \citep{wei2024sentence-level-or,cui2025multitoken-level-knowledge}. However, these works are often limited to specific scenarios (\eg, pretraining with lower data quality requirements or direct preference optimization (DPO) \citep{rafailov2023directDPO} that relies on prior knowledge of labeled text pairs) or do not sufficiently investigate the value differences between tokens (\eg, as in some knowledge distillation approaches \citep{peng2023token-self-evolution,cui2025multitoken-level-knowledge}), rendering them unsuitable for fine-tuning dataset optimization.

%想要实现token-level的数据集优化，就需要过滤掉对微调效果没有帮助的tokens，which is a non-trivial task。首先，并没有任何研究揭示单个token和微调效果之间的关系。尽管某些可解释性研究能够指出推理过程中token in input content与生成正确输出ground truth之间的关系，它们无法应用于微调过程也无法解释label sentence中的token价值。其次，微调任务的效果取决于预训练模型已有的知识和目标任务。因此，在筛选有帮助token时，必须同时考虑预训练模型自身对数据的主观认知和数据与任务之间的客观相关性。这种差异要求数据集优化方法能够综合地考虑微调任务的需求，而不是仅局限于单一的评估标准。

Achieving token-level dataset optimization for fine-tuning necessitates filtering out tokens in output labels that do not contribute to final performance, which is a non-trivial task. Firstly, no existing research clearly elucidates the relationship between individual tokens within these labels and fine-tuning effectiveness. Although some explainability studies can identify connections between tokens in the input content and the correct generation of label sentences during the reasoning process \citep{chu2024causalXAI,zhao2024explainabilityXAIsurvey}, they cannot explain the value of specific tokens in label sentences for fine-tuning tasks. Secondly, fine-tuning performance depends on both the base model's pre-existing knowledge and the specifics of the target task. When filtering noisy tokens, it is essential to consider both the base model's understanding of the data and that data's relevance to the downstream task \citep{liu2022few,han2024peftsurvey}. Therefore, filtering noisy tokens from fine-tuning datasets requires a comprehensive consideration of fine-tuning task requirements, rather than reliance on a single assessment criterion.

% %motivated by discussion above，我们提出了一种评估token-level数据在相应任务下的微调价值，并筛选其中无用噪声数据的方法（\Name）。我们的方法可以分为三个phases。第一个phase中，我们提出了三个对于微调性能必要的foctors：推理重要性、认知偏移性和任务相关性，并解释其原理。第二个phase中，我们针对提出的三个factors，基于预训练模型和微调任务内容设计了token-level数据价值评估方案。具体来说，对于1）推理重要性：我们拼接input和label output并计算attention score，越高的attention score代表了越高的推理重要性；2）认知偏移性：我们用PPL的值来度量LLM在微调数据集上的认知偏移，越高的PPL代表了越高的认知偏移性；3）任务相关性：我们基于无上下文输入在base model上的嵌入层特征来近似评估任务的相关性。我们通过几个专家词的平均向量表示来近似任务domain，然后计算数据集中的token向量与该平均向量的距离。之后，基于距离数值的统计学分布过滤掉无关的tokens。第三个phase中，我们利用三种factors采用不同的策略来筛选对微调数据集有帮助的tokens，并将其余tokens在label中屏蔽。
Motivated by the preceding discussion, we propose an \textbf{e}xplainable \textbf{t}oken-level data \textbf{f}iltering method \Name. This method aims to assess the value of token-level data within fine-tuning datasets and filter out noisy tokens by considering the specific characteristics of LLM fine-tuning. \Name consists of three phases. In the first phase, we decompose the contribution of data to the fine-tuning effect into three attributes (reasoning importance, knowledge novelty, and task relevance) to reduce the complexity of token value analysis. Concurrently, we define the criteria for identifying noisy tokens based on these attributes and provide theoretical justification in Appendix~\ref{sec:theory}. In the second phase, while considering the two key factors essential to fine-tuning: base model and task dataset, we design scoring mechanisms for the three attributes with controllable computational costs: \textbf{1) Reasoning Importance:} We concatenate the input and label output, then compute attention scores for each token using the base model. A lower attention score indicates a lower reasoning importance. \textbf{2) Knowledge Novelty:} We introduce the \textbf{p}robability of \textbf{c}orrect token \textbf{p}rediction (PCP) to quantify the novelty of knowledge learned from the fine-tuning dataset. A higher PCP indicates lower knowledge novelty. \textbf{3) Task Relevance:} We assess task relevance using embedding vectors generated by the base model for context-free inputs. The task domain is approximated by the average embedding of data samples, and token relevance scores are determined by their distance from the domain center. A larger distance implies lower task relevance. We present detailed scoring procedures in Section~\ref{sec:how to assess tokens}. In the third phase, we identify noisy tokens based on the statistical results and mask the gradients associated with these tokens during training to enhance the performance of the fine-tuned LLM. We adopt a conservative strategy to ensure the filtered tokens align with the criteria established in the first phase.

\begin{figure}[t]
  \begin{subfigure}{0.48\linewidth}
    \includegraphics[width=\linewidth]{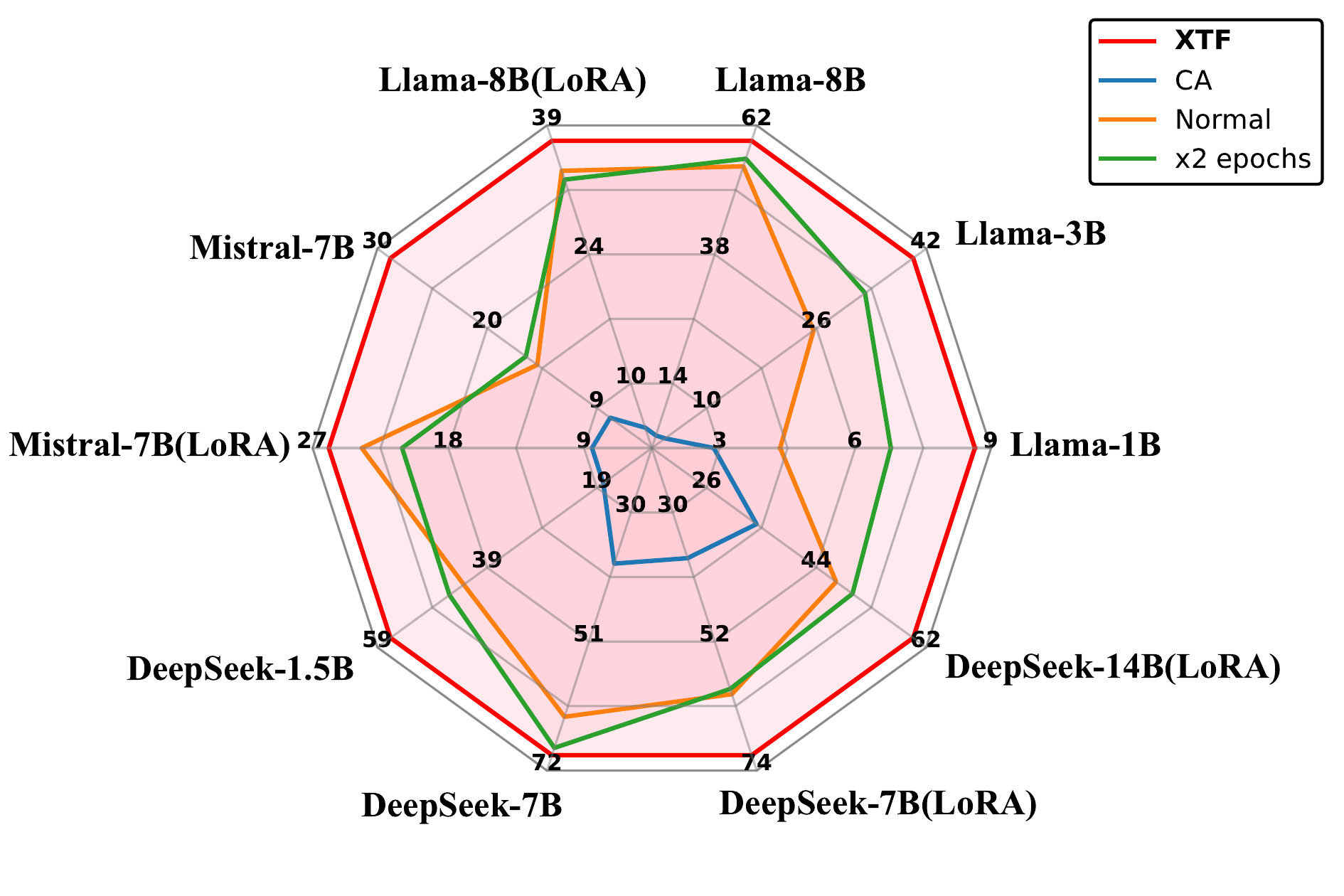}
    \caption{Math task}
  \end{subfigure}
  \hfill
  \begin{subfigure}{0.48\linewidth}
    \includegraphics[width=\linewidth]{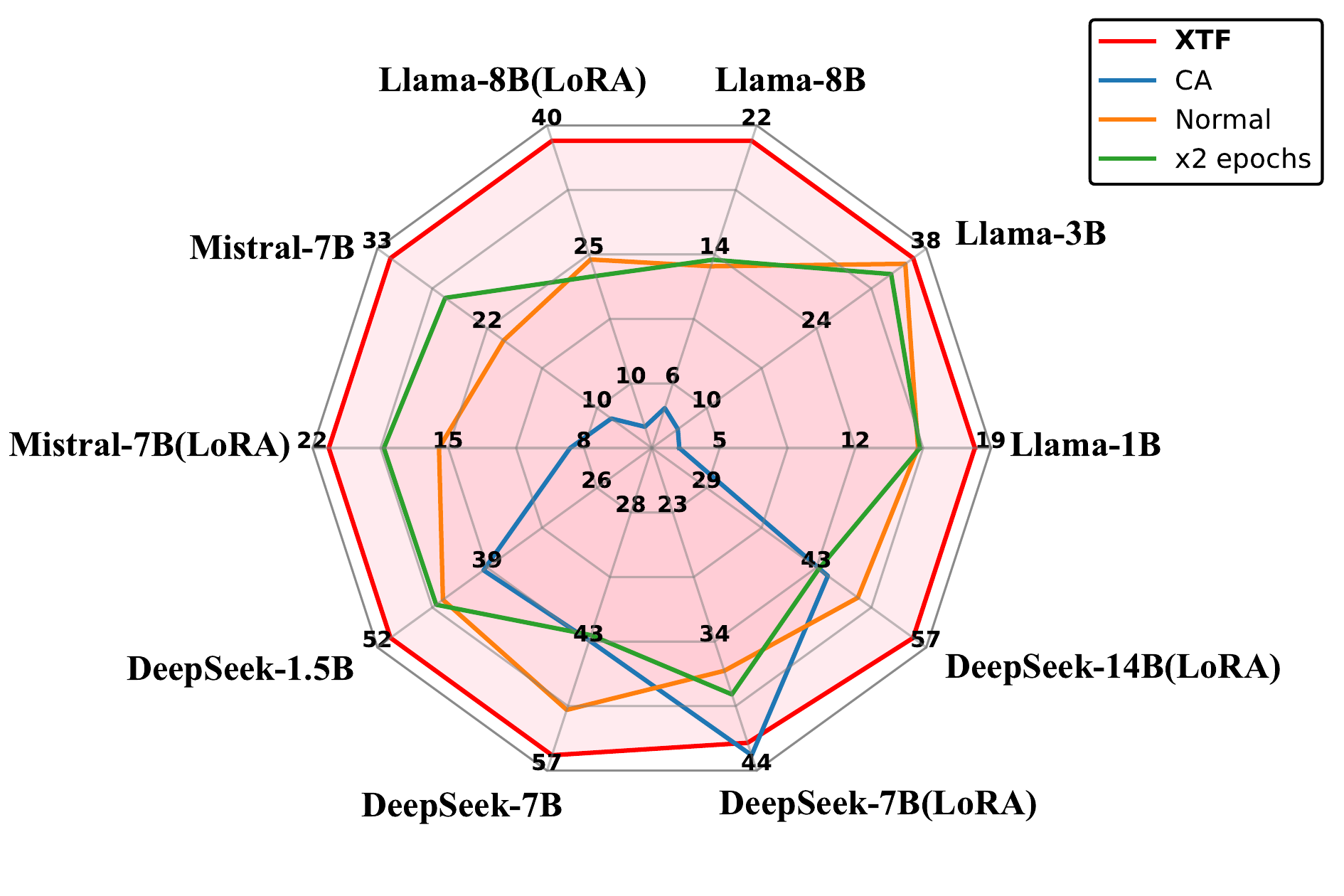}
    \caption{Medicine task}
  \end{subfigure}
  \caption{The accuracy performance of our method on different LLMs. The results show that our method can significantly improve the final performance of fine-tuned LLMs across almost every case.}
  \label{fig:fine-tuning optimization on different LLMs.}
\end{figure}

% %我们方法首次在token-level级别上探究了训练数据对微调效果的影响。我们从推理重要性、认知偏移性和任务相关性三个维度探究了过滤数据中token-level噪声的方案，并提出了\Name。我们在x种数据集以及y种大模型上进行了广泛的实验，证明\Name能够有效提高微调的效果（最高达10%的精度提升）。在code generation生成中，我们的方法分别在pass@1，pass@5和pass@10上取得了最高5.6\%，5.6%和6.3%的性能优化，证明\Name在涉及多轮生成的复杂任务中同样有效。
We conduct extensive experiments on datasets across 3 different downstream tasks and 7 representative LLMs. As shown in Figure~\ref{fig:fine-tuning optimization on different LLMs.}, \Name can achieve up to 13.3\% and 13.7\% accuracy optimization on math task and medicine task, respectively, demonstrating its effectiveness on noise filtering and fine-tuning enhancement. In code generation, our method achieves performance optimization of up to 5.6\%, 5.6\%, and 6.3\% on pass@1, pass@5, and pass@10, respectively, demonstrating that \Name is also effective in complex tasks involving multi-chance generation.

Our main contributions are three-fold:
\begin{itemize}[leftmargin=*,noitemsep,topsep=0pt,parsep=0pt,partopsep=0pt]

\item We reveal the research gap of token-level data optimization for LLM fine-tuning. 

\item We explore solutions for filtering token-level noise in fine-tuning datasets via three decomposed attributes: \emph{reasoning importance}, \emph{knowledge novelty}, and \emph{task relevance}, proposing \Name.

\item We conduct extensive experiments of \Name across multiple representative LLMs and downstream tasks, verifying its superior performance on fine-tuning optimization, demonstrates the potential of strategies based on attribute decomposition for explaining complex training mechanisms.
\end{itemize}

\section{Background and Related Work}
\label{sec:background and related work}

\subsection{Large Language Models Fine-tuning}
%大模型微调是被广泛用于提升大模型下游任务性能的重要技术。存在许多基于高性能base model（如：llama, deepseek）微调的task-oriented model，例如llama-math \citep{}、llama-finance \citep{}。微调数据的质量是决定微调效果好坏的重要因素\red{citep}，现有研究表明少量高质量数据的重要性要远高于大量低质量数据\red{citep}。因此，提高数据质量对于增强微调效果非常重要。
Fine-tuning large models is widely recognized as a key technique to enhance their performance on downstream tasks \citep{wang2022no-code-llm-finetuning,han2024peftsurvey,lin2024data-efficient-fine-tuning-recommendation}. Numerous task-oriented models have been developed by fine-tuning high-performance base models, such as Llama-Math \citep{llama-math} and Llama-Finance \citep{llama-finance}. The quality of the fine-tuning dataset is a critical factor that determines the effectiveness of this process \citep{zhou2023lima,kuramoto2025predicting-finetune-performance}. Therefore, optimizing data is essential for enhancing fine-tuning outcomes.

\subsection{Explainability for LLMs}
%可解释性方法试图解释人工智能模型的决策机制和行为逻辑。现有大模型领域的可解释方法可以分为局部可解释\red{citep}，全局可解释\red{citep}以及xxxx几类。然而，现有的可解释技术关注推理过程，缺少解释数据集label输出与微调训练效果的研究。
Explainability methods for LLMs aim to explain the decision-making mechanisms and behavioral logic of these models \citep{zhao2024explainabilityXAIsurvey,chu2024causalXAI}. However, current techniques predominantly focus on the reasoning process \citep{huang2023canXAI0,kang2024quantitativeXAI1,singh2024rethinkingXAI2}, while research of explaining the relationship between tokens and fine-tuning outcomes remains limited.

\subsection{Token-level Training}
%已经有一些先驱工作研究了精细到token-level数据的训练。这些研究可分为三个领域：数据蒸馏学习、模型直接训练与人类偏好训练。数据蒸馏学习关注学生模型学习教师模型的token-level输出（\ie,logits分布）与sentence-level输出时的性能差异，但不关注作为训练数据的tokens之间的差异；模型直接训练的研究,包括较小transformer模型训练与大模型预训练,利用损失函数来指导有价值token的选择。我们发现这些方法有潜在的假设：存在高质量数据集能够正确引导token选择。这样的假设在微调任务中是难以成立的，细分垂域（\eg, 医疗、教育）往往缺乏数量充足的高质量训练数据。除此以外，没有任何研究表明GSM8K、Humaneval等高质量数据集上不存在token-level噪声；人类偏好训练的研究关注了DPO这一优化框架并构建了一套基于token-level的特殊奖励模型，然而该方法很难迁移到更广泛的应用场景中。
Several pioneering works have explored token-level data refinement in training, broadly categorized into three domains: data distillation, direct model training, and human preference training. \textbf{Data distillation} approaches \citep{wei2024sentence-level-or,cui2025multitoken-level-knowledge,liu2022multiknowledge} primarily focus on the performance differences observed when a student model learns from token-level outputs (i.e., logits) versus sentence-level outputs, rather than specifically differentiating the value among individual tokens. \textbf{Direct model training}, including studies on training small-scale transformer models \citep{peng2023token-self-evolution} and LLM pretraining \citep{lin2024not-all-tokens}, utilizes changes in loss values to guide the selection of valuable tokens. We observe that these methods operate under an implicit assumption: that high-quality datasets are entirely noise-free and thus capable of correctly guiding token selection. This assumption rarely holds true in fine-tuning tasks, as base models often already exhibit strong performance, making it challenging to optimize dataset quality sufficiently to satisfy this noise-free criterion. Furthermore, existing works \citep{peng2023token-self-evolution,lin2024not-all-tokens} do not demonstrate that high-quality datasets are, in fact, approximately devoid of noisy tokens, which is a critical premise for these methods. \textbf{Human preference training} often involves specific optimization frameworks that rely on prior knowledge of labeled text pairs \citep{zeng2024tokenTDPO,xia2024inversetoken-level-prefer,yoon2024tlcrtoken-level} and the construction of specialized token-level reward models; however, this approach can be challenging to generalize to broader application scenarios.

\begin{figure*}[t]
    \centering % 
    \includegraphics[width=\textwidth]{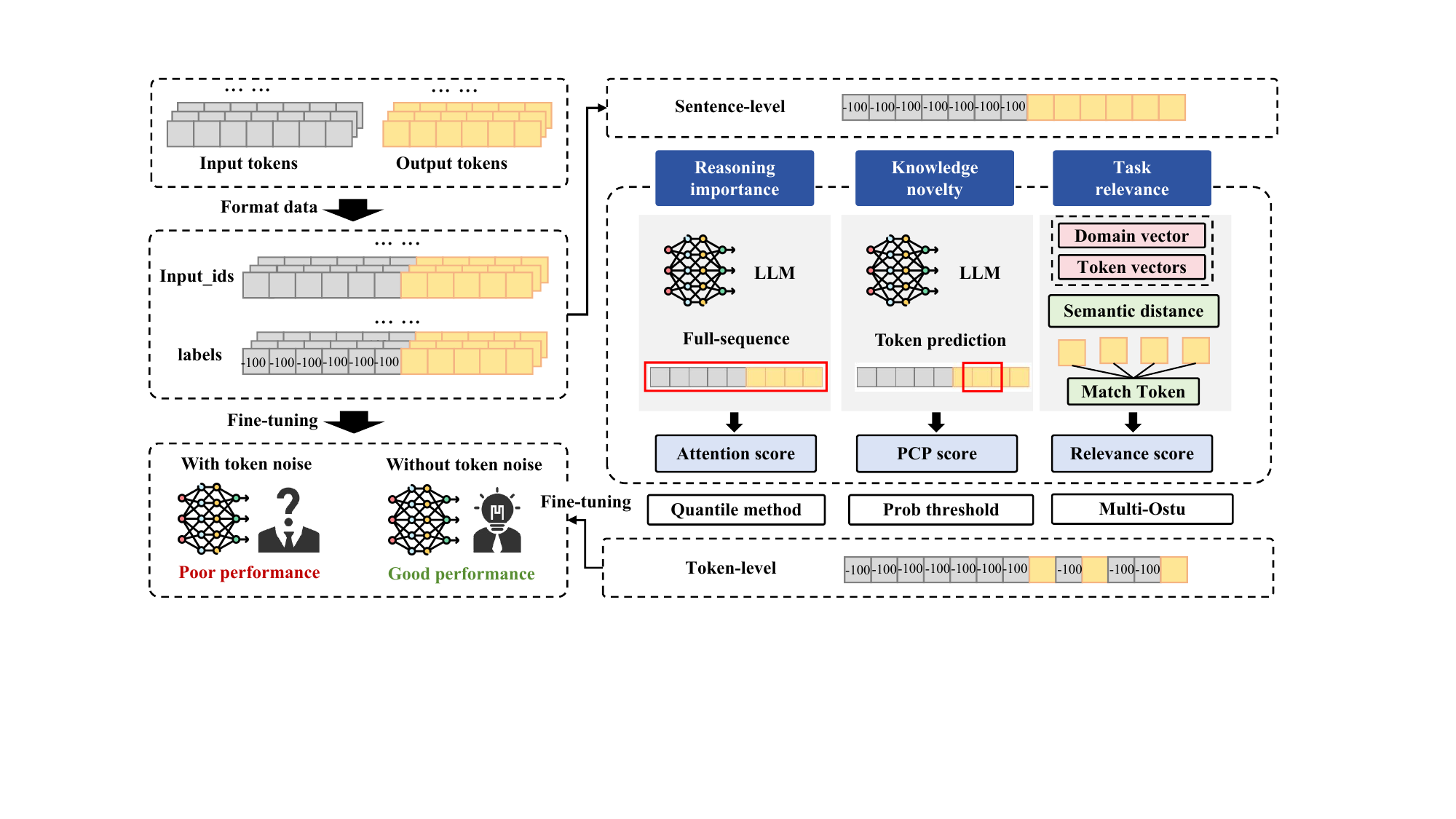} % 
    \caption{The pipeline of token-level data filtering comprises three steps. In the first step, we preprocess the dataset based on a regular format function. In the second phase, we get the sentence-level data item and assess three scores, \ie, attention score, PCP score and relevance score, for the tokens of output label, suggesting the selection of noisy tokens. In the third phase, we mask the noisy tokens and fine-tune the target LLM.} % 
    \label{fig:pipeline} % 
    \vspace{-1em}
\end{figure*}

\section{Methodology}
\label{sec:methodology}
%这一节我们将介绍\Name方法的理论基础以及实现方法。
%In this section we will introduce the theoretical basis and implementation method of our \Name.

\subsection{Which Data Acts as Noise for Fine-tuning?}
\label{sec:which data act as noise for fine-tuning?}
%In this phase, 我们从微调任务本身特性出发，来定义噪声信息的特点。微调过程可以看作高性能基座模型和任务数据集的对齐。因此，我们需要让基座模型在保留推理能力的基础上，学习数据集提供的差异化信息。同时，由数据集应当尽可能避免引入和任务无关的信息，干扰训练过程。基于上述基本逻辑，我们定义了三个评估数据集中信息价值的维度。
%--推理重要性：评估信息对于基座模型而言，在构成上下文逻辑上是否重要。这些信息对于保留基座模型推理能力具有重要价值。
%--知识偏移性：评估信息对于基座模型而言，是否包含新的知识。已有的常识性知识不必要重复学习。
%--任务相关性：评估信息对于任务目标而言，是否相关。相关性差的信息对于微调任务而言价值较低。
%对微调任务有价值的信息应当同时满足以上三种属性。例如，如果信息具备较高的推理重要性和认知偏移性，但不具备任务相关性，那么该信息显然对于微调是无益的。可以说，不具备以上三种属性任意一条的信息，都是微调任务中的噪声。其形式化表示如下：Inf_{noise} = （Inf_(RI↓)）并集（Inf_(KN↓)）并集（Inf_(TR↓)）,其中Inf_(RI↓),（Inf_(KN↓)）和（Inf_(TR↓)）分别代表不具备推理重要性、知识偏移性和任务相关性的信息。我们在附录中通过形式化定义和数学推导证明了在大模型微调过程中利用这三条性质筛选噪声信息的逻辑。
Due to scale differences, it is inherently difficult to intuitively assess the impact of token-level data on the final fine-tuning outcome. Therefore, we attempt to explain the contribution of token data to fine-tuning from three attributions through theoretical analysis. 

A fine-tuning process can be conceptualized as an alignment between a high-performance base model and a task-specific dataset. Consequently, the performance of the fine-tuned model should be influenced by three factors: the cognition of the base model, the knowledge in the task dataset, and the contradiction between the base model and the task dataset. When we aim to mask a token from the label sentence, we can assess its potential impact on the fine-tuning result from these three perspectives. Specifically, we propose three attributes that positively influence the fine-tuning process: for the cognition of the base model, we extract \textbf{reasoning importance}; for the discrepancy between the base model and the task dataset, we extract \textbf{knowledge novelty}; and for the knowledge in the task dataset, we extract \textbf{task relevance}. These attributes represent the following meanings:

\vspace{0.2em}
\textbf{Reasoning Importance (RI)}: whether the presence or absence of this token significantly affects the base model's inference results;

\vspace{0.2em}
\textbf{Knowledge Novelty (KN)}: whether the presence of this token is novel to the base model;

\vspace{0.2em}
\textbf{Task Relevance (TR)}: whether the presence of this token is related to the objective of the task dataset.

It is not feasible to consider all three properties simultaneously and assign a composite score, as there is no clear basis to determine their interrelationship or any hierarchical order. However, we can still use these three properties to identify which tokens are noise. Specifically, we find that if a token completely lacks any of the three attributes, it can be considered as noise. This is intuitive; for example, if a token is entirely unrelated to the task objective (lacking the TR attribute), then even if it may influence the base model's subsequent generation results (RI) or represent a prediction that the base model has not yet learned (KN), it does not contribute to the fine-tuning task. We elaborate on the analysis of the three derived attributes and formally prove the correctness of this judgment through logical reasoning in the Appendix~\ref{sec:theory}.

Formally, the token-level noise in the fine-tuning tasks can be represented as:
\begin{equation}
D_{\text{noise}} = (D_{RI\downarrow}) \cup (D_{KN\downarrow}) \cup (D_{TR}\downarrow)
\end{equation}
where $(D_{RI\downarrow})$, $(D_{KN\downarrow})$ and $(D_{TR\downarrow})$ respectively represent data lacking reasoning importance, knowledge novelty, and task relevance.

\subsection{How to assess Tokens?}
\label{sec:how to assess tokens}
%In this phase, 我们基于对信息价值的三个定义，提出了一套可解释的token评分体系。之后，我们基于三种评分的分布特征来确定其阈值并判断噪声tokens。
After obtaining the three attributes and defining noise data, we employ a parameterized scoring mechanism to separately assess the three attributes and to identify noise tokens in the dataset. In alignment with real-world requirements, the choice of scoring method is expected to meet two requirements: \textbf{1) Controllable computational cost:} Since the dataset contains a vast number of tokens, the assessment method should not impose excessive computational overhead. \textbf{2) Joint consideration of the base model and the data}: The assessment of the three attributes must not be separated from the essential elements of the fine-tuning task itself. Consequently, we adopt three reasoning-level explainability methods to assess the three attributes separately.

\partitle{Attention Score for RI}
%我们使用三种可解释的方法来对tokens进行评分。具体来说，我们使用基座模型的attention score来评估推理重要性。我们将整个数据（包括input和output）输入模型，并计算注意力分数。这种方案考虑了基座模型在整个答案生成逻辑中对数据的认知\red{citep}；对于知识新颖性，我们采用了基座模型在预测token时的ppl来评估。ppl越高，说明模型对于预测下一个token的置信度越小\red{citep}，因此该token更有可能蕴含新颖的知识；我们用基于模型嵌入层的语义距离来计算相关性分数。我们首先用一些专家词汇定义了下游任务domain(\eg, ["math","number","logic"] for GSM8K)，并在无上下文的情况下计算这些词在base model上的平均嵌入层向量。然后，我们统计整个数据集中出现的tokens，并分别计算其无上下文时的嵌入层向量。最后我们计算了这些token向量到domain向量之间的距离，作为相关性分数。三种分数的具体计算公式如下：
We employ the base model's attention scores to assess reasoning importance. Attention is a mechanism that adaptively learns the contextual importance of tokens during the pretraining of the base model \citep{vaswani2017attention}. Existing work has demonstrated that masking low-attention tokens during the LLM reasoning process can even enhance generation quality \citep{gupta2021memorytop-k}. Therefore, using attention scores to assess reasoning importance aligns with our needs. We input the entire text (including both the input sentence and the output label sentence) into the model and compute the attention scores. We formulate the reasoning importance score $\mathcal{S}_{\mathrm{RI}}$ for the $k$-th token in the output label sentence as:
\begin{equation}
    \mathcal{S}_{\mathrm{RI}}(O_k) = \mathcal{A}\left(\theta, I + O\right)\left[l_I + k\right],
\end{equation}
where $\theta$ denotes the parameters of base model, $\mathcal{A}$ is the function to compute attention value, $I$, $O$ represents the input tokens and output label respectively, and $l_I$ is the length of input tokens. This approach can be applied to explain the tokens in the output label and considers the base model's understanding of the data throughout the answer generation logic. 

\partitle{PCP Score for KN} For knowledge novelty, we adopt the base model's \textbf{p}robability of \textbf{c}orrect token \textbf{p}rediction (PCP). Intuitively, a lower probability indicates reduced confidence in predicting the token, suggesting that it is more likely to contain knowledge that the model has not yet acquired. We formulate the knowledge novelty score $\mathcal{S}_{\mathrm{KN}}$ as:
\begin{equation}
    \mathcal{S}_{\mathrm{KN}}(O_k) = 1 - P\left(O_k \mid I + \left[O_0, O_1, \ldots, O_{k-1}\right]\right).
\end{equation}

\partitle{Distance Score for TR}
For relevance scoring, we calculate semantic distance based on the base model's embedding layer. First, we feed each data item from the dataset into the base model in its entirety and obtain the average value of the embedding layer vectors as the domain vector for the fine-tuning task. Then, we collect all tokens appearing in the dataset and compute their context-free embedding vectors. Finally, we measure the distance between these token vectors and the domain vector, using this distance as the relevance score. The specific formulas for the task relevance scores are as follows:
\begin{equation}
\begin{aligned}
    \mathcal{V}(\mathrm{Domain}&) = \frac{\sum \left(\mathcal{E}(\theta, exp_w)\right)}{n_{w}}, \\
    \mathcal{S}_{\mathrm{TR}}(O_k) = 1 - &\mathrm{Normalize}\left(\mathcal{D}\left(\mathcal{E}(O_k), \mathcal{V}(\mathrm{Domain})\right)\right),
\end{aligned}
\end{equation}
where $\mathcal{E}$ denotes the function to get the embedding vector of inputs, $exp_w$ and $n_w$ represents the expert words and its number, respectively. $\mathcal{D}$ is the function to compute the distance of two vector.

%S_RI(O_k) = A(\theta, I + O)[l_I + k]
%S_KN(O_k) = 1 - p(O_k | I + [O_0, O_1, ..., O_k-1])
%E(Domain) = （求和(E(\theta, words))) / words num
%S_TR(O_k) = 1 - Normalize(D(O_k, Domain))

\subsection{How to Fine-tune based on Scores?}
\label{sec:how to fine-tuning}
%在获得每个维度上的评分之后，我们根据评分数值的分布特征采取不同的筛选方案。如图所示，reasoning importance的分布较为极端，许多分数具有相同的大小，并且在正则化之后差别极大。我们直接采用分位数方法筛选掉极端小分数的tokens。knowledge novelty分数的分布较为均匀。Intuitively，我们认为probs超过某一阈值gamma的token预测是不包含新知识的，并将其视为噪声。最后是task relevance分数，其分布呈现集群特征。我们采用Muti-Ostu\red{citep}方法来划分task relevance分数，并将较小的两个集群过滤掉（最小的集群往往是空格替代符号）。Muti-Ostu方法的数学表示如下：
In this section, we will describe how to correctly filter noisy tokens using a conservative strategy and ignore them during the fine-tuning process through gradient masking.

\partitle{Token Filtering}
After obtaining scores for each dimension, we need to filter the noisy tokens based on the scores. As shown in Figure~\ref{fig:score distribution}, the distributions of scores across different tasks share the same features, requiring us to design adaptive mechanism for each attributes. Specifically, the reasoning importance scores exhibit an extreme distribution where many scores share identical values and demonstrate significant differences after normalization. We directly apply the quantile method (Interquartile Range) to filter out tokens with extremely low scores. In mathematical expression:
\begin{equation}
\begin{aligned}
    Q_1, Q3 = &\text{ Quantile}(S_{RI}(O),[25,75]),\\
    IQR &= Q_1 - (Q_3 - Q_1),\\
    O_k \in (D_{RI\downarrow})& \text{ if } S_{RI}(O_k) < Q_1 - IQR,
\end{aligned}
\end{equation}
where $S_{RI}(O)$ means all the reasoning importance score of tokens in output label $O$. The knowledge novelty scores display a uniform distribution, which makes it difficult to distinguish low scores. Therefore, we adopt a heuristic threshold. We consider tokens with a PCP higher than 95\% as only containing knowledge without novelty and treat them as noise. In mathematical expression:
\begin{equation}
O_k \in (D_{KN\downarrow}) \text{ if } S_{KN}(O_k) < 0.05.
\end{equation}
Finally, the task relevance scores exhibit cluster characteristics, for which we employ the Multi-Otsu method \citep{liu2009otsu} to partition the scores. Since the cluster with the smallest mean value typically consists of space replacement symbols, we filter out the tokens in the cluster with the second smallest mean value. In mathematical expression:
\begin{equation}
O_k \in (D_{TR\downarrow}) \text{ if } S_{TR}(O_k) \in \mathcal{M}(S_{TR})^{2nd},
\end{equation}
where$ \mathcal{M}$ is the Multi-Otsu method and $\mathcal{M}(S_{TR})^{2nd}$ denotes the cluster with the second smallest mean value. The detail of Multi-Otsu method is shown in Appendix~\ref{sec:more details}.

\begin{figure}[t]
  \begin{subfigure}{0.32\linewidth}
    \includegraphics[width=\linewidth]{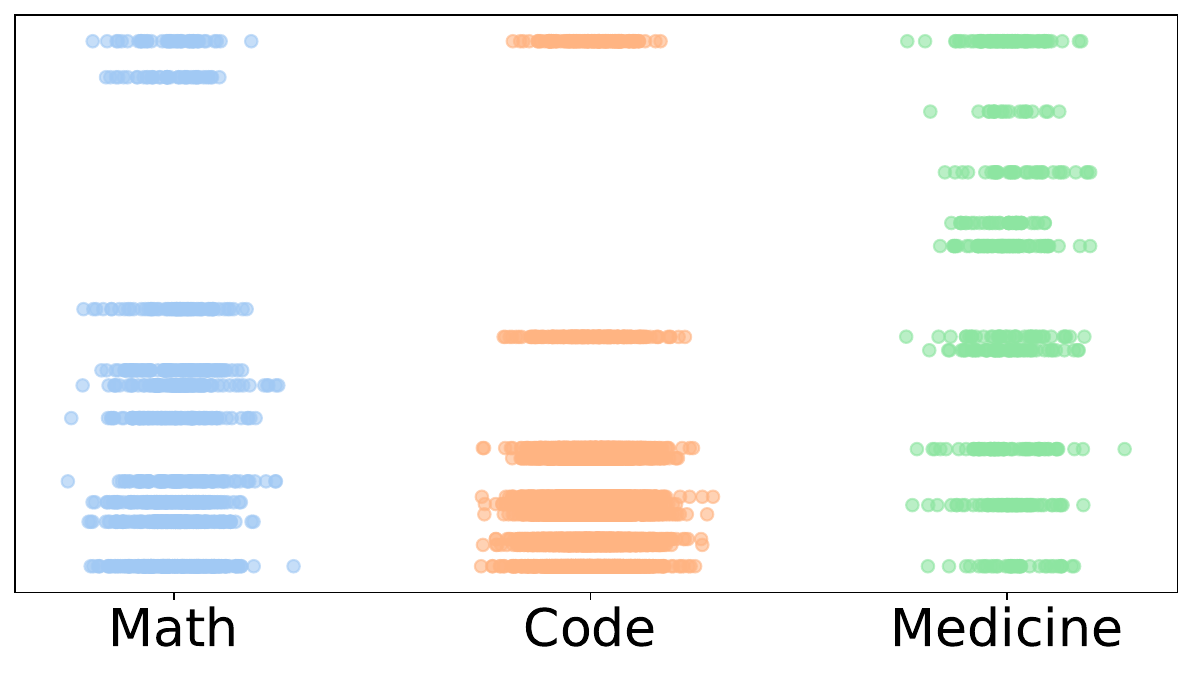}
    \caption{$\mathcal{S}_{\mathrm{RI}}$ on Deepseek}
  \end{subfigure}
  \hfill
  \begin{subfigure}{0.32\linewidth}
    \includegraphics[width=\linewidth]{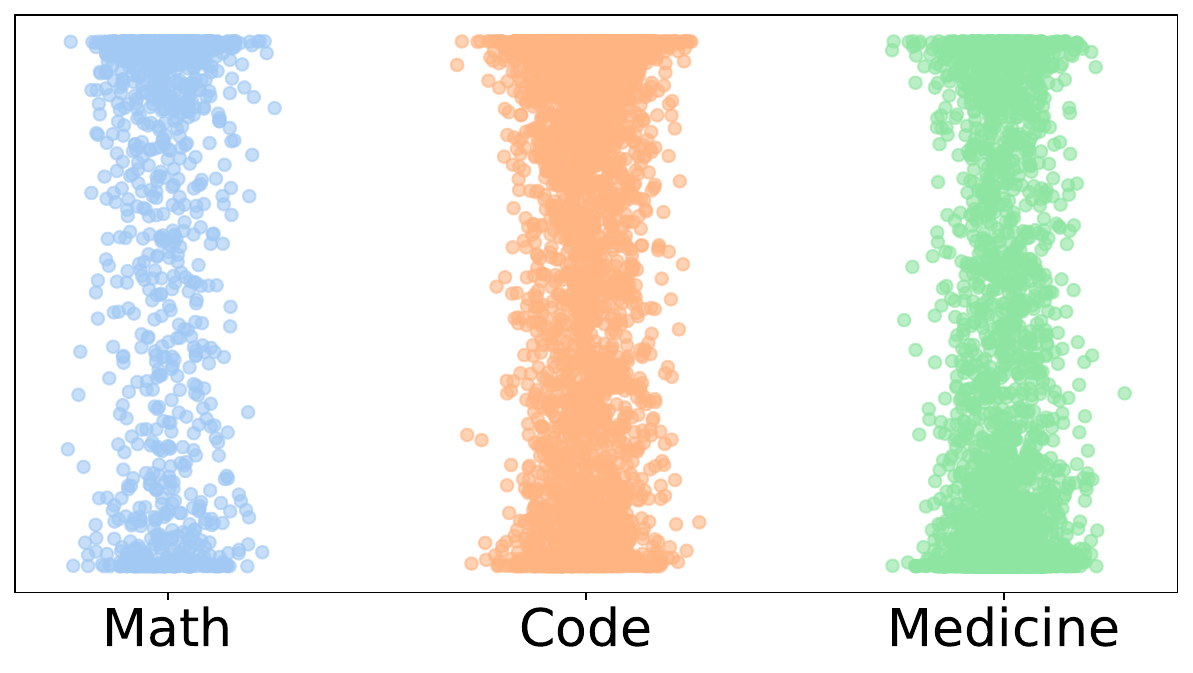}
    \caption{$\mathcal{S}_{\mathrm{KN}}$ on Deepseek}
  \end{subfigure}
  \hfill
  \begin{subfigure}{0.32\linewidth}
    \includegraphics[width=\linewidth]{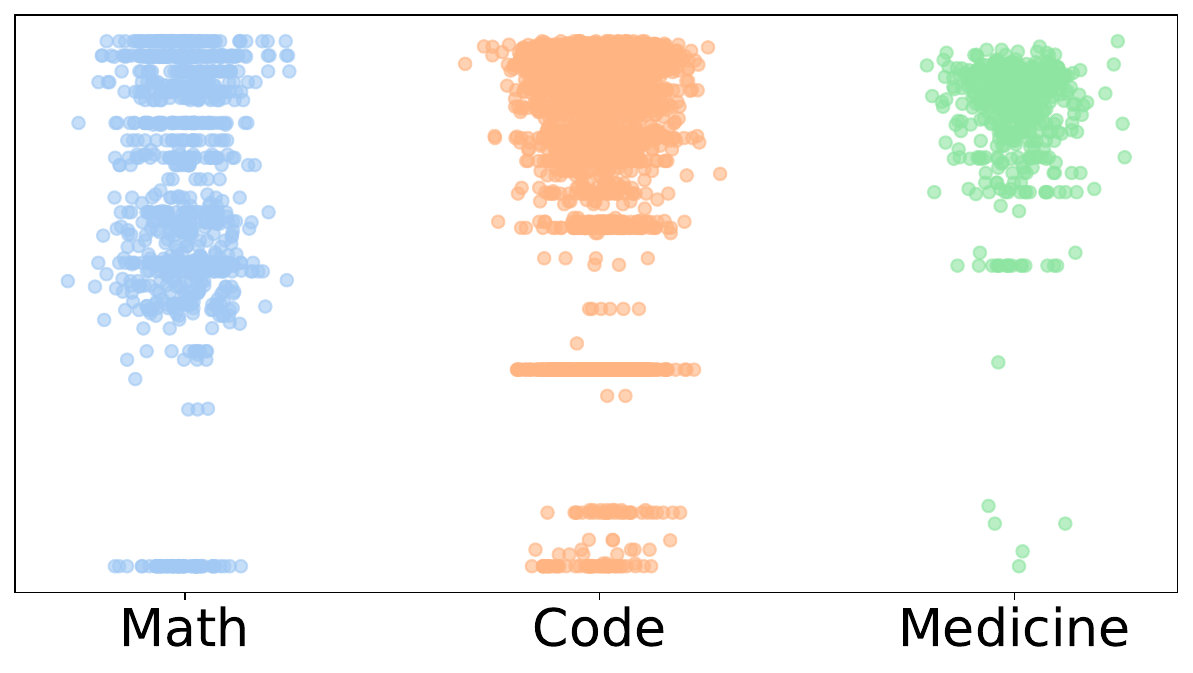}
    \caption{$\mathcal{S}_{\mathrm{TR}}$ on Deepseek}
  \end{subfigure}

  \caption{Distribution of the three scores across different datasets on Deepseek-1.5B. The reasoning importance score is distributed across some fixed values, the knowledge novelty score has a somewhat uniform distribution, and the task relevance score's distribution exhibits clustering features. More distribution figures of different LLMs are shown Appendix~\ref{sec:distribution figures}.}
  \label{fig:score distribution}
\end{figure}

\begin{figure}[t]
  \begin{subfigure}{0.32\linewidth}
    \includegraphics[width=\linewidth]{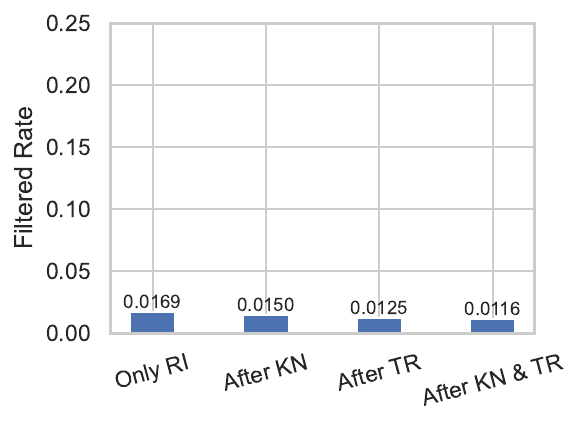}
    \caption{RI Filtering}
  \end{subfigure}
  \hfill
  \begin{subfigure}{0.32\linewidth}
    \includegraphics[width=\linewidth]{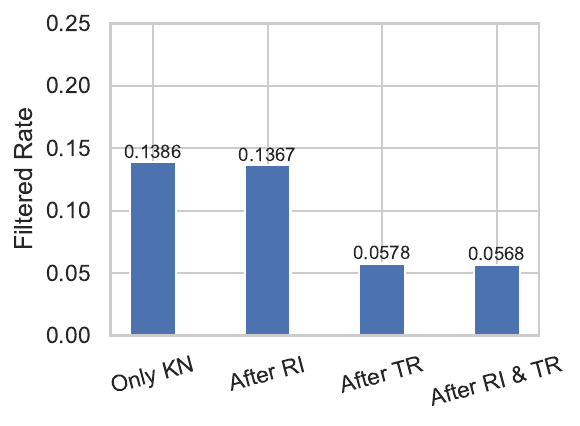}
    \caption{KN Filtering}
  \end{subfigure}
  \hfill
  \begin{subfigure}{0.32\linewidth}
    \includegraphics[width=\linewidth]{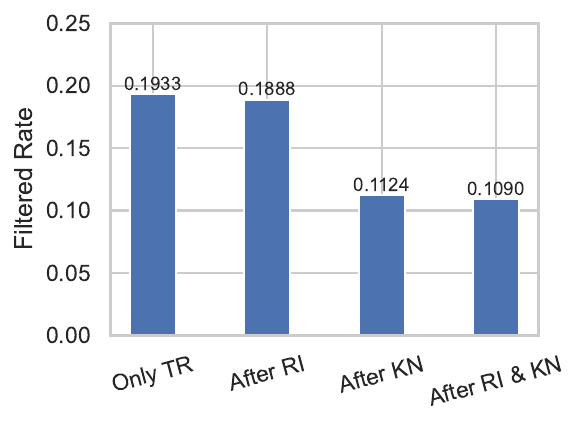}
    \caption{TR Filtering}
  \end{subfigure}

  \caption{Complementarity among attributes using Deepseek-1.5B and GSM8k. Using RI as an example, Only RI represents the percentage of tokens that can be filtered using only RI. After KN represents the percentage of tokens that RI can filter after KN is applied. The reduction in tokens when comparing After KN with Only RI indicates the tokens on which KN and RI overlap. }
  \label{fig:complementarity among attributes}
  \vspace{-1em}
\end{figure}

%XTF采用了一种激进的过滤策略，\ie，取三种过滤tokens的并集。支持这一策略的前提是，被过滤的token应当被认为“完全”不具备相应的attributes。因此，我们应当选择score显著低的tokens过滤，这也是为什么我们选择了保守的阈值。同时，XTF框架下，多个维度的过滤属性又能够完成互补，弥补了由宽松阈值带来的噪声疏漏。图xxx体现了属性间的互补性：不同属性过滤的tokens重复度非常低。这种策略基于一个简单但有用的启发式思维：一个视角下难以判断的模糊噪声，在另一个视角能够清晰分辨。当从多个视角共同判断噪声时，就能将一个复杂的问题转化为多个简单的问题。
\partitle{Threshold Analysis}
\Name adopts an aggressive filtering strategy, namely, it takes the union of three types of filtered tokens. The premise that supports this strategy is that the filtered tokens are assumed to be “completely” devoid of the corresponding attributes. Therefore, tokens with significantly low scores should be selected for filtering, which is also why a conservative threshold is chosen. Meanwhile, within the \Name framework, filtering attributes across multiple dimensions provide complementary effects, compensating for noise omissions introduced by the loose threshold. Figure~\ref{fig:complementarity among attributes} illustrates the complementarity among attributes: the overlap of tokens filtered by different attributes is not higher than 58.3\%. This strategy is based on a simple yet useful heuristic: ambiguous noise that is difficult to distinguish from one perspective can be clearly identified from another perspective. When noise is jointly evaluated from multiple perspectives, a complex problem is transformed into several simpler problems. To further explore the results of token filtering, we have provided specific examples of token filtering in Appendix~\ref{sec:examples}.

%本节我们将介绍如何通过tokens筛选的结果优化微调。As shown in Figure 1，在处理数据时，由于微调任务仅关心输出部分的正确性，因此需要通过一个默认的数值(往往是[-100])来标记输入部分的token并在传递梯度时将其屏蔽。在筛选出output label中的noisy tokens之后，我们将同样标记其为[-100]，并作为最终用来微调的数据。最终的损失函数可以表示为：
\partitle{Training the Model}
Here we describe how to optimize fine-tuning based on token filtering results. As shown in Figure~\ref{fig:pipeline}, when processing the data, because the fine-tuning task focuses solely on the correctness of the output, we mask the input tokens by assigning them a default value (often \textit{[-100]}) and thereby exclude them from gradient computation. After identifying noisy tokens in the output labels, we mark them with the this default value and use the resulting data for fine-tuning. Given the noisy token list $N$, the loss function $\mathcal{L}_F$ for learning a data item can be expressed as:
\begin{equation}
    \mathcal{L}_F = -\sum_{\substack{ O_k \notin N}} \log P\bigl(O_k \mid I + [O_0, O_1, \dots, O_{k-1}]\bigr).
\end{equation}

\section{Experiments}
\label{sec:experiments}

\subsection{Experiment Settings}
\partitle{Dataset}
We select three representative downstream tasks to evaluate the fine-tuning performance, including two mainstream tasks: math, and code (widely used to evaluate LLMs \citep{guo2025deepseek, openai2024gpt4technicalreport,team2024gemma}), as well as an important specialized tasks: medicine \citep{Arun2023naturemedicine,alberts2023largemedicine}. For the math task, we employ the GSM8K \citep{cobbe2021gsm8k} for fine-tuning and evaluation. For the code task, we fine-tune on the CodeExercise \citep{CodeExercise2025code} and evaluate using HumanEval \citep{chen2021humaneval}. For medicine tasks, we employ the PubMedQA \citep{jin2019pubmedqa} for fine-tuning and evaluation. We also conduct additional experiments based on the NuminaMath-CoT \citep{li2024numinamath}, MATH-500 \citep{hendrycks2021measuringmath500}, and FIQA \citep{maia201818fiqa} datasets, but due to space limitations, these results are presented in the Appendix~\ref{sec:extension of main experiment}.

%我们选择了5个具有代表性的垂域任务来评估微调的效果，包括了三个主流的任务：question-answering， math and code，以及两个常见的专业型任务：金融、医疗。在question answering 任务中，我们采用了xxx数据集；在math任务中，我们采用GSM8K\red{citep}数据集来微调和评估；在code任务中，我们采用CodeExercise\red{citep}进行微调并使用humaneval\red{citep}进行评估；在金融和医疗任务上，我们分别使用xxx数据集\red{citep}和xxx数据集\red{citep}来微调和评估。

\partitle{LLMs}
We select 7 different base LLMs of varying scales from three outstanding model families: DeepSeek \citep{guo2025deepseek}, Llama \citep{touvron2023llama,grattafiori2024llama} and Mistral \citep{Mistral_7B2023mistral}. Specifically, for the Deepseek family, we choose DeepSeek-R1-distilled-Qwen-1.5B, 7B and 14B \citep{bai2023qwen}. For the Llama family, we select Llama-3.2-1B, 3B and Llama-3.1-8B. For the Mistral family, we select Mistral-v0.1-7B. 
%我们选择了6个不同规模基座大模型来自于4个优秀的模型系列：Deepseek， Llama， Gemma and Mistral。具体来说，最小规模（1B-2B）的模型中，我们选择了Llama-3.2-1B 和 Deepseek-distilled-qwen-1.5B；中等规模（2B-7B）的模型，我们选择了Gemma-2B和Llama-3.2-3B；稍大规模（7B）的模型，我们选择了Deepseek-distilled-qwen-7B和mistral-7B。

% \partitle{Baselines}
% We consider three regular LLM implementations and three data enhancement methods to demonstrate the effectiveness of our \Name. Specifically, for regular LLM implementations, we adopt clean accuracy (CA), i.e., using the original base model, normal fine-tuning (Normal) and more epochs ($\times$2 epochs). Data enhancement methods can filter the data noise from different perspectives. Specifically, data filtering (DF) \citep{li2024superfiltering} filters noisy data at the sample level, data augmentation (DA) \citep{dai2025auggpt} augments more data to enhance the model's robustness, thus resist noise, and selective language model training \citep{lin2024not-all-tokens} trains the token-level data selectively based on changes in the loss value. More details are shown in Appendix~\ref{sec:baselines setting}.
\partitle{Baselines}
We consider 3 regular LLM implementations and 4 data enhancement methods to demonstrate the effectiveness of our \Name. Specifically, for regular LLM implementations, we adopt clean accuracy (CA), i.e., using the original base model, normal fine-tuning (Normal) and more epochs ($\times$2 epochs). Data enhancement methods can filter the data noise from different perspectives. Specifically, data filtering (DF) \citep{li2024superfiltering} filters noisy data at the sample level, data augmentation (DA) \citep{dai2025auggpt} augments more data to enhance the model's robustness, thus resist noise, selective language model training \citep{lin2024not-all-tokens} trains the token-level data selectively based on changes in the loss value, and token cleaning (TC) \citep{pang2025tokenclean} performs fine-grained data selection for LLM supervised fine-tuning. More details are shown in Appendix~\ref{sec:baselines setting}.

\partitle{Metrics}
We primarily use accuracy to evaluate the performance of LLMs on specific tasks. For tasks with a standard answer, we adopt a zero-shot form to pose the questions and use a judge model \citep{claude-3.5-sonnet} to determine the correctness of the responses. This evaluation method is less influenced by the format of the prompt and the model's response style, providing a clear and intuitive reflection of the fine-tuning effect. For the code completion task, we assess performance using pass@1, pass@5, and pass@10 metrics, which are specifically used to evaluate code generation tasks \citep{kulal2019spocpass@k}.
%我们主要采用Accuracy来对大模型在具体任务上的表现进行打分。对于需要判断对错的任务，\eg，question-answering、math、finance和medecine，我们采用judgement model计算精度，对于较为困难的生成类任务code，我们采用pass@1，pass@5，pass@10来判断其性能.

\partitle{Hyperparameter}
We conduct experiments based on existing work and strictly control the fairness of the results through hyperparameters, as detailed in Appendix~\ref{sec:hyperparameters}.

\subsection{Main Experiment Results}
\label{sec:main experiment results}

\begin{table}[t]
\centering
\caption{Result of main experiment. We show the accuracy of LLMs across different fine-tuning methods. Best results are marked in \textbf{bold} and the second best results are marked with underline. }
\label{tab:main}
\resizebox{\linewidth}{!}{\begin{tabular}{lrccccccccc}
\toprule
\multicolumn{11}{c}{\textbf{MATH: Fine-tuning and evaluate models on GSM8K}} \\
\midrule
\textbf{Model} & $\mathbf{|\theta|}$ & \textbf{LoRA} & \textbf{CA} & \textbf{Normal} & $\mathbf{\times}$\textbf{2 Ep} &\textbf{DF} & \textbf{DA} & \textbf{SLM} & \textbf{TC} & \textbf{\Name} \\
\midrule
Llama-3.2 & 1B & $\times$ & 2.8 & 4.3 & 6.8 & \underline{7.6} & 2.4 & 5.9 &6.8 & \textbf{8.7} \\
Llama-3.2 & 3B & $\times$ & 3.9 & 25.8 & 33.4 & 36.9 & 27.1 & \underline{38.8} & 38.4& \textbf{40.5} \\
Llama-3.1 & 8B & $\times$ & 4.6 & 54.0 & 55.4 & 52.7 & 55.4 & \textbf{60.3} & \underline{58.9}& 58.7 \\
Llama-3.1 & 8B & $\checkmark$ & 4.6 & 33.7 & 32.7 & 37.0 & 33.7 & \underline{37.9} & \textbf{38.4}& 37.1 \\
Mistral & 7B & $\times$ & 8.0 & 15.0 & 16.1 & 21.3 & 18.4 & 22.6 & \underline{24.1}& \textbf{29.1} \\
Mistral & 7B & $\checkmark$  & 8.0 & 23.4 & 20.7 & \underline{25.6} & 21.8 & 21.3 & 20.7& \textbf{25.6} \\
Deepseek-distilled-qwen & 1.5B & $\times$ & 17.6 & 42.9 & 45.5 & \underline{47.0} & 37.4 & 37.3 &38.5 & \textbf{56.2} \\
Deepseek-distilled-qwen & 7B & $\times$ & 37.9 & 63.0 & \underline{68.1} & 65.5 & 56.5 & 63.8 &61.9 & \textbf{69.3} \\
Deepseek-distilled-qwen & 7B & $\checkmark$  & 37.9 & 61.3 & 60.4 & 67.9 & 62.0 & 68.2 &\underline{70.0} & \textbf{71.8} \\
Deepseek-distilled-qwen & 14B & $\checkmark$  & 34.5 & 47.6 & 50.3 & \underline{52.4} & 50.5 & 49.3 & 52.1& \textbf{60.3} \\
\midrule
Average & --& -- & 16.0 & 37.1 & 39.0 & \underline{41.4} & 36.5 & 40.5 &41.0 & \textbf{45.7} \\
\midrule
\multicolumn{11}{c}{\textbf{MEDICINE: Fine-tuning and evaluate models on PubMedQA}} \\
\midrule
\textbf{Model} & $\mathbf{|\theta|}$ & \textbf{LoRA} & \textbf{CA} & \textbf{Normal} & $\mathbf{\times}$\textbf{2 Ep} &\textbf{DF} & \textbf{DA} & \textbf{SLM} & \textbf{TC} & \textbf{\Name} \\
\midrule
Llama-3.2 & 1B & $\times$ & 2.9 & 15.5 & 15.6 & 15.4 & \underline{18.3} & 13.1 &12.4 & \textbf{18.5} \\
Llama-3.2 & 3B & $\times$ & 6.6 & 35.5 & 33.7 & 29.8 & \textbf{37.9} & 36.1 &36.4 & \underline{36.5} \\
Llama-3.1 & 8B & $\times$ & 4.9 & 13.6 & 14.0 & 18.4 & 14.7 & \textbf{22.9} &22.9 & \underline{21.3} \\
Llama-3.1 & 8B & $\checkmark$ & 4.9 & 24.2 & 22.3 & \underline{34.3} & 31.4 & 26.3 &27.1 & \textbf{37.9} \\
Mistral & 7B & $\times$ & 8.6 & 20.0 & 26.2 & 18.7 & 23.4  &\underline{26.5} & 26.2 & \textbf{32.0} \\
Mistral & 7B & $\checkmark$  & 8.6 & 15.5 & \underline{18.4}  & 15.6 & 13.1 & 17.5 &16.9 & \textbf{21.3} \\
Deepseek-distilled-qwen & 1.5B & $\times$ & 39.8 & 44.6 & 45.4 & 41.2 & {49.7} & 48.3 &\textbf{51.2} & \underline{50.8} \\
Deepseek-distilled-qwen & 7B & $\times$ & 42.7 & 50.5 & 42.1 & \underline{55.4} & 52.7 & 51.3 & 51.6& \textbf{55.6} \\
Deepseek-distilled-qwen & 7B & $\checkmark$  & \textbf{42.7} & 35.9 & 37.8 & 33.6 & 38.4 & 39.0 &37.4 & \underline{41.7} \\
Deepseek-distilled-qwen & 14B & $\checkmark$  & 44.6 & 48.5 & 43.4 & 51.3 & 47.0 & 53.9 &\underline{54.6} & \textbf{55.7} \\
\midrule
Average & --& -- & 20.6 & 30.4 & 29.9 & 31.4 & 32.7 & 33.5&\underline{33.7} & \textbf{37.1} \\
\bottomrule
\end{tabular}}
\end{table}

\begin{figure}[t]
  \begin{subfigure}{\linewidth}
    \includegraphics[width=\linewidth]{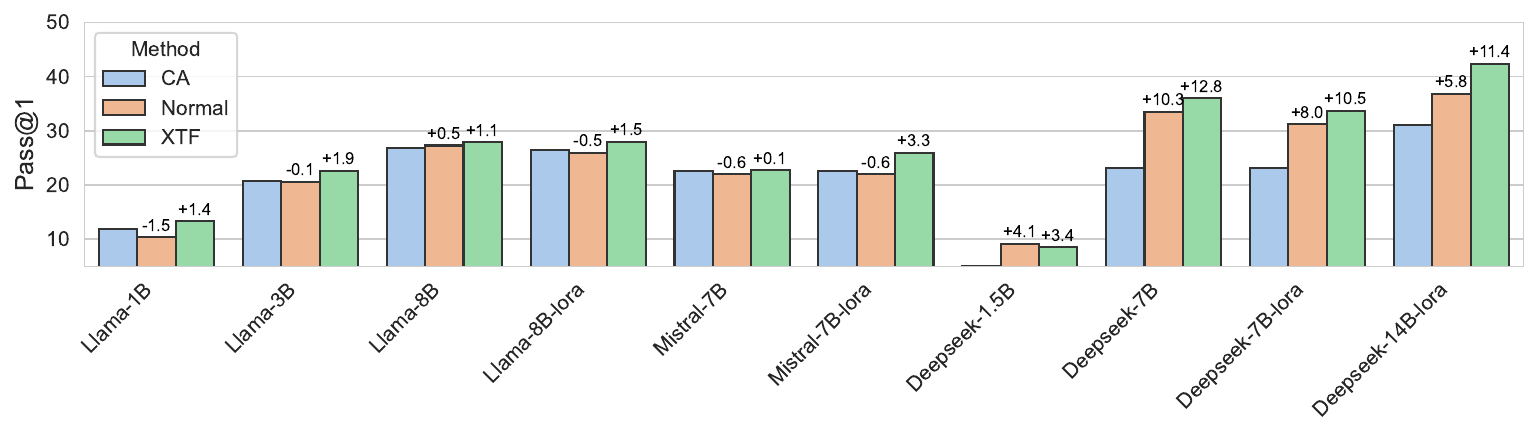}
  \end{subfigure}
  
  \begin{subfigure}{\linewidth}
    \includegraphics[width=\linewidth]{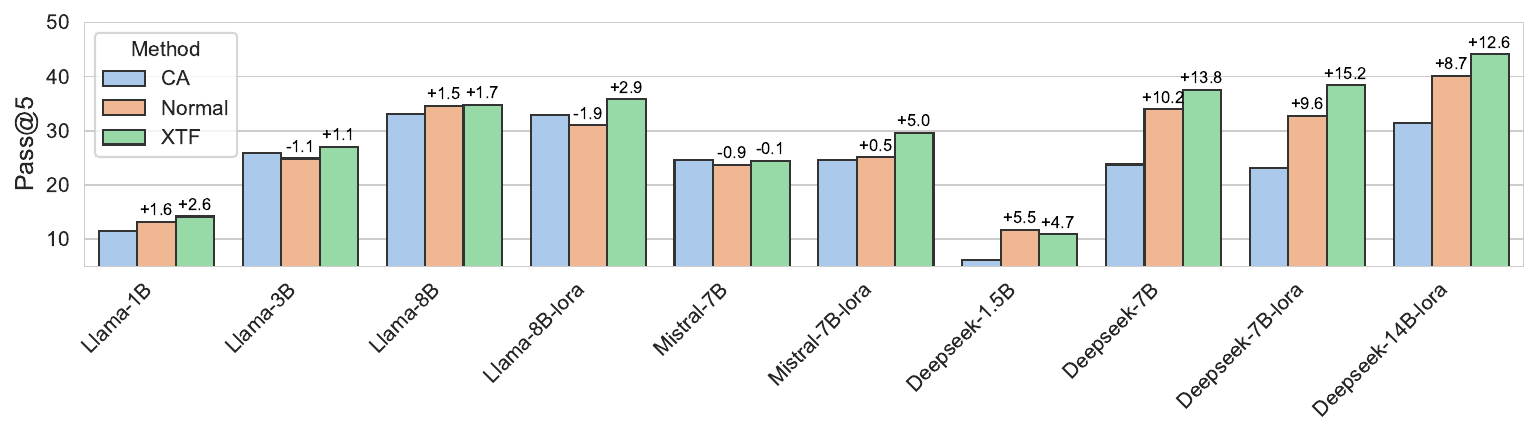}
  \end{subfigure}
  
  \begin{subfigure}{\linewidth}
    \includegraphics[width=\linewidth]{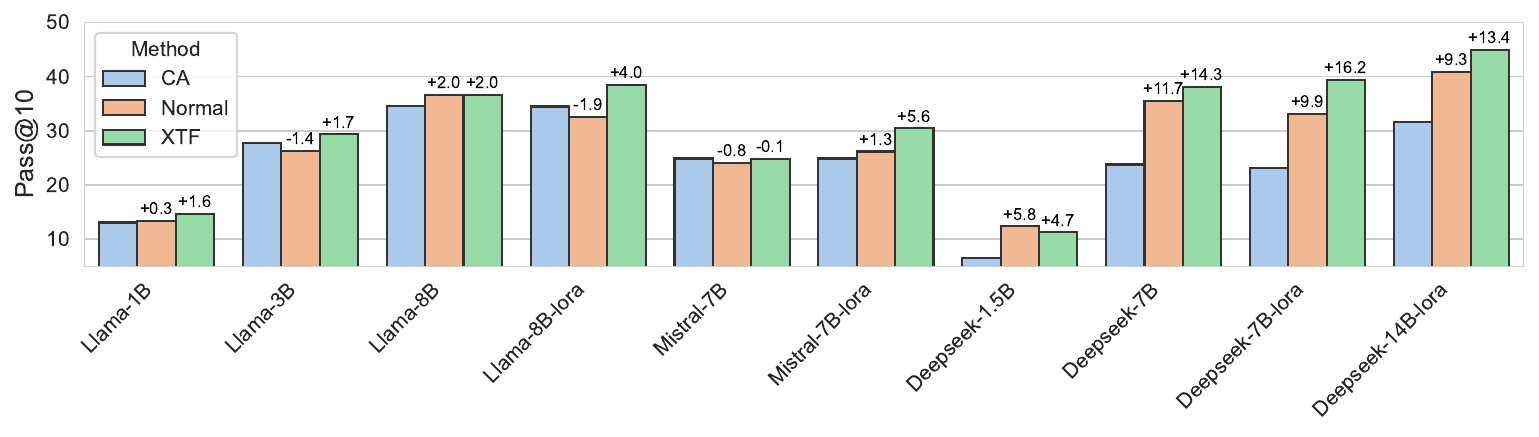}
  \end{subfigure}
  \caption{The results on code task. We show the results of pass@1, pass@5 and pass@10 respectively.}
  \label{fig:code results}
  \vspace{-1em}
\end{figure}

%我们在主实验中采用了大量的微调cases以及三种具有代表性的不同downstream tasks。具体来说，as shown in Table~\ref{tab:main} and Figure~\ref{fig:code results}，主实验一共采用了10种不同的cases，包括了7种不同的模型，以及全参数微调和LoRA微调两种微调方式。LoRA微调只在较大规模（larger than 7B）模型上进行。三种downstream tasks除了领域不同以外，在数据细节上也有所差异：用于math任务微调的GSM8K数据质量较高，而用于code评估的humaneval测试最为复杂且包含多轮生成结果。

We employ a significant number of fine-tuning cases and three representative downstream tasks in our main experiment. By conducting fine-tuning experiments on different models using unified hyperparameters and datasets, and comparing the performance of the fine-tuned models, we can assess the effectiveness of the fine-tuning methods. During the training process, we employ a training set, validation set, and test set to prevent issues such as overfitting caused by varying convergence speeds. Specifically, the model parameters that perform best on the validation set are retained as the final model parameters and tested on the test set. The reported results are test set accuracy.

\partitle{Math}
Math is an important downstream task in LLM research. It is widely adopted as the LLM benchmark, and the performance on math task can reflect the logic ability of LLMs. As shown in Table~\ref{tab:main}, \Name has average 8.6\% higher accuracy than normal fine-tuning and 4.3\% higher accuracy than the best baseline DF. In all 10 cases, \Name obtains 8 best results and 2 second best results. In particular, when fine-tuning the Deepseek-distilled-qwen-1.5B with all parameters, \Name achieves 13.3\% higher accuracy than normal fine-tuning and 9.3\% higher accuracy than the best baseline DF. Due to space limitations, additional experiments on math tasks (NuminaMath-CoT and MATH-500) will be presented in the Appendix~\ref{sec:extension of main experiment}.
%表现最差的baseline是DA，其平均accuracy不如正常微调训练。我们推测这是因为GSM8k种数据噪声较小，引入更多的数据无法增加鲁棒性反而引入了更多噪声。令我们惊讶的是，DF在许多cases下表现出较好的结果，相比于Normal baseline平均提高了4.3\%。这个现象说明即使是粗糙的（sample-level）过滤，消除噪声带来的好处总体而言也大于坏处，进一步证明了噪声过滤任务本身的重要性。

%We first evaluate different methods on a math task and a medicine task. As shown in Table~\ref{tab:main}, \Name outperforms all the baselines in both the math task and the medicine task, with an average of 8.6\% and 6.3\% higher accuracy than normal fine-tuning, respectively. When fine-tuning the Deepseek-distilled-qwen-1.5B on the math task, \Name achieves 13.3\% higher accuracy than normal fine-tuning and 9.3\% higher accuracy than the best baseline. When fine-tuning the Llama-3.1-8B-LoRA on the medicine task, \Name demonstrates 13.7\% higher accuracy than normal fine-tuning and 3.6\% higher accuracy than the best baseline. Despite some baselines, such as DF and SLM, also optimizing the fine-tuning performance in almost every case, they cannot outperform \Name, indicating that \Name has a better noise filtering effect. Besides, the $\times$2 Ep shows a minor difference compared to Normal, demonstrating the convergence of the fine-tuning process under our experimental settings. In summary, \Name exhibits the best performance and can effectively optimize the fine-tuning of LLMs.

\partitle{Medicine}
Medicine is a promising application area for LLMs, and LLM researches on pharmaceutical downstream tasks \citep{Arun2023naturemedicine, alberts2023largemedicine} have already achieved widespread influence.
As shown in Table~\ref{tab:main}, \Name has average 6.7\% higher accuracy than normal fine-tuning and 3.4\% higher accuracy than the best baseline TC. In all the 10 cases, \Name obtains 6 best results and 4 second best results. When fine-tuning the Llama-3.1-8B with LoRA, \Name demonstrates 13.7\% higher accuracy than normal fine-tuning and 3.6\% higher accuracy than the best baseline. 
%DA方法在medicine任务中的表现优于math任务，这很可能与PubMedqa数据质量低于GSM8k有关。SLM在math和medicine任务中都具备较强的竞争力，然而其综合表现和鲁棒性均显著低于我们的方法。具体来说，SLM在两个任务上仅分别超出normal baseline 3.4\% 和 3.1\%(8.6\% and 6.7 of \Name)，并存在3个弱于Normal的cases（\Name中是0个）。

\partitle{Code}
The code task is a more challenging task, and we provide pass@1, pass@5, and pass@10 results. Due to the generally lower accuracy on code tasks, the gap between \Name and the baseline is not as significant as in other experiments. As shown in Figure~\ref{fig:code results}, \Name generally exhibits better results than normal fine-tuning. This difference increases when the LLMs are given more generation chances (from pass@1 to pass@10). In certain cases, normal fine-tuning decreases accuracy, indicating the harmfulness of noisy tokens, while \Name still shows positive performance. We find that when fine-tuning larger-scale models, the effect of noise filtering is more pronounced than when fine-tuning smaller-scale models. This phenomenon aligns with the claim we propose in Section~\ref{sec:which data act as noise for fine-tuning?}: finding noise in data should consider the base model's knowledge. Since larger-scale models generally possess stronger performance, we believe that stronger base model performance can better leverage the \Name method.

%我们发现，在微调规模较大的模型时，噪声过滤所能起到的作用比微调规模较小的模型更加明显。这一现象符合我们在Section~\ref{sec:intro}中提出的声明：判断数据中的噪声应当考虑base model的pre-existing knowledge。考虑到一般情况下规模较大的模型拥有更强的性能，我们认为较强的base model性能能够进一步体现\Name的价值。

\subsection{Ablation Study}
\begin{table}[t]
\centering
\caption{Ablation study of \Name across different settings. DS, LA and MS denotes Deepseek, Llama and Mistral employed in this experiment. Ma and Me denotes math task and medicine task respectively. $\times$ means the corresponding noise has been filtered while $-$ means not. Best results are marked in \textbf{bold} and the second best results are marked with underline.}
\label{tab:ablation study}
\resizebox{\linewidth}{!}{\begin{tabular}{ccccccccccc}
\toprule
\textbf{Case} & $D_{RI\downarrow}$ & \textbf{$D_{KN\downarrow}$} & \textbf{$D_{TR\downarrow}$} & \textbf{DS(Ma)} & \textbf{LA(Ma)} & \textbf{MS(Ma)} & \textbf{DS(Me)} & \textbf{LA(Me)} & \textbf{MS(Me)} & \textbf{Avg} \\ \midrule
Zero      & $-$ & $-$ & $-$ & 42.9 & 25.8 & 15.1 & 44.6 & 35.5 & 20.0 & 30.7 \\ 
I     & $\times$ & $-$ & $-$ & 44.0 & 28.3 & 16.6 & 44.7 & 36.1 & 22.3 & 32.0\\ 
II    & $-$ & $\times$ & $-$ & 48.1 & 28.4 & 20.2 & 46.8 & 36.2 & 27.1 & 34.5\\ 
III     & $-$ & $-$ & $\times$ & 45.3 & 30.1 & 17.7 & 45.7 & 35.5 & 25.4 & 33.3\\ 
IV      & $\times$ & $\times$ & $-$ & 48.3 & 32.2 & 23.9 & 47.8 & \underline{36.4} & 28.1 &36.1\\ 
V     & $\times$ & $-$  & $\times$ & \underline{49.2} & \underline{34.1} & \underline{27.7} & 47.1 & 35.9 & 27.6 & \underline{36.9}\\ 
VI    & $-$ & $\times$ & $\times$ & 47.3 & 32.9 & 22.6 & \underline{48.3} & 36.1 & \underline{30.9} & 36.3\\ \midrule
\Name (Ours)   & $\times$ & $\times$ & $\times$ & \textbf{56.2} & \textbf{40.5} & \textbf{29.1} & \textbf{50.8} & \textbf{36.5} & \textbf{32.0} &\textbf{40.1}\\ \bottomrule
\end{tabular}}
\end{table}

In this section, we conduct an ablation study on three noise filtering attributes. We selectively use the attributes to filter the noisy tokens and train models from different series.
%filter these tokens or not in this experiment to explore the specific influence of these noisy tokens.

As shown in Table~\ref{tab:ablation study}, \Name consistently demonstrates the best performance compared to other settings, suggesting that all the attributes are necessary for better noise filtering. At the same time, we observe that in mathematical tasks, using the combination of RI and KN to filter tokens is consistently superior to other combinations, whereas this is not the case in the Medicine task. Additionally, in the Medicine task, the optimal combination of attributes differs across models. These phenomena suggest that the relative effectiveness of the three attributes can vary depending on the model and task type, which aligns with our understanding of fine-tuning discussed in Section~\ref{sec:which data act as noise for fine-tuning?}.

In addition, we also conduct ablation study against the threshold (for token filtering) selection, which are detailed in the Appendix~\ref{sec:ablation study about threshold}.

%In particular, the second-best results usually exhibit a significant difference (more than 2\% in 4 cases) from the corresponding \Name results, indicating that unfiltered noise can negatively influence the final performance and increase this difference. Furthermore, the influence of the three kinds of noisy tokens varies across different cases, \eg, filtering $D_{RI\downarrow}$ and $D_{TR\downarrow}$ is more effective than filtering $D_{KN\downarrow}$ and $D_{TR\downarrow}$ in math task, while the opposite is true in the medicine task. In summary, each kind of noisy token can lead to a negative influence on fine-tuning, and all the noise filtering modules in \Name are necessary.

\section{Discussion}
\label{sec:discussion}
%我们提出的\Name能够有效地增强LLM微调，但是它仍然存在一些缺陷。具体来说，\Name可能无法过滤全部类型的noisy tokens，例如：部分困惑度极高的异常token label很难学习，并且有可能严重破坏模型原本性能。尽管我们已经试图用统计学方法过滤这部分tokens，但是没能取得较好的效果（低于正常微调）。像SLM一样使用loss值作为指导是一个潜在的解决方案，但是如何正确地构建reference model是一项挑战，将在我们未来的工作被继续探索。
Our proposed \Name effectively enhances LLM fine-tuning, but it still has some limitations. First, regarding computational cost, \Name incurs inference-level computational overhead (detailed in the Appendix~\ref{sec:computational overhead}), which performs better compared to existing methods that train a reference model. However, it still imposes a significant burden when dealing with large models. If a small distilled model could be provided to identify noise for large-scale base models, it would significantly reduce the cost of token scoring. Additionally, we believe that more attributes can be explored for filtering noise, and these attributes can be assessed from multiple perspectives. Such work would provide an effective complement to the application of \Name in real-world scenarios.
%Specifically, \Name might not be able to filter all types of noisy tokens. For example, some anomalous token labels with extremely high perplexity are difficult to learn and could severely degrade the model's original performance. Although we have attempted to filter these tokens using statistical methods, we have not achieved satisfactory results (worse than normal fine-tuning). Using loss values as guidance, similar to SLM \citep{lin2024not-all-tokens}, is a potential solution, but how to correctly construct a reference model is a challenge, which will be further explored in our future work.

\section{Conclusion}

In this paper, we investigated the influence of training data on fine-tuning performance at the token-level. We explored solutions for filtering token-level noise to optimize fine-tuning datasets using three decomposed dimensions: reasoning importance, knowledge novelty, and task relevance, subsequently proposing \Name. We conducted extensive experiments on datasets across 3 different downstream tasks and 7 representative LLMs. \Name achieved up to 13.3\%, 13.7\% and 6.3\% accuracy optimization on the math task, medicine task and code task, respectively, and outperformed all the baselines overall, demonstrating its effectiveness in noise filtering and fine-tuning enhancement.

% 致谢部分（放在参考文献之后）
\subsubsection*{Acknowledgements}
This work was supported in part by the National Natural Science Foundation of China (Grant Nos. 62502435, 625B1032, 62441238, U2441240), the Zhejiang Provincial Natural Science Foundation (No. LQN26F020002), the “Pioneer” and “Leading Goose” R\&D Program of Zhejiang (No. 2024C01169), and the Kunpeng-Ascend Science and Education Innovation Excellence/Incubation Center.

\bibliography{iclr2026_conference}
\bibliographystyle{iclr2026_conference}

\appendix
\newcommand{\dataset}{{\mathcal{D}}}
\newcommand{\params}{{\bm{\theta}}}
\newcommand{\paramsopt}{{\bm{\theta}_{\text{opt}}}}
\newcommand{\paramsbase}{{\bm{\theta}_{0}}}
\newcommand{\losstask}{{\mathcal{L}_{\text{task}}}}
\newcommand{\lossideal}{{L_{\text{ideal}}}}
\newcommand{\lossmod}{{\mathcal{L}'_{\text{\Name}}}}
\newcommand{\probmodel}{{P_{\params}}}
\newcommand{\probopt}{{P_{\paramsopt}}}
\newcommand{\probtask}{{\mathcal{P}_{\text{task}}}}
\newcommand{\probbase}{{P_{\paramsbase}}}
\newcommand{\grad}{{\nabla_{\params}}}
\newcommand{\expect}{{\mathbb{E}}}
\newcommand{\indicator}{{\mathbb{I}}}
\newcommand{\setcomplement}[1]{{#1}^{\mathsf{c}}} % For set complement or y \setminus N(y)

\appendix
\section{Theoretical Foundations of \Name}
\label{sec:theory}

\paragraph{Executive summary.}
We provide a complete, assumption–transparent, and \emph{parameterization–invariant} analysis of how token filtering improves the alignment (in a Riemannian sense) between the training direction and the ``ideal'' gradient direction. 
We (i) clarify the geometry and regularity of the Fisher information (via damping), (ii) decouple the analysis from any specific optimizer by working with an \emph{arbitrary} symmetric positive–definite (SPD) preconditioner $M$ (with $M = F_\lambda$ recovering natural gradient and $M = I$ recovering Euclidean SGD), (iii) make precise the statistical assumption on the \emph{selector} (and quantify the effect of selection bias), (iv) replace global smoothness by \emph{local} smoothness, and (v) tighten the treatment of high–confidence (\textsc{KN}) tokens with a clean and fully rigorous bound. 
Throughout, all random variables are defined on a common probability space; expectations are with respect to the distributions explicitly indicated.

\subsection{Preliminaries and geometry}
\label{app:preliminaries}

\paragraph{Contexts and model.}
A token–level \emph{context} is $c \defeq (x, t_{<i})$ and the associated label (gold token) is $t$.
Let $p_\theta(t \mid c)$ denote the model's conditional distribution at parameters $\theta$, and let
\[
\phi_\theta(c,t) \;\defeq\; \nabla_\theta \log p_\theta(t \mid c)
\]
be the \emph{score} of the pair $(c,t)$.

\paragraph{Teacher–forcing context law.}
Filtering will \emph{remove} some token–level losses but, under teacher forcing, it does not alter the forward contexts $c$. We therefore fix a baseline context distribution
\begin{equation}
\label{eq:Q}
Q(c)\quad\text{(the teacher–forcing context law; e.g.\ the empirical distribution over contexts in the corpus).}
\end{equation}
All expectations below that involve $p_\theta(\cdot\mid c)$ are taken with $c\sim Q$ first.

\paragraph{Damped Fisher information.}
Define the (damped) conditional Fisher information at $\theta$:
\begin{equation}
\label{eq:Fisher}
F_\lambda(\theta)
\;\defeq\;
\mathbb{E}_{c\sim Q}\,\mathbb{E}_{t\sim p_\theta(\cdot\mid c)}\!\big[\phi_\theta(c,t)\,\phi_\theta(c,t)^\top\big]
\;+\;\lambda I,
\qquad \lambda\ge 0.
\end{equation}
We will assume $F_\lambda(\theta)\succ 0$ for all $\theta$ under consideration; see Assumption~\ref{assump:A0}. 
Given any SPD matrix $M(\theta)\succ 0$ (to be specified below), we write
\[
\langle u, v\rangle_{M} \;\defeq\; u^\top M(\theta) v,
\qquad
\|u\|_{M^{-1}}\;\defeq\;\sqrt{u^\top M(\theta)^{-1}u}.
\]
The geometric viewpoint will be stated for a \emph{general} preconditioner $M$; the \emph{natural gradient} corresponds to the special choice $M=F_\lambda$.

\paragraph{Ideal risk.}
Let $p_\star(c,t)$ denote the (unknown) \emph{ideal} task distribution over $(c,t)$, with marginal $Q(c)$ under teacher forcing.
The ideal per–token KL risk is
\begin{equation}
\label{eq:ideal-risk}
\Loss_\star(\theta)
\;\defeq\;
\mathbb{E}_{(c,t)\sim p_\star}\big[-\log p_\theta(t\mid c)\big]
\;=\; \mathrm{KL}\!\big(p_\star \,\Vert\, p_\theta\big) + \text{const.}
\end{equation}
Since $p_\star$ does not depend on $\theta$,
\begin{equation}
\label{eq:ideal-grad}
\nabla_\theta \Loss_\star(\theta) \;=\; -\,\mathbb{E}_{(c,t)\sim p_\star}\big[\phi_\theta(c,t)\big].
\end{equation}
For any SPD $M$, we write the (preconditioned) gradient direction as $\widetilde{\nabla}_M \Loss_\star(\theta) \defeq M(\theta)^{-1}\nabla_\theta \Loss_\star(\theta)$; for $M=F_\lambda$ this is the damped natural gradient.

\subsection{Data model, selector, and assumptions}
\label{app:assumptions}

\begin{assumption}[Damped Fisher and teacher forcing]
\label{assump:A0}
(1) $Q(c)$ is fixed by teacher forcing and independent of filtering. 
(2) There exists $\lambda\ge 0$ such that $F_\lambda(\theta)\succ 0$ for all $\theta$ in a neighborhood of interest (e.g.\ along the optimization trajectory).
\end{assumption}

\begin{assumption}[Mixture model of labels]
\label{assump:A1}
There exists $\varepsilon\in[0,1)$ and distributions $p_{\rm core}, p_{\rm noise}$ on $(c,t)$ such that
\begin{equation}
\label{eq:mixture}
p_{\rm train}(c,t)
\;=\;
(1-\varepsilon)\,p_{\rm core}(c,t) + \varepsilon\,p_{\rm noise}(c,t),
\qquad
\text{with } p_{\rm core}(c,t) \equiv p_\star(c,t).
\end{equation}
\end{assumption}

\begin{assumption}[Selector quality \& independence within components]
\label{assump:A2}
Let $Z(c,t)\in\{0,1\}$ be the indicator that the token is kept by the filter. 
Define the error rates
\begin{equation}
\label{eq:selector_error}
\alpha \;\defeq\; \Pr[Z=0 \mid (c,t)\sim p_{\rm core}], 
\qquad
\beta \;\defeq\; \Pr[Z=1 \mid (c,t)\sim p_{\rm noise}].
\end{equation}
We assume the selector has non–trivial skill: $\alpha+\beta<1$.
We consider two flavors of conditional independence:
\begin{itemize}[leftmargin=1.8em]
\item \emph{Strong (MAR within components).} Given the latent component label $G\in\{\mathrm{core},\mathrm{noise}\}$, $Z$ is independent of $(c,t)$:
\[
Z \perp (c,t) \,\mid\, G.
\]
\item \emph{Weak (bounded selection bias).} There exist $\rho_{\rm c},\rho_{\rm n}\ge 0$ such that
\begin{align}
\big\|\,\mathbb{E}[\phi_\theta(c,t)\mid G\!=\!\mathrm{core}, Z\!=\!1] - \mathbb{E}[\phi_\theta(c,t)\mid G\!=\!\mathrm{core}]\,\big\|_{M^{-1}} &\le \rho_{\rm c}\,\big\|g_{\rm core}(\theta)\big\|_{M^{-1}},
\label{eq:weak-bias-core}
\\
\big\|\,\mathbb{E}[\phi_\theta(c,t)\mid G\!=\!\mathrm{noise}, Z\!=\!1] - \mathbb{E}[\phi_\theta(c,t)\mid G\!=\!\mathrm{noise}]\,\big\|_{M^{-1}} &\le \rho_{\rm n}\,\big\|g_{\rm core}(\theta)\big\|_{M^{-1}}.
\label{eq:weak-bias-noise}
\end{align}
\end{itemize}
\end{assumption}

\begin{assumption}[Teacher forcing invariance]
\label{assump:A3}
Filtering removes token losses but does not change the context law $Q(c)$ in the forward pass. 
Consequently, $F_\lambda(\theta)$ in \eqref{eq:Fisher} is unaffected by filtering.
\end{assumption}

\begin{assumption}[Local smoothness of $\Loss_\star$]
\label{assump:A4}
For each $\theta$ there exist $r>0$ and $L=L(\theta,r)>0$ such that, for all $u$ with $\|u\|_2\le r$,
\begin{equation}
\label{eq:local-smooth}
\Loss_\star(\theta+u)
\;\le\;
\Loss_\star(\theta) + \nabla\Loss_\star(\theta)^\top u + \tfrac{L}{2}\|u\|_2^2.
\end{equation}
\end{assumption}

\begin{assumption}[Score–norm control for high–confidence tokens]
\label{assump:A5}
Assume the logits $z_\theta(c)\in\mathbb{R}^K$ are coordinatewise $L_z$–Lipschitz in $\theta$, i.e.\ $\|\nabla_\theta z_\theta(c)_k\|_2\le L_z$ for all $k$, uniformly in $c$. 
If $F_\lambda(\theta)\succeq \mu I$ for some $\mu>0$, then for all $(c,t)$
\begin{equation}
\label{eq:score-bound}
\big\|\phi_\theta(c,t)\big\|_{F_\lambda^{-1}}
\;\le\; \frac{2L_z}{\sqrt{\mu}}\,(1 - p_\theta(t\mid c)).
\end{equation}
\end{assumption}

\begin{assumption}[Weak incoherence of noise]
\label{assump:A6}
Let
\[
g_{\mathrm{core}}(\theta)\;\defeq\; \mathbb{E}_{(c,t)\sim p_{\rm core}}\![\phi_\theta(c,t)],
\qquad
g_{\mathrm{noise}}(\theta)\;\defeq\; \mathbb{E}_{(c,t)\sim p_{\rm noise}}\![\phi_\theta(c,t)].
\]
There exists $\zeta_M\in[0,1)$ such that for all $\theta$
\begin{equation}
\label{eq:coherenceM}
\langle g_{\mathrm{core}}(\theta),\, g_{\mathrm{noise}}(\theta)\rangle_{M^{-1}}
\;\le\;
\zeta_M \,\big\|g_{\mathrm{core}}(\theta)\big\|_{M^{-1}}^2.
\end{equation}
\end{assumption}

\begin{remark}[Proof of Assumption~\ref{assump:A5}]
\label{rem:A5-proof}
For softmax $p_\theta(t\mid c)=\mathrm{softmax}(z_\theta(c))_t$, the score decomposes as
$\phi_\theta(c,t)=\sum_{k=1}^K(\mathbb{I}\{k=t\}-p_\theta(k\mid c))\,\nabla_\theta z_\theta(c)_k$.
By triangle inequality and $\|\nabla_\theta z_\theta(c)_k\|_2\le L_z$,
\[
\big\|\phi_\theta(c,t)\big\|_2
\;\le\; L_z \sum_{k=1}^K|\mathbb{I}\{k=t\}-p_\theta(k\mid c)|
\;=\; 2L_z \big(1-p_\theta(t\mid c)\big).
\]
If $F_\lambda(\theta)\succeq \mu I$, then $\|v\|_{F_\lambda^{-1}}\le \mu^{-1/2}\|v\|_2$ for all $v$, which yields \eqref{eq:score-bound}.
\end{remark}

\subsection{Distributions used by SGD and gradient identities}

Let $\mathsf{D}$ be a distribution on $(c,t)$; write
\[
g(\theta;\mathsf{D}) \;\defeq\; \mathbb{E}_{(c,t)\sim \mathsf{D}}\![\phi_\theta(c,t)].
\]
We denote by $p_{\rm train}$ the unfiltered training distribution \eqref{eq:mixture}, and by $p_{\rm fil}$ the filtered training distribution induced by keeping only tokens with $Z=1$ and renormalizing.

\begin{lemma}[Filtering–induced mixture and gradient identities]
\label{lem:mixture-grad}
Let $a\!\defeq\!1-\varepsilon$, $b\!\defeq\!\varepsilon$ and 
\begin{equation}
\label{eq:Zfil}
Z_{\rm fil} \;\defeq\; a(1-\alpha) + b\,\beta.
\end{equation}
Then
\begin{align}
g(\theta; p_{\rm train}) &= a\,g_{\rm core}(\theta) + b\,g_{\rm noise}(\theta),
\label{eq:g-train}
\\
\text{\emph{(Strong MAR)}:}\quad
g(\theta; p_{\rm fil}) &= \frac{a(1-\alpha)\,g_{\rm core}(\theta) + b\beta\,g_{\rm noise}(\theta)}{Z_{\rm fil}},
\label{eq:g-fil-strong}
\\
\text{\emph{(Weak bias)}:}\quad
g(\theta; p_{\rm fil}) &= \frac{a(1-\alpha)\,g_{\rm core}^{\rm sel}(\theta) + b\beta\,g_{\rm noise}^{\rm sel}(\theta)}{Z_{\rm fil}},
\label{eq:g-fil-weak}
\end{align}
where $g_{\rm core}^{\rm sel}\!\defeq\! \mathbb{E}[\phi_\theta\mid G\!=\!\mathrm{core},Z\!=\!1]$ and $g_{\rm noise}^{\rm sel}$ is defined analogously. Moreover,
\begin{equation}
\label{eq:ideal-grad-core}
\nabla_\theta \Loss_\star(\theta) \;=\; -\,g_{\rm core}(\theta).
\end{equation}
\end{lemma}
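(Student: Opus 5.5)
The argument is bookkeeping with the latent component label $G$ and linearity of expectation. Throughout, $g(\theta;\mathsf{D})=\mathbb{E}_{\mathsf{D}}[\phi_\theta]$ is an integral against $\mathsf{D}$, and every distribution involved is a (possibly renormalized) mixture over $G$. We therefore compute each gradient by conditioning on $G$ and, for $p_{\rm fil}$, on $Z$.

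\textbf{Step 1: the unfiltered gradient.} By Assumption~\ref{assump:A1}, $p_{\rm train}=a\,p_{\rm core}+b\,p_{\rm noise}$. Integrating $\phi_\theta$ against this mixture gives \eqref{eq:g-train} directly, using linearity and $g_{\rm core}=\mathbb{E}_{p_{\rm core}}[\phi_\theta]$, $g_{\rm noise}=\mathbb{E}_{p_{\rm noise}}[\phi_\theta]$ (Assumption~\ref{assump:A6}).

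\textbf{Step 2: the filtered distribution.} Introduce the joint law of $(G,c,t,Z)$ with $\Pr[G=\mathrm{core}]=a$ and $\Pr[G=\mathrm{noise}]=b$. Conditional on $G$, the pair $(c,t)$ is drawn from $p_{\rm core}$ or $p_{\rm noise}$, and $Z$ is the selector. By definition, $p_{\rm fil}$ is the law of $(c,t)$ conditional on $Z=1$. Bayes' rule gives
\[
\Pr[Z=1]=a(1-\alpha)+b\beta=Z_{\rm fil},
\]
which is positive because $\alpha+\beta<1$ forces either $1-\alpha>0$ or $\beta>0$ whenever the corresponding weight is nonzero. The posterior weights are then $\Pr[G=\mathrm{core}\mid Z=1]=a(1-\alpha)/Z_{\rm fil}$ and $\Pr[G=\mathrm{noise}\mid Z=1]=b\beta/Z_{\rm fil}$. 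The tower property yields
\[
g(\theta;p_{\rm fil})=\frac{a(1-\alpha)\,\mathbb{E}[\phi_\theta\mid G=\mathrm{core},Z=1]+b\beta\,\mathbb{E}[\phi_\theta\mid G=\mathrm{noise},Z=1]}{Z_{\rm fil}}.
\]
This is exactly \eqref{eq:g-fil-weak}, and it requires no independence assumption. If one of the terms $a(1-\alpha)$ or $b\beta$ vanishes, the corresponding conditional expectation is multiplied by zero, so its (possibly undefined) value does not matter.

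\textbf{Step 3: strong MAR and the ideal gradient.} Under strong MAR, $Z\perp(c,t)\mid G$, so conditioning on $Z=1$ leaves the within-component law unchanged. Hence $g^{\rm sel}_{\rm core}=g_{\rm core}$ and $g^{\rm sel}_{\rm noise}=g_{\rm noise}$, which gives \eqref{eq:g-fil-strong}. Finally, \eqref{eq:ideal-grad-core} follows from \eqref{eq:ideal-grad} together with $p_{\rm core}\equiv p_\star$ from Assumption~\ref{assump:A1}.

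The only delicate point is interchanging $\nabla_\theta$ with the expectation in \eqref{eq:ideal-grad}, which the paper already takes for granted. For the finite empirical $Q$ over contexts and a finite vocabulary, this interchange is trivial. Apart from that, the main care needed is the rigorous definition of ``keep and renormalize'' as conditioning on $Z=1$ in Step 2.
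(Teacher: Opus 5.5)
Your proof is correct and follows essentially the same route as the paper: linearity of expectation over the mixture for \eqref{eq:g-train}, conditioning on the component label $G$ and renormalizing by $\Pr[Z=1]=Z_{\rm fil}$ for the filtered gradient, and the score identity with $p_{\rm core}\equiv p_\star$ for \eqref{eq:ideal-grad-core}. The only difference is that you derive \eqref{eq:g-fil-weak} first, with no independence or bias assumption, and then specialize to strong MAR; this usefully makes explicit that the weak-bias bounds play no role in that identity, whereas the paper's wording suggests they do, and your positivity check for $Z_{\rm fil}$ is most cleanly stated as $a=1-\varepsilon>0$ and $1-\alpha>\beta\ge 0$.
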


\begin{proof}
\eqref{eq:g-train} is linearity of expectation applied to \eqref{eq:mixture}. 
For \eqref{eq:g-fil-strong}, under $Z\perp (c,t)\mid G$,
\[
\mathbb{E}\big[\phi_\theta(c,t)\,\mathbb{I}\{Z=1\}\mid G=\mathrm{core}\big]
=(1-\alpha)\,\mathbb{E}[\phi_\theta(c,t)\mid G=\mathrm{core}]
=(1-\alpha)\,g_{\rm core},
\]
and similarly for noise; after renormalization by $Z_{\rm fil}$ we obtain \eqref{eq:g-fil-strong}. 
If independence is replaced by the weak bias bounds \eqref{eq:weak-bias-core}–\eqref{eq:weak-bias-noise}, the same computation yields \eqref{eq:g-fil-weak}. 
Finally, \eqref{eq:ideal-grad-core} is the usual score–matching identity since $p_\star$ does not depend on $\theta$.
\end{proof}

\subsection{Alignment and its improvement under filtering (general SPD $M$)}
\label{app:alignment}

Given an SPD $M(\theta)\succ0$, define the \emph{$M$–alignment} of any preconditioned direction $\widetilde g$ with the ideal direction by
\begin{equation}
\label{eq:alignment-def}
\mathcal{A}_M(\theta;\widetilde g) \;\defeq\; \big\langle \widetilde{\nabla}_M \Loss_\star(\theta),\, \widetilde g\big\rangle_M.
\end{equation}
In particular, if we use the $M$–preconditioned gradient of a distribution $\mathsf{D}$, i.e.\ $\widetilde g(\theta;\mathsf{D}) \defeq -M(\theta)^{-1} g(\theta;\mathsf{D})$, then by \eqref{eq:ideal-grad-core} and \eqref{eq:alignment-def}
\begin{equation}
\label{eq:alignment-forms}
\mathcal{A}_M(\theta;\widetilde g(\cdot;\mathsf{D}))
= \big\langle -M^{-1}g_{\rm core},\, -M^{-1}g(\cdot;\mathsf{D})\big\rangle_M
= g_{\rm core}^\top M^{-1} g(\theta;\mathsf{D}).
\end{equation}

\begin{lemma}[Explicit alignments under unfiltered and filtered sampling]
\label{lem:align-explicit}
Under Assumptions~\ref{assump:A1}–\ref{assump:A3}, for $a=1-\varepsilon$, $b=\varepsilon$, and $Z_{\rm fil}$ as in \eqref{eq:Zfil}, we have
\begin{align}
\mathcal{A}^{\rm train}_M(\theta) &\;\defeq\; \mathcal{A}_M\big(\theta;\widetilde g(\cdot;p_{\rm train})\big) 
\,=\, a\,\|g_{\rm core}\|_{M^{-1}}^2 \;+\; b\,\langle g_{\rm core}, g_{\rm noise}\rangle_{M^{-1}},
\label{eq:align-train}
\\
\text{\emph{(Strong MAR)}:}\quad
\mathcal{A}^{\rm fil}_M(\theta) 
&\;\defeq\; \mathcal{A}_M\big(\theta;\widetilde g(\cdot;p_{\rm fil})\big) 
\,=\, \frac{a(1-\alpha)}{Z_{\rm fil}}\,\|g_{\rm core}\|_{M^{-1}}^2 
\;+\; \frac{b\beta}{Z_{\rm fil}}\,\langle g_{\rm core}, g_{\rm noise}\rangle_{M^{-1}}.
\label{eq:align-fil-strong}
\end{align}
Under the weak–bias alternative in Assumption~\ref{assump:A2}, the same formulas hold with the replacements 
$g_{\rm core}\mapsto g_{\rm core}^{\rm sel}$ and $g_{\rm noise}\mapsto g_{\rm noise}^{\rm sel}$ in the numerators.
\end{lemma}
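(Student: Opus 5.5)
The plan is a direct substitution: combine the alignment identity \eqref{eq:alignment-forms} with the gradient identities of Lemma~\ref{lem:mixture-grad}, then use linearity of the $M^{-1}$ inner product. First, Assumption~\ref{assump:A3} (together with Assumption~\ref{assump:A0}) guarantees that the context law $Q$ is unchanged by filtering. Hence the preconditioner $M(\theta)$, in particular $M=F_\lambda$, is the same SPD matrix in the filtered and unfiltered regimes. This lets us use one inner product $\langle\cdot,\cdot\rangle_{M^{-1}}$ throughout. By \eqref{eq:alignment-forms}, for any training distribution $\mathsf{D}$ the alignment is the bilinear expression $g_{\rm core}^\top M^{-1} g(\theta;\mathsf{D})$. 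The two signs cancel because both the ideal direction (via \eqref{eq:ideal-grad-core}) and $\widetilde g$ carry a minus sign.

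\textbf{Unfiltered case.} Insert \eqref{eq:g-train}, $g(\theta;p_{\rm train})=a\,g_{\rm core}+b\,g_{\rm noise}$, which relies only on the mixture Assumption~\ref{assump:A1}. Expanding $g_{\rm core}^\top M^{-1}(a g_{\rm core}+b g_{\rm noise})$ gives $a\|g_{\rm core}\|_{M^{-1}}^2+b\langle g_{\rm core},g_{\rm noise}\rangle_{M^{-1}}$, which is \eqref{eq:align-train}.

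\textbf{Filtered case under strong MAR.} Insert \eqref{eq:g-fil-strong} in the same way. The scalar $1/Z_{\rm fil}$ factors out of the bilinear form, and this yields \eqref{eq:align-fil-strong}. We must check that $Z_{\rm fil}>0$ so the renormalized distribution $p_{\rm fil}$ is well defined. Since $a>0$, this holds whenever $\alpha<1$. If $\alpha=1$, then $\alpha+\beta<1$ from Assumption~\ref{assump:A2} forces $\beta<0$, which is impossible, so $\alpha<1$ always.

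\textbf{Filtered case under weak bias.} Use \eqref{eq:g-fil-weak} in place of \eqref{eq:g-fil-strong}. The same expansion gives the stated formulas with $g_{\rm core}^{\rm sel}$ and $g_{\rm noise}^{\rm sel}$ in the numerators. The left factor remains the true $g_{\rm core}$, because the ideal gradient is unaffected by selection.

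\textbf{Expected obstacle.} There is no analytic difficulty; the proof is bookkeeping. The only point needing care is the statement's phrase ``the same formulas hold with the replacements.'' Replacing both slots of $\|g_{\rm core}\|^2$ would be wrong. The correct form is $\langle g_{\rm core},g_{\rm core}^{\rm sel}\rangle_{M^{-1}}$, i.e., only the second (numerator) argument is replaced, and I will state it explicitly that way. The deviation from the strong-MAR expression can later be controlled by Cauchy–Schwarz using \eqref{eq:weak-bias-core}–\eqref{eq:weak-bias-noise}, but that bound is not needed for this lemma.
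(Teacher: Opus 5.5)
Your proposal is correct and takes the same approach as the paper, whose proof simply plugs equation~\ref{eq:g-train} and equation~\ref{eq:g-fil-strong} (resp.\ equation~\ref{eq:g-fil-weak}) into the alignment identity equation~\ref{eq:alignment-forms}. Your two extra checks are valid refinements the paper leaves implicit: $Z_{\rm fil}>0$ follows from $\alpha+\beta<1$. Under weak bias only the second argument is replaced, giving $\langle g_{\rm core}, g_{\rm core}^{\rm sel}\rangle_{M^{-1}}$ rather than $\|g_{\rm core}^{\rm sel}\|_{M^{-1}}^2$, which is consistent with how the paper uses the lemma in Theorem~\ref{thm:alignment-gain-weak}.
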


\begin{proof}
Plug \eqref{eq:g-train} and \eqref{eq:g-fil-strong} into \eqref{eq:alignment-forms}.
\end{proof}

\begin{theorem}[Filtering strictly improves $M$–alignment (strong MAR case)]
\label{thm:alignment-gain-M}
Under Assumptions~\ref{assump:A1}–\ref{assump:A3} and \ref{assump:A6}, with $a=1-\varepsilon$, $b=\varepsilon$, $Z_{\rm fil}=a(1-\alpha)+b\beta$, the improvement in alignment is
\begin{equation}
\label{eq:alignment-gain-exact-M}
\mathcal{A}^{\rm fil}_M(\theta) - \mathcal{A}^{\rm train}_M(\theta)
\;=\;
\frac{ab\,(1-\alpha-\beta)}{Z_{\rm fil}}
\Big(\,\|g_{\rm core}(\theta)\|_{M^{-1}}^2 \;-\; \langle g_{\rm core}(\theta), g_{\rm noise}(\theta)\rangle_{M^{-1}}\,\Big).
\end{equation}
In particular, if $\alpha+\beta<1$ and \eqref{eq:coherenceM} holds with $\zeta_M<1$, then
\begin{equation}
\label{eq:alignment-gain-lb-M}
\mathcal{A}^{\rm fil}_M(\theta) - \mathcal{A}^{\rm train}_M(\theta)
\;\ge\;
\frac{ab\,(1-\alpha-\beta)\,(1-\zeta_M)}{Z_{\rm fil}}\;\big\|g_{\rm core}(\theta)\big\|_{M^{-1}}^2
\;>\; 0.
\end{equation}
\end{theorem}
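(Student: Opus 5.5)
The plan is to start from the explicit alignment formulas of Lemma~\ref{lem:align-explicit} and subtract. Write $G\defeq\|g_{\rm core}\|_{M^{-1}}^2$ and $C\defeq\langle g_{\rm core},g_{\rm noise}\rangle_{M^{-1}}$. Then \eqref{eq:align-train} and \eqref{eq:align-fil-strong} give
\[
\mathcal{A}^{\rm fil}_M-\mathcal{A}^{\rm train}_M
=\Big(\tfrac{a(1-\alpha)}{Z_{\rm fil}}-a\Big)G+\Big(\tfrac{b\beta}{Z_{\rm fil}}-b\Big)C .
\]
Putting each coefficient over the common denominator $Z_{\rm fil}$, and using $a+b=1$ together with $Z_{\rm fil}=a(1-\alpha)+b\beta$, the first coefficient becomes
\[
\frac{a\big[(1-\alpha)-a(1-\alpha)-b\beta\big]}{Z_{\rm fil}}
=\frac{a\big[b(1-\alpha)-b\beta\big]}{Z_{\rm fil}}
=\frac{ab(1-\alpha-\beta)}{Z_{\rm fil}} .
\]
The second coefficient becomes
\[
\frac{b\big[\beta-a(1-\alpha)-b\beta\big]}{Z_{\rm fil}}
=\frac{b\big[a\beta-a(1-\alpha)\big]}{Z_{\rm fil}}
=-\frac{ab(1-\alpha-\beta)}{Z_{\rm fil}} .
\]
Factoring out the common coefficient yields exactly \eqref{eq:alignment-gain-exact-M}. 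This is the only real computation, and the step to watch is using $1-a=b$ and $1-b=a$ correctly inside the brackets.

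For the lower bound \eqref{eq:alignment-gain-lb-M}, invoke Assumption~\ref{assump:A6}: $C\le\zeta_M G$, hence $G-C\ge(1-\zeta_M)G\ge0$. The prefactor $ab(1-\alpha-\beta)/Z_{\rm fil}$ is nonnegative because $\alpha+\beta<1$. It is well defined because $Z_{\rm fil}>0$, which holds since $Z_{\rm fil}\ge a(1-\alpha)$ with $a>0$ (as $\varepsilon<1$) and $\alpha<1$ (from $\alpha+\beta<1$). Multiplying the nonnegative prefactor by the inequality gives the first inequality in \eqref{eq:alignment-gain-lb-M}.

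The strict positivity $>0$ is the main subtlety. It requires $b=\varepsilon>0$ and $g_{\rm core}(\theta)\neq0$, and it uses $1-\zeta_M>0$ and $1-\alpha-\beta>0$. I would state these explicitly as the non-degenerate case. If $\varepsilon=0$ or $\theta$ is a stationary point of $\Loss_\star$, the gain is zero and only the non-strict bound holds. I would therefore note that the strict inequality is meant under $\varepsilon>0$ and $\nabla\Loss_\star(\theta)\neq0$, the latter being equivalent to $g_{\rm core}(\theta)\neq0$ by \eqref{eq:ideal-grad-core}. Finally, $M^{-1}$ is SPD, so $G>0$ whenever $g_{\rm core}\neq0$, which completes the argument.
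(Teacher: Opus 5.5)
Your proposal is correct and follows the paper's own route: subtract the two explicit alignment formulas of Lemma~\ref{lem:align-explicit}, reduce both coefficients to $\pm ab(1-\alpha-\beta)/Z_{\rm fil}$ using $a+b=1$, and then apply the incoherence bound \eqref{eq:coherenceM}. Your direct coefficient computation is actually cleaner than the paper's displayed intermediate identity for the first coefficient, whose printed right-hand side equals twice the left although the paper's final formula is right. Your caveat that the strict inequality additionally needs $\varepsilon>0$ and $g_{\rm core}(\theta)\neq 0$ is a correct and necessary refinement, consistent with the paper's own edge-case remark.
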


\begin{proof}
Subtract \eqref{eq:align-train} from \eqref{eq:align-fil-strong} and note
\[
\frac{a(1-\alpha)}{Z_{\rm fil}}-a \;=\; \frac{ab(1-\alpha-\beta)}{Z_{\rm fil}} \, - \, \frac{b\beta}{Z_{\rm fil}} + b,
\quad
\frac{b\beta}{Z_{\rm fil}}-b \;=\; -\,\frac{ab(1-\alpha-\beta)}{Z_{\rm fil}}.
\]
This simplifies the difference to \eqref{eq:alignment-gain-exact-M}. The lower bound \eqref{eq:alignment-gain-lb-M} follows from \eqref{eq:coherenceM}.
\end{proof}

\begin{theorem}[Robust alignment gain under bounded selection bias]
\label{thm:alignment-gain-weak}
Under Assumptions~\ref{assump:A1}–\ref{assump:A3}, \ref{assump:A6}, and the weak–bias bounds \eqref{eq:weak-bias-core}–\eqref{eq:weak-bias-noise}, the alignment improvement satisfies
\begin{equation}
\label{eq:alignment-gain-weak}
\mathcal{A}^{\rm fil}_M(\theta) - \mathcal{A}^{\rm train}_M(\theta)
\;\ge\;
\frac{ab\,(1-\alpha-\beta)\,(1-\zeta_M)}{Z_{\rm fil}}\;\big\|g_{\rm core}\big\|_{M^{-1}}^2
\;-\;
\frac{a(1-\alpha)\rho_{\rm c} + b\beta \rho_{\rm n}}{Z_{\rm fil}}\;\big\|g_{\rm core}\big\|_{M^{-1}}^2.
\end{equation}
Hence the net gain remains positive whenever
\[
ab(1-\alpha-\beta)(1-\zeta_M) \;>\; a(1-\alpha)\rho_{\rm c} + b\beta \rho_{\rm n}.
\]
\end{theorem}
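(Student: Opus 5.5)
The plan is to reduce the weak-bias case to the strong-MAR computation of Theorem~\ref{thm:alignment-gain-M}, plus a perturbation term that is controlled by the bias bounds \eqref{eq:weak-bias-core}--\eqref{eq:weak-bias-noise}.

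First, I would justify the mixture weights in \eqref{eq:g-fil-weak}. By definition of $\alpha$ and $\beta$, we have $\Pr[Z=1\mid G=\mathrm{core}]=1-\alpha$ and $\Pr[Z=1\mid G=\mathrm{noise}]=\beta$, whether or not $Z$ is independent of $(c,t)$ within components. Conditioning on $\{Z=1\}$ therefore yields component weights $a(1-\alpha)/Z_{\rm fil}$ and $b\beta/Z_{\rm fil}$, with the selected conditional means $g^{\rm sel}_{\rm core}$ and $g^{\rm sel}_{\rm noise}$. This is exactly Lemma~\ref{lem:mixture-grad} in the weak form.

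Second, the ideal direction is unchanged, since \eqref{eq:ideal-grad-core} does not involve the selector. Plugging \eqref{eq:g-fil-weak} into \eqref{eq:alignment-forms} gives
\[
\mathcal{A}^{\rm fil}_M=\frac{a(1-\alpha)\langle g_{\rm core},g^{\rm sel}_{\rm core}\rangle_{M^{-1}}+b\beta\langle g_{\rm core},g^{\rm sel}_{\rm noise}\rangle_{M^{-1}}}{Z_{\rm fil}}.
\]
I then write $g^{\rm sel}_{\rm core}=g_{\rm core}+\delta_{\rm c}$ and $g^{\rm sel}_{\rm noise}=g_{\rm noise}+\delta_{\rm n}$. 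This splits $\mathcal{A}^{\rm fil}_M$ into two pieces. The first is precisely the strong-MAR expression \eqref{eq:align-fil-strong}. The second is the remainder
\[
R=\frac{a(1-\alpha)\langle g_{\rm core},\delta_{\rm c}\rangle_{M^{-1}}+b\beta\langle g_{\rm core},\delta_{\rm n}\rangle_{M^{-1}}}{Z_{\rm fil}}.
\]

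Third, subtract $\mathcal{A}^{\rm train}_M$ from \eqref{eq:align-train}, which is unaffected by selection. The strong-MAR piece minus $\mathcal{A}^{\rm train}_M$ is handled by the exact identity \eqref{eq:alignment-gain-exact-M}, and Assumption~\ref{assump:A6} bounds it below by $\frac{ab(1-\alpha-\beta)(1-\zeta_M)}{Z_{\rm fil}}\|g_{\rm core}\|^2_{M^{-1}}$, as in \eqref{eq:alignment-gain-lb-M}.

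For the remainder, I apply Cauchy--Schwarz in the $M^{-1}$ inner product and then the bias bounds:
\[
|\langle g_{\rm core},\delta_{\rm c}\rangle_{M^{-1}}|\le\|g_{\rm core}\|_{M^{-1}}\|\delta_{\rm c}\|_{M^{-1}}\le\rho_{\rm c}\|g_{\rm core}\|^2_{M^{-1}},
\]
and likewise with $\rho_{\rm n}$ for $\delta_{\rm n}$. This gives $R\ge-\frac{a(1-\alpha)\rho_{\rm c}+b\beta\rho_{\rm n}}{Z_{\rm fil}}\|g_{\rm core}\|^2_{M^{-1}}$, which is \eqref{eq:alignment-gain-weak}. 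The positivity criterion follows by factoring out $\|g_{\rm core}\|^2_{M^{-1}}/Z_{\rm fil}$, which is positive whenever $g_{\rm core}\neq0$ because $Z_{\rm fil}>0$.

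There is no real analytic obstacle. The step needing the most care is the first: the weights must be exactly $a(1-\alpha)$ and $b\beta$ without invoking independence, and the ideal gradient must be the \emph{unselected} $g_{\rm core}$. With both in place, the strong-MAR algebra transfers verbatim, and only the two cross terms need bounding.
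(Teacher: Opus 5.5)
Your proof is correct and follows the paper's argument: add and subtract the strong-MAR numerators in \eqref{eq:g-fil-weak}, handle the main term with the exact identity \eqref{eq:alignment-gain-exact-M} plus Assumption~\ref{assump:A6}, and bound the two deviation cross terms via Cauchy--Schwarz in $\langle\cdot,\cdot\rangle_{M^{-1}}$ together with \eqref{eq:weak-bias-core}--\eqref{eq:weak-bias-noise}. You also make explicit three points the paper's sketch leaves implicit: the weights $a(1-\alpha)$ and $b\beta$ require no independence, the first slot of the alignment remains the unselected $g_{\rm core}$, and positivity needs $g_{\rm core}\neq 0$.
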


\begin{proof}
Repeat the proof of Theorem~\ref{thm:alignment-gain-M} using \eqref{eq:g-fil-weak}. Add and subtract the strong–MAR numerators, then apply \eqref{eq:weak-bias-core}–\eqref{eq:weak-bias-noise} and Cauchy–Schwarz in $\langle\cdot,\cdot\rangle_{M^{-1}}$ to bound the deviation terms by $\rho_{\rm c}\|g_{\rm core}\|_{M^{-1}}^2$ and $\rho_{\rm n}\|g_{\rm core}\|_{M^{-1}}^2$, respectively.
\end{proof}

\begin{remark}[Edge cases]
If $\varepsilon=0$ (no noise) or $\alpha+\beta=1$ (random selector), then \eqref{eq:alignment-gain-exact-M} yields $\mathcal{A}^{\rm fil}_M=\mathcal{A}^{\rm train}_M$, i.e.\ no gain; if $\zeta_M=1$ (perfect coherent noise), the lower bound \eqref{eq:alignment-gain-lb-M} is zero.
\end{remark}

\subsection{One–step decrease in $\Loss_\star$: general preconditioners and Euclidean SGD}
\label{app:one-step}

\begin{lemma}[One–step decrease for $M$–preconditioned steps (local version)]
\label{lem:one-step-M}
Let $\widetilde g$ be any direction and consider the update $\theta^+=\theta - \eta\,\widetilde g$. 
Under Assumption~\ref{assump:A4} with radius $r>0$, if $\eta\|\widetilde g\|_2\le r$, then
\begin{equation}
\label{eq:one-step-M}
\Loss_\star(\theta^+) 
\;\le\; 
\Loss_\star(\theta) \;-\; \eta\,\langle \widetilde{\nabla}_M \Loss_\star(\theta), \,\widetilde g\rangle_M 
\;+\; \tfrac{L}{2}\,\eta^2\,\|\widetilde g\|_2^2.
\end{equation}
In particular, for $\widetilde g(\cdot;\mathsf{D}) \!=\! -M^{-1}g(\cdot;\mathsf{D})$ we have
\[
\Loss_\star(\theta^+) 
\;\le\;
\Loss_\star(\theta) \;-\; \eta\,\mathcal{A}_M\big(\theta;\widetilde g(\cdot;\mathsf{D})\big)
\;+\; \tfrac{L}{2}\,\eta^2\,\|\widetilde g(\cdot;\mathsf{D})\|_2^2.
\]
\end{lemma}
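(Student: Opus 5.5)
The plan is to apply Assumption~\ref{assump:A4} directly with the displacement $u \defeq -\eta\,\widetilde g$, and then rewrite the resulting Euclidean inner product in $M$-geometric form.

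\textbf{Step 1 (smoothness).} The hypothesis $\eta\|\widetilde g\|_2\le r$ says exactly that $\|u\|_2\le r$. So the local smoothness inequality \eqref{eq:local-smooth} applies at $\theta$ and gives
\[
\Loss_\star(\theta^+)\le \Loss_\star(\theta) - \eta\,\nabla\Loss_\star(\theta)^\top \widetilde g + \tfrac{L}{2}\eta^2\|\widetilde g\|_2^2 .
\]

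\textbf{Step 2 (rewrite the linear term).} I would use the definition $\widetilde{\nabla}_M\Loss_\star = M^{-1}\nabla\Loss_\star$ together with the symmetry of $M$:
\[
\langle \widetilde{\nabla}_M \Loss_\star,\widetilde g\rangle_M
= (M^{-1}\nabla\Loss_\star)^\top M\,\widetilde g
= \nabla\Loss_\star^\top M^{-1}M\,\widetilde g
= \nabla\Loss_\star^\top \widetilde g .
\]
Substituting this into Step 1 yields \eqref{eq:one-step-M}.

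\textbf{Step 3 (specialization).} Take $\widetilde g(\cdot;\mathsf{D})=-M^{-1}g(\cdot;\mathsf{D})$. By definition \eqref{eq:alignment-def}, the inner product in \eqref{eq:one-step-M} is exactly $\mathcal{A}_M(\theta;\widetilde g(\cdot;\mathsf{D}))$, and \eqref{eq:alignment-forms} gives its explicit form $g_{\rm core}^\top M^{-1}g$. The displayed consequence then follows with no further work.

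\textbf{Sign conventions and the main obstacle.} The only delicate point is keeping the sign conventions consistent. With the update $\theta^+=\theta-\eta\widetilde g$, a decrease requires $\widetilde g$ to point along $+\nabla\Loss_\star$.

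By \eqref{eq:ideal-grad-core}, $\nabla\Loss_\star=-g_{\rm core}$. By \eqref{eq:alignment-forms}, $\mathcal{A}_M$ was computed as the inner product of $-M^{-1}g_{\rm core}$ with $-M^{-1}g$. Both factors carry a minus sign, so the product is $g_{\rm core}^\top M^{-1}g$ and the signs match.

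I would therefore check carefully that the linear term appears with a minus sign. The second-order term needs no care: $\|u\|_2^2=\eta^2\|\widetilde g\|_2^2$ regardless of sign. Beyond this bookkeeping, the lemma is a one-line consequence of Assumption~\ref{assump:A4}.
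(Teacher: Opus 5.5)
Your proof is correct and takes essentially the same approach as the paper. Both apply the local smoothness bound \eqref{eq:local-smooth} with $u=-\eta\widetilde g$, then rewrite $\nabla\Loss_\star^\top\widetilde g$ as $\langle \widetilde{\nabla}_M\Loss_\star,\widetilde g\rangle_M$ using $\widetilde{\nabla}_M\Loss_\star=M^{-1}\nabla\Loss_\star$ and the symmetry of $M$; the specialization then follows directly from the definition of $\mathcal{A}_M$.
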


\begin{proof}
By \eqref{eq:local-smooth},
\[
\Loss_\star(\theta^+) \le \Loss_\star(\theta) + \nabla\Loss_\star(\theta)^\top(\theta^+ - \theta) + \tfrac{L}{2}\|\theta^+ - \theta\|_2^2.
\]
With $\theta^+ - \theta = -\eta \widetilde g$ and $\widetilde{\nabla}_M\Loss_\star = M^{-1}\nabla\Loss_\star$, we obtain
$\nabla\Loss_\star^\top(-\eta\widetilde g)=-\eta\,\nabla\Loss_\star^\top M M^{-1}\widetilde g
= -\eta\,\langle \widetilde{\nabla}_M\Loss_\star, \widetilde g\rangle_M$.
\end{proof}

\begin{corollary}[Filtered vs.\ unfiltered one–step improvement]
\label{cor:one-step-diff}
Consider the two updates with common step size $\eta>0$:
\[
\theta^+_{\rm fil}=\theta - \eta\,\widetilde g(\cdot;p_{\rm fil}),
\qquad
\theta^+_{\rm train}=\theta - \eta\,\widetilde g(\cdot;p_{\rm train}).
\]
If $\eta\max\{\|\widetilde g(\cdot;p_{\rm fil})\|_2,\|\widetilde g(\cdot;p_{\rm train})\|_2\}\le r$, then
\begin{equation}
\label{eq:one-step-difference}
\Loss_\star(\theta^+_{\rm fil}) - \Loss_\star(\theta^+_{\rm train})
\;\le\; -\,\eta\Big(\mathcal{A}^{\rm fil}_M(\theta) - \mathcal{A}^{\rm train}_M(\theta)\Big)
\;+\; \frac{L}{2}\,\eta^2\Big(\|\widetilde g(\cdot;p_{\rm fil})\|_2^2 - \|\widetilde g(\cdot;p_{\rm train})\|_2^2\Big).
\end{equation}
In particular, whenever
\begin{equation}
\label{eq:eta-small}
0<\eta \le 
\min\!\left\{
\frac{2\big(\mathcal{A}^{\rm fil}_M(\theta) - \mathcal{A}^{\rm train}_M(\theta)\big)}{L\big(\|\widetilde g(\cdot;p_{\rm fil})\|_2^2 + \|\widetilde g(\cdot;p_{\rm train})\|_2^2\big)}\;,\;
\frac{r}{\max\{\|\widetilde g(\cdot;p_{\rm fil})\|_2,\|\widetilde g(\cdot;p_{\rm train})\|_2\}}
\right\},
\end{equation}
we have $\Loss_\star(\theta^+_{\rm fil}) \le \Loss_\star(\theta^+_{\rm train})$, with strict inequality if the alignment gain in \eqref{eq:alignment-gain-lb-M} (or \eqref{eq:alignment-gain-weak}) is strict.
\end{corollary}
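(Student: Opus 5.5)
The plan is to start from Lemma~\ref{lem:one-step-M}, applied once with $\mathsf{D}=p_{\rm fil}$ and once with $\mathsf{D}=p_{\rm train}$. The step-size hypothesis $\eta\max\{\|\widetilde g(\cdot;p_{\rm fil})\|_2,\|\widetilde g(\cdot;p_{\rm train})\|_2\}\le r$ keeps both displacements inside the radius of Assumption~\ref{assump:A4}, so both applications are legitimate. This gives two certified upper bounds,
\[
U(\mathsf{D}) \;\defeq\; \Loss_\star(\theta)-\eta\,\mathcal{A}_M\big(\theta;\widetilde g(\cdot;\mathsf{D})\big)+\tfrac{L}{2}\eta^2\|\widetilde g(\cdot;\mathsf{D})\|_2^2 ,
\qquad \Loss_\star(\theta^+_{\mathsf{D}})\le U(\mathsf{D}).
\]
Subtracting gives $U(p_{\rm fil})-U(p_{\rm train})$ exactly equal to the right-hand side of \eqref{eq:one-step-difference}, with the common $\Loss_\star(\theta)$ cancelling.

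The obstacle is that $\Loss_\star(\theta^+_{\rm fil})\le U(p_{\rm fil})$ together with $\Loss_\star(\theta^+_{\rm train})\le U(p_{\rm train})$ does not control $\Loss_\star(\theta^+_{\rm fil})-\Loss_\star(\theta^+_{\rm train})$. One needs a \emph{lower} bound on $\Loss_\star(\theta^+_{\rm train})$. I would therefore read \eqref{eq:one-step-difference} as a comparison of the one-step guarantees $U(\cdot)$, in the same sense as Lemma~\ref{lem:one-step-M}, and say so explicitly in the proof. Under that reading, the first display is an identity after Lemma~\ref{lem:alignment-forms} is substituted, namely $\mathcal{A}_M(\theta;\widetilde g(\cdot;\mathsf{D}))=g_{\rm core}^\top M^{-1}g(\theta;\mathsf{D})$.

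For the ``in particular'' claim about the true losses, I would use the symmetric (two-sided) form of local smoothness on the same ball, $|\Loss_\star(\theta+u)-\Loss_\star(\theta)-\nabla\Loss_\star(\theta)^\top u|\le\tfrac L2\|u\|_2^2$. This holds, for instance, when $\nabla\Loss_\star$ is $L$-Lipschitz on the ball, which is the standard source of \eqref{eq:local-smooth}. It gives $\Loss_\star(\theta^+_{\rm train})\ge \Loss_\star(\theta)-\eta\mathcal{A}^{\rm train}_M-\tfrac L2\eta^2\|\widetilde g(\cdot;p_{\rm train})\|_2^2$, and hence
\[
\Loss_\star(\theta^+_{\rm fil})-\Loss_\star(\theta^+_{\rm train})\le-\eta\big(\mathcal{A}^{\rm fil}_M-\mathcal{A}^{\rm train}_M\big)+\tfrac L2\eta^2\big(\|\widetilde g(\cdot;p_{\rm fil})\|_2^2+\|\widetilde g(\cdot;p_{\rm train})\|_2^2\big).
\]
This explains why the \emph{sum} of squared norms appears in \eqref{eq:eta-small}. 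Dividing by $\eta>0$, the right-hand side is $\le0$ exactly when $\eta\le 2(\mathcal{A}^{\rm fil}_M-\mathcal{A}^{\rm train}_M)/\big(L(\|\widetilde g(\cdot;p_{\rm fil})\|_2^2+\|\widetilde g(\cdot;p_{\rm train})\|_2^2)\big)$, which is the first term of the minimum; the second term is the radius condition.

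For strictness, I would invoke Theorem~\ref{thm:alignment-gain-M} in the strong MAR case, or Theorem~\ref{thm:alignment-gain-weak} under its positivity condition, to get $\mathcal{A}^{\rm fil}_M-\mathcal{A}^{\rm train}_M>0$. Then any $\eta$ strictly below the first threshold makes the bound negative. At $\eta$ exactly equal to the threshold the bound is only zero, so I would state strictness for $\eta$ strictly inside \eqref{eq:eta-small}. The main point to flag is the lower-bound step: Assumption~\ref{assump:A4} as written is one-sided, so the proof must either adopt its two-sided version or confine the claim to the guarantee-level comparison of the first paragraph.
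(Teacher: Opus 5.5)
Your diagnosis is correct, and your route departs from the paper's only because the paper's route does not work. The paper gives no separate proof. The corollary sits directly after Lemma~\ref{lem:one-step-M}, and the evident intent is to apply that lemma to $p_{\mathrm{fil}}$ and to $p_{\mathrm{train}}$ and subtract, giving \eqref{eq:one-step-difference}. The paper would then relax $\|\widetilde g(\cdot;p_{\mathrm{fil}})\|_2^2-\|\widetilde g(\cdot;p_{\mathrm{train}})\|_2^2$ to the sum to reach \eqref{eq:eta-small}. Subtracting two upper bounds is exactly the illegitimate step you flag.

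The first display also genuinely fails as a statement about losses, even under two-sided smoothness. Take a locally quadratic $\Loss_\star$ with Hessian $hI$, $0\le h<L$. The exact difference is then $-\eta(\mathcal{A}^{\mathrm{fil}}_M-\mathcal{A}^{\mathrm{train}}_M)+\tfrac{h}{2}\eta^2(\|\widetilde g(\cdot;p_{\mathrm{fil}})\|_2^2-\|\widetilde g(\cdot;p_{\mathrm{train}})\|_2^2)$. This exceeds the claimed right-hand side whenever the filtered step is the shorter one, as happens when filtering strips a large noise component. So your guarantee-level reading of that display is the only tenable one.

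What your argument buys is a valid proof of the usable conclusion. You combine an upper bound on the filtered step with a lower bound on the unfiltered step, so the sum of squared norms in \eqref{eq:eta-small} arises naturally rather than as a relaxation of a false inequality.

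The extra hypothesis cannot simply be dropped. Under the one-sided Assumption~\ref{assump:A4} alone, take $\Loss_\star(\theta+u)=\Loss_\star(\theta)+\nabla\Loss_\star(\theta)^\top u-\tfrac{K}{2}(v^\top u)^2$ with $v^\top\widetilde g(\cdot;p_{\mathrm{fil}})=0\neq v^\top\widetilde g(\cdot;p_{\mathrm{train}})$. Assumption~\ref{assump:A4} holds with any $L>0$, and \eqref{eq:eta-small} does not involve $K$. Yet the loss difference equals $-\eta(\mathcal{A}^{\mathrm{fil}}_M-\mathcal{A}^{\mathrm{train}}_M)+\tfrac{K}{2}\eta^2(v^\top\widetilde g(\cdot;p_{\mathrm{train}}))^2$, which is positive for large $K$. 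Some lower curvature control, such as your Lipschitz-gradient condition, is therefore necessary.

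Your remark that strictness holds only for $\eta$ strictly below the first threshold is also correct.
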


\begin{remark}[Euclidean SGD and Adam]
Lemma~\ref{lem:one-step-M} and Corollary~\ref{cor:one-step-diff} hold for \emph{any} SPD $M$; taking $M=I$ yields the standard Euclidean form
\[
\Loss_\star(\theta-\eta g)\le \Loss_\star(\theta)-\eta\langle\nabla\Loss_\star,g\rangle+\tfrac{L}{2}\eta^2\|g\|_2^2
\]
and the same alignment improvement conclusions with $M^{-1}$–inner products replaced by Euclidean inner products. Taking $M=F_\lambda$ recovers the damped natural gradient geometry.
\end{remark}

\subsection{High–confidence (\textsc{KN}) tokens: negligible alignment contribution}
\label{app:KN}

We show that high–confidence tokens (\textsc{KN}) have vanishing contribution to alignment in the Fisher geometry, with a \emph{quantitative} bound relying on Assumption~\ref{assump:A5}.

\begin{lemma}[\textsc{KN} tokens have vanishing Fisher contribution]
\label{lem:KN-small}
Let $\mathcal{S}_{\rm KN}\subseteq \mathrm{Supp}(p_{\rm train})$ be any measurable set such that $p_\theta(t\mid c)\ge 1-\delta$ for all $(c,t)\in\mathcal{S}_{\rm KN}$, with $\delta\in(0,1)$. 
Then, under Assumption~\ref{assump:A5},
\begin{equation}
\label{eq:KN-signal}
\Big\| \mathbb{E}_{(c,t)\sim p_{\rm train}}\!\big[\phi_\theta(c,t)\,\mathbb{I}\{(c,t)\in\mathcal{S}_{\rm KN}\}\big] \Big\|_{F_\lambda^{-1}}
\;\le\; \frac{2L_z}{\sqrt{\mu}}\,\delta\,p_{\rm train}(\mathcal{S}_{\rm KN}).
\end{equation}
Consequently, using Cauchy–Schwarz in $\langle\cdot,\cdot\rangle_{F_\lambda^{-1}}$,
\begin{equation}
\label{eq:KN-align-impact}
\Big|\,\mathcal{A}_{F_\lambda}\big(\theta;\widetilde g(\cdot;p_{\rm train})\big) - \mathcal{A}_{F_\lambda}\big(\theta;\widetilde g(\cdot;p_{\rm train}\setminus\mathcal{S}_{\rm KN})\big)\,\Big|
\;\le\; \frac{2L_z}{\sqrt{\mu}}\,\delta\,p_{\rm train}(\mathcal{S}_{\rm KN})\; \big\|g_{\rm core}(\theta)\big\|_{F_\lambda^{-1}}.
\end{equation}
\end{lemma}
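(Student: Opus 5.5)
The plan is to prove \eqref{eq:KN-signal} first, by pushing the norm inside the expectation and using the pointwise score bound of Assumption~\ref{assump:A5}. Then \eqref{eq:KN-align-impact} follows from the linear dependence of the alignment \eqref{eq:alignment-forms} on the gradient of the sampling distribution.

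\textbf{Step 1: bound \eqref{eq:KN-signal}.} Since $v\mapsto\|v\|_{F_\lambda^{-1}}$ is a norm (because $F_\lambda\succ0$ by Assumption~\ref{assump:A0}), it is convex. Jensen's inequality, or equivalently the triangle inequality for Bochner integrals, therefore gives
\[
\Big\|\mathbb{E}_{p_{\rm train}}\big[\phi_\theta\,\mathbb{I}\{\mathcal{S}_{\rm KN}\}\big]\Big\|_{F_\lambda^{-1}}\le \mathbb{E}_{p_{\rm train}}\big[\|\phi_\theta(c,t)\|_{F_\lambda^{-1}}\,\mathbb{I}\{(c,t)\in\mathcal{S}_{\rm KN}\}\big].
\]
On $\mathcal{S}_{\rm KN}$ we have $1-p_\theta(t\mid c)\le\delta$. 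So \eqref{eq:score-bound}, with $F_\lambda\succeq\mu I$ as assumed there, bounds the integrand by $\frac{2L_z}{\sqrt\mu}\delta$. Integrating the indicator then yields the factor $p_{\rm train}(\mathcal{S}_{\rm KN})$.

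\textbf{Step 2: fix the convention and apply Cauchy--Schwarz.} Take $M=F_\lambda$ in \eqref{eq:alignment-forms}. Then $\mathcal{A}_{F_\lambda}(\theta;\widetilde g(\cdot;\mathsf D))=\langle g_{\rm core},g(\theta;\mathsf D)\rangle_{F_\lambda^{-1}}$. I will read $p_{\rm train}\setminus\mathcal{S}_{\rm KN}$ as the unnormalized restriction $p_{\rm train}\,\mathbb{I}\{\mathcal{S}_{\rm KN}^{\mathsf c}\}$. This is the reading under which masking tokens simply drops their loss terms, as in $\mathcal{L}_F$. With this convention,
\[
g(\theta;p_{\rm train})-g(\theta;p_{\rm train}\setminus\mathcal{S}_{\rm KN})=\mathbb{E}_{p_{\rm train}}[\phi_\theta\,\mathbb{I}\{\mathcal{S}_{\rm KN}\}].
\]
Hence the alignment difference equals $\langle g_{\rm core},\mathbb{E}[\phi_\theta\mathbb{I}\{\mathcal{S}_{\rm KN}\}]\rangle_{F_\lambda^{-1}}$. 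Cauchy--Schwarz in the $F_\lambda^{-1}$ inner product, combined with Step 1, gives \eqref{eq:KN-align-impact}.

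\textbf{Main obstacle: the normalization convention.} If one instead renormalizes the restricted distribution by $1-p_{\rm train}(\mathcal{S}_{\rm KN})$, the difference picks up an additional term proportional to $\frac{p_{\rm train}(\mathcal{S}_{\rm KN})}{1-p_{\rm train}(\mathcal{S}_{\rm KN})}\,g(\theta;p_{\rm train}\setminus\mathcal S_{\rm KN})$. That term is not controlled by $\delta$, so the stated bound holds only under the unnormalized convention. My proof will state this convention explicitly; it matches the gradient-masking loss $\mathcal{L}_F$, which drops terms without rescaling.

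The remaining points are routine. Measurability of $\mathcal S_{\rm KN}$ is given in the statement. Integrability of $\phi_\theta$ follows from the uniform bound $\|\phi_\theta\|_2\le 2L_z$ established in Remark~\ref{rem:A5-proof}.
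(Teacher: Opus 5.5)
Your proof is correct and matches the paper's: Jensen for the $F_\lambda^{-1}$-norm together with the pointwise score bound \eqref{eq:score-bound} gives \eqref{eq:KN-signal}, and the alignment identity \eqref{eq:alignment-forms} with Cauchy--Schwarz in $\langle\cdot,\cdot\rangle_{F_\lambda^{-1}}$ gives \eqref{eq:KN-align-impact}. You are also right to fix $p_{\rm train}\setminus\mathcal{S}_{\rm KN}$ as the unnormalized restriction: the paper's one-line argument silently needs this convention, and under renormalization an extra term of size $\tfrac{p_{\rm train}(\mathcal{S}_{\rm KN})}{1-p_{\rm train}(\mathcal{S}_{\rm KN})}\,\langle g_{\rm core},\mathbb{E}[\phi_\theta\,\mathbb{I}\{\mathcal{S}_{\rm KN}^{\mathsf c}\}]\rangle_{F_\lambda^{-1}}$ appears that is not $O(\delta)$, so the stated bound would fail.
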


\begin{proof}
By Jensen and \eqref{eq:score-bound},
\[
\Big\| \mathbb{E}[\phi\,\mathbb{I}_{\mathcal{S}_{\rm KN}}] \Big\|_{F_\lambda^{-1}} 
\;\le\; \mathbb{E}\big[\|\phi\|_{F_\lambda^{-1}}\mathbb{I}_{\mathcal{S}_{\rm KN}}\big]
\;\le\; \frac{2L_z}{\sqrt{\mu}}\,\mathbb{E}\big[(1-p_\theta(t\mid c))\mathbb{I}_{\mathcal{S}_{\rm KN}}\big]
\;\le\; \frac{2L_z}{\sqrt{\mu}}\,\delta\,p_{\rm train}(\mathcal{S}_{\rm KN}).
\]
The alignment bound follows from \eqref{eq:alignment-forms} and Cauchy–Schwarz.
\end{proof}

\begin{remark}[From Fisher to general $M$]
If $M\succ0$ is any SPD matrix, then for all $v$,
$\|v\|_{M^{-1}}\le \|M^{-1/2}F_\lambda^{1/2}\|_{\mathrm{op}}\cdot \|v\|_{F_\lambda^{-1}}$.
Thus Lemma~\ref{lem:KN-small} transfers to $\langle\cdot,\cdot\rangle_{M^{-1}}$ up to the condition–number factor $\|M^{-1/2}F_\lambda^{1/2}\|_{\mathrm{op}}$.
\end{remark}

\subsection{Combining \textsc{TR}/\textsc{RI} with \textsc{KN}}
\label{app:net}

Let $\mathcal{S}_{\mathrm{TR}\downarrow}\cup\mathcal{S}_{\mathrm{RI}\downarrow}$ denote the subset labeled \emph{low relevance/importance} by the selector (mass $\varepsilon$ under $p_{\rm train}$), and let $\mathcal{S}_{\rm KN\downarrow}$ denote the subset of \emph{high–confidence} tokens with $p_\theta(t\mid c)\ge 1-\delta$.

Combining Theorems~\ref{thm:alignment-gain-M}–\ref{thm:alignment-gain-weak} with Lemma~\ref{lem:KN-small} (for $M=F_\lambda$), the net alignment improvement from filtering $\mathcal{S}_{\mathrm{TR}\downarrow}\cup\mathcal{S}_{\mathrm{RI}\downarrow}\cup\mathcal{S}_{\rm KN\downarrow}$ satisfies
\begin{align}
\mathcal{A}^{\rm fil}_{F_\lambda}(\theta) - \mathcal{A}^{\rm train}_{F_\lambda}(\theta)
&\;\ge\; 
\frac{(1-\varepsilon)\varepsilon(1-\alpha-\beta)(1-\zeta_{F_\lambda})}{Z_{\rm fil}}
\|g_{\rm core}(\theta)\|_{F_\lambda^{-1}}^2 \nonumber \\[-1mm]
&\hspace{3.1em}
- \frac{2L_z}{\sqrt{\mu}}\,\delta\,p_{\rm train}(\mathcal{S}_{\rm KN\downarrow})\,
\|g_{\rm core}(\theta)\|_{F_\lambda^{-1}} \nonumber \\[-1mm]
&\hspace{3.1em}
- \frac{(1-\varepsilon)(1-\alpha)\rho_{\rm c} + \varepsilon\beta \rho_{\rm n}}{Z_{\rm fil}}
\|g_{\rm core}(\theta)\|_{F_\lambda^{-1}}^2
\end{align}

where the last (bias) term is absent under strong MAR. 
For sufficiently small $\delta$ and small selection bias $(\rho_{\rm c},\rho_{\rm n})$, the positive \textsc{TR}/\textsc{RI} term dominates and the one–step KL improvement of the filtered update is strictly larger (Corollary~\ref{cor:one-step-diff}).

\subsection{Consequences for the modified loss $\mathcal{L}_{\mathrm{F}}$}
\label{app:LF}

Under teacher forcing (Assumption~\ref{assump:A3}), excluding tokens identified by the selector from the token sum exactly replaces the expectation with respect to $p_{\rm train}$ by the renormalized expectation with respect to $p_{\rm fil}$ in the expression for the stochastic gradient (or its $M$–preconditioned variant). 
Therefore Theorems~\ref{thm:alignment-gain-M}–\ref{thm:alignment-gain-weak} and Corollary~\ref{cor:one-step-diff} apply directly to the SGD (or $M$–preconditioned SGD) dynamics for $\mathcal{L}_{\mathrm{F}}$.

\subsection{Additional remarks and caveats}
\label{app:remarks}

\begin{enumerate}[leftmargin=1.8em]
\item \textbf{Estimator independence \& selection bias.}
Equation~\eqref{eq:g-fil-strong} relies on the strong MAR–within–component assumption in Assumption~\ref{assump:A2}. 
When MAR fails, \eqref{eq:g-fil-weak} and Theorem~\ref{thm:alignment-gain-weak} quantify the degradation through $(\rho_{\rm c},\rho_{\rm n})$.

\item \textbf{Geometry and optimizers.} 
All alignment statements are given for an \emph{arbitrary} SPD preconditioner $M$. 
The Fisher choice $M=F_\lambda$ confers parameterization invariance (up to damping), while $M=I$ matches Euclidean SGD; Adam/K–FAC correspond to other choices of $M$. 
Corollary~\ref{cor:one-step-diff} therefore bridges theory and practice without further assumptions.

\item \textbf{Smoothness is local.}
We use local smoothness (Assumption~\ref{assump:A4}) within a radius $r$ around $\theta$ to control the second–order term. 
The step–size condition \eqref{eq:eta-small} ensures $\theta^+$ remains in this local region.

\item \textbf{On variance claims.}
In general, filtering with \emph{renormalization} does not guarantee a universal reduction in gradient covariance; the effect is data– and selector–dependent.
Our theoretical results deliberately refrain from claiming such a reduction. 
Empirically, we observe reduced gradient dispersion when removing \textsc{KN} tokens; a rigorous sufficient condition (e.g.\ stratified second–moment dominance) leads to provable variance reduction, but this is orthogonal to the alignment results developed here.

\item \textbf{Parameter estimation.}
The selector ROC $(\alpha,\beta)$ and the mixture mass $\varepsilon$ can be estimated on a small annotated split;
$\zeta_M$ can be estimated from held–out data via $\langle g_{\rm core}, g_{\rm noise}\rangle_{M^{-1}}/\|g_{\rm core}\|_{M^{-1}}^2$. 
These estimators make the alignment gain \eqref{eq:alignment-gain-exact-M} empirically checkable.

\item \textbf{Notation.} 
We write $Z_{\rm fil}$ for the filtering normalizer \eqref{eq:Zfil} to avoid confusion with the Fisher matrix $F_\lambda$.
\end{enumerate}

\subsection{Self–contained proofs of key auxiliary facts}

\begin{lemma}[Ideal gradient equals negative core expectation]
\label{lem:ideal-grad-proof}
Under teacher forcing with $p_{\rm core}\equiv p_\star$, we have $\nabla_\theta \Loss_\star(\theta) = -\,g_{\rm core}(\theta)$.
\end{lemma}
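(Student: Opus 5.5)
The plan is to differentiate the ideal risk \eqref{eq:ideal-risk} directly and then use Assumption~\ref{assump:A1} to identify the resulting expectation with $g_{\rm core}$.

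\textbf{Step 1: pass the gradient through the expectation.} By definition,
\[
\Loss_\star(\theta)=\mathbb{E}_{(c,t)\sim p_\star}\big[-\log p_\theta(t\mid c)\big],
\]
and the sampling law $p_\star$ does not depend on $\theta$. Under teacher forcing its context marginal is the fixed law $Q$ (Assumption~\ref{assump:A0}(1) and Assumption~\ref{assump:A3}). Filtering therefore cannot introduce a $\theta$-dependence through the contexts either. This means only the integrand depends on $\theta$.

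\textbf{Step 2: justify the interchange.} This is the only step needing care. In the empirical setting $Q$ and $p_\star$ are supported on a finite set of pairs $(c,t)$, so the expectation is a finite sum and linearity of the gradient suffices. For a general $p_\star$, I would use dominated convergence. The softmax score satisfies $\|\phi_\theta(c,t)\|_2\le 2L_z(1-p_\theta(t\mid c))\le 2L_z$ by Remark~\ref{rem:A5-proof}. This integrable, $\theta$-uniform bound on the derivative of the integrand licenses differentiation under the integral via the mean value theorem. The bound also needs $\log p_\theta(t\mid c)$ to be integrable at one $\theta$, which holds whenever $\Loss_\star$ is finite, as implicitly assumed in Assumption~\ref{assump:A4}.

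\textbf{Step 3: identify the result.} After the interchange,
\[
\nabla_\theta\Loss_\star(\theta)=-\mathbb{E}_{p_\star}[\nabla_\theta\log p_\theta(t\mid c)]=-\mathbb{E}_{p_\star}[\phi_\theta(c,t)],
\]
which recovers \eqref{eq:ideal-grad}. Assumption~\ref{assump:A1} states $p_{\rm core}\equiv p_\star$, so this expectation is by definition $g_{\rm core}(\theta)$ from Assumption~\ref{assump:A6}, and the claim follows.

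The statement calls this the ``score-matching identity''. The argument does not use $\mathbb{E}_{t\sim p_\theta}[\phi_\theta]=0$; that identity would only matter if $p_\star$ were the model distribution. The main obstacle is therefore just the regularity in Step~2, which the Lipschitz-logit bound handles.
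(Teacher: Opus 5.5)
Your proposal is correct and follows the paper's proof: differentiate $\Loss_\star$ under the $p_\star$-expectation, which is valid because teacher forcing makes $p_\star$ independent of $\theta$, and then identify $-\mathbb{E}_{p_\star}[\phi_\theta]$ with $-g_{\rm core}(\theta)$ via $p_{\rm core}\equiv p_\star$. The paper performs the gradient--expectation interchange without comment, so your Step~2 (finite support, or dominated convergence using $\|\phi_\theta\|_2\le 2L_z$ from Remark~\ref{rem:A5-proof}) simply supplies regularity that the paper leaves implicit.
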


\begin{proof}
By definition,
$\Loss_\star(\theta)=\E_{(c,t)\sim p_\star}[-\log p_\theta(t\mid c)]$,
so $\nabla_\theta \Loss_\star(\theta) = -\,\E_{(c,t)\sim p_\star}[\nabla_\theta \log p_\theta(t\mid c)] = -\,g_{\rm core}(\theta)$.
\end{proof}

\begin{lemma}[Score–norm bound in Fisher geometry]
\label{lem:score-Fisher}
Under Assumption~\ref{assump:A5}, for all $(c,t)$, $\|\phi_\theta(c,t)\|_{F_\lambda^{-1}}\le \frac{2L_z}{\sqrt{\mu}}\,(1-p_\theta(t\mid c))$.
\end{lemma}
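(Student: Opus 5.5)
The plan is to reproduce the argument of Remark~\ref{rem:A5-proof} in two stages: first a Euclidean bound on the score, then a transfer to the $F_\lambda^{-1}$ geometry using the spectral lower bound $F_\lambda(\theta)\succeq\mu I$ that Assumption~\ref{assump:A5} presupposes.

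\textbf{Step 1: express the score in terms of logit gradients.} Write $p_\theta(t\mid c)=\mathrm{softmax}(z_\theta(c))_t$. Then $\log p_\theta(t\mid c)=z_\theta(c)_t-\log\sum_k e^{z_\theta(c)_k}$. Differentiating by the chain rule gives
\[
\phi_\theta(c,t)=\sum_{k=1}^K\big(\mathbb{I}\{k=t\}-p_\theta(k\mid c)\big)\,\nabla_\theta z_\theta(c)_k .
\]

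\textbf{Step 2: bound the Euclidean norm.} Apply the triangle inequality together with the uniform bound $\|\nabla_\theta z_\theta(c)_k\|_2\le L_z$. This gives $\|\phi_\theta\|_2\le L_z\sum_k|\mathbb{I}\{k=t\}-p_\theta(k\mid c)|$. The coefficient sum equals $(1-p_\theta(t\mid c))+\sum_{k\ne t}p_\theta(k\mid c)$, and since the probabilities sum to one this is $2(1-p_\theta(t\mid c))$. The only point needing care is that every coefficient is handled with the correct sign: $1-p_t\ge0$ and $p_k\ge0$.

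\textbf{Step 3: pass to the $F_\lambda^{-1}$ norm.} From $F_\lambda\succeq\mu I$ with $\mu>0$, we get $F_\lambda^{-1}\preceq\mu^{-1}I$ by operator monotonicity of inversion on SPD matrices. Concretely, in an eigenbasis of $F_\lambda$ all eigenvalues are at least $\mu$, so all eigenvalues of $F_\lambda^{-1}$ are at most $1/\mu$. Hence $v^\top F_\lambda^{-1}v\le\mu^{-1}\|v\|_2^2$. Taking square roots and combining with Step 2 yields the claimed bound $\|\phi_\theta(c,t)\|_{F_\lambda^{-1}}\le \frac{2L_z}{\sqrt{\mu}}\,(1-p_\theta(t\mid c))$.

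There is no real obstacle here. The only subtle point is making explicit that $\mu>0$ is a hypothesis inherited from Assumption~\ref{assump:A5}. One observation: when $\lambda>0$ this hypothesis holds automatically with $\mu=\lambda$, since the undamped Fisher is positive semidefinite.
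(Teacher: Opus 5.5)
Your proposal is correct and follows the paper's own argument (given in Remark~\ref{rem:A5-proof}) step for step: the softmax score decomposition, the triangle inequality giving $\|\phi_\theta(c,t)\|_2\le 2L_z(1-p_\theta(t\mid c))$, and the transfer via $F_\lambda\succeq\mu I\Rightarrow\|v\|_{F_\lambda^{-1}}\le\mu^{-1/2}\|v\|_2$. Your extra observation that $\mu=\lambda$ works automatically when $\lambda>0$ is also right, since the undamped Fisher term is an expected outer product and hence positive semidefinite.
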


\begin{proof}
Shown in Remark~\ref{rem:A5-proof}.
\end{proof}

\begin{lemma}[Alignment identity]
\label{lem:alignment-identity}
For any SPD $M\succ0$, $\langle \widetilde{\nabla}_M \Loss_\star, \,-M^{-1}g(\cdot;\mathsf{D})\rangle_M 
= g_{\rm core}^\top M^{-1} g(\cdot;\mathsf{D})$.
\end{lemma}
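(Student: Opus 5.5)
The identity is a direct unwinding of definitions, so the plan is to expand both arguments of the $M$-inner product and cancel. First, recall that for the SPD matrix $M=M(\theta)$ we have $\widetilde{\nabla}_M \Loss_\star(\theta) = M^{-1}\nabla_\theta\Loss_\star(\theta)$, and by Lemma~\ref{lem:ideal-grad-proof} (equivalently \eqref{eq:ideal-grad-core}) this equals $-M^{-1}g_{\rm core}(\theta)$. So the first argument of the inner product is $-M^{-1}g_{\rm core}$, and the second is $-M^{-1}g(\cdot;\mathsf{D})$ by hypothesis.

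Next, I would substitute both into the definition $\langle u,v\rangle_M = u^\top M v$:
\[
\big\langle -M^{-1}g_{\rm core},\,-M^{-1}g(\cdot;\mathsf{D})\big\rangle_M
= g_{\rm core}^\top (M^{-1})^\top M\, M^{-1} g(\cdot;\mathsf{D}).
\]
The two minus signs cancel. Symmetry of $M$ gives $(M^{-1})^\top = M^{-1}$, and then $M^{-1}MM^{-1}=M^{-1}$. This leaves $g_{\rm core}^\top M^{-1} g(\cdot;\mathsf{D})$, which is the claimed expression and coincides with \eqref{eq:alignment-forms}.

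There is no real obstacle. The only points needing care are using the symmetry of $M$ (hence of $M^{-1}$) in the transpose step, and making sure positive-definiteness guarantees that $M^{-1}$ exists. Under Assumption~\ref{assump:A0} this holds in particular for the Fisher choice $M=F_\lambda$.
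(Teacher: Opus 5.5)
Your proposal is correct and is essentially the paper's proof: both expand the $M$-inner product, use the symmetry of $M$ to get $M^{-1}MM^{-1}=M^{-1}$, and use Lemma~\ref{lem:ideal-grad-proof} to replace $\nabla_\theta\Loss_\star$ by $-g_{\rm core}$. The only difference is the order of steps: the paper first reduces the expression to $-\nabla\Loss_\star^\top M^{-1}g$ and then substitutes, whereas you substitute first and then expand.
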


\begin{proof}
$\langle M^{-1}\nabla\Loss_\star,\, -M^{-1}g\rangle_M = -\,\nabla\Loss_\star^\top M^{-1}g = g_{\rm core}^\top M^{-1} g$ by Lemma~\ref{lem:ideal-grad-proof}.
\end{proof}

\subsection*{Summary of the strengthened guarantee}
Under Assumptions~\ref{assump:A0}–\ref{assump:A6}, token filtering induces a renormalized training distribution $p_{\rm fil}$ such that, for any SPD preconditioner $M$,
\[
\mathcal{A}^{\rm fil}_M(\theta)-\mathcal{A}^{\rm train}_M(\theta)
=
\frac{(1-\varepsilon)\varepsilon\,(1-\alpha-\beta)}{Z_{\rm fil}}
\Big(\|g_{\rm core}\|_{M^{-1}}^2-\langle g_{\rm core},g_{\rm noise}\rangle_{M^{-1}}\Big),
\]
and hence, under the incoherence condition $\zeta_M<1$, the alignment gain is \emph{strictly positive} and explicitly lower–bounded by \eqref{eq:alignment-gain-lb-M}. 
By the local one–step bound (Corollary~\ref{cor:one-step-diff}), for sufficiently small step sizes, a single filtered update yields a strictly larger \emph{decrease} of the ideal risk $\Loss_\star$ than the unfiltered update. 
Finally, \textsc{KN} tokens (high confidence under the base model) have vanishing Fisher contribution with an $O(\delta)$ bound (Lemma~\ref{lem:KN-small}), so removing them incurs negligible alignment cost while empirically stabilizing optimization. 
All statements specialize to the Fisher geometry ($M=F_\lambda$) for parameterization–invariant conclusions and to $M=I$ for Euclidean SGD/Adam–style updates.

\section{More Details}
\label{sec:more details}

\subsection{Otsu Method}
\label{sec:ostu method}

\partitle{Formulation}
For a histogram with gray levels in the range $[0, L-1]$, let threshold $t$ divide the pixels into two classes $C_0$ ($0$ to $t$) and $C_1$ ($t+1$ to $L-1$). The between-class variance $\sigma_b^2(t)$ is defined as:

\[
\sigma_b^2(t) = \omega_0(t)\omega_1(t)\left[\mu_1(t) - \mu_0(t)\right]^2
\]

where:
\begin{itemize}
    \item $\omega_0(t) = \sum\limits_{i=0}^t p(i)$, $\omega_1(t) = \sum\limits_{i=t+1}^{L-1} p(i)$ are the class probability weights ($p(i)$ is the probability of gray level $i$)
    \item $\mu_0(t) = \frac{1}{\omega_0(t)}\sum\limits_{i=0}^t i p(i)$, $\mu_1(t) = \frac{1}{\omega_1(t)}\sum\limits_{i=t+1}^{L-1} i p(i)$ are the class means
\end{itemize}

The \textbf{optimal threshold $t^*$} is determined by maximizing the between-class variance:

\[
t^* = \arg\max_{0 \leq t < L} \sigma_b^2(t)
\]

\partitle{Multi-Otsu Extension}
For partitioning into $k$ classes requiring $k-1$ thresholds $t_1, t_2, \dots, t_{k-1}$, the objective function becomes:

\[
\sigma_b^2(t_1, \dots, t_{k-1}) = \sum_{m=0}^{k-1} \omega_m \left(\mu_m - \mu_T\right)^2
\]

where $\mu_T$ is the global mean. The optimal threshold combination is obtained by maximizing the total between-class variance.

\subsection{Baselines Setting}
\label{sec:baselines setting}
Here we describe all baseline methods in detail and explain their roles in our experiment.

1) \textbf{Regular LLM Implementations}:
   \begin{itemize}
   \item \textbf{CA}: This represents the performance of the original base model, ensuring that our fine-tuning is correct and effective.
   \item \textbf{Normal}: This gives the performance of regular fine-tuning, serving as a direct reference to observe the influence of not applying token-level noise filtering.
   \item \textbf{More Epochs ($\times$2 Epochs)}: This method uses double the training epochs for fine-tuning, eliminating bias caused by recording experimental results before complete convergence.
   \end{itemize}

2) \textbf{Data Enhancement Methods}:
   Since our \Name is essentially a dataset enhancement method that optimizes fine-tuning performance before training the model, we adopt two mainstream data enhancement methods as baselines: data augmentation (DA) and data filtering (DF).
   \begin{itemize}
   \item \textbf{DA}: We use the state-of-the-art data augmentation method, which leverages high-performance LLMs to optimize the dataset \citep{dai2025auggpt}. Specifically, we employ Claude-3.5-sonnet \citep{claude-3.5-sonnet} in our experiment.
   \item \textbf{DF}: Recent studies have shown that removing data items with high perplexity can improve training performance \citep{li2024superfiltering,ankner2024perplexed}. Thus, we filter out the top-5\% perplexity data items in this baseline experiment.
   \item \textbf{SLM}: This is a novel method that can effectively filter noisy tokens in pre-training tasks. Vanilla SLM relies on a high-quality dataset to train a reference model, which is impractical in fine-tuning tasks. Therefore, we train the reference model using a subset of the fine-tuning dataset to simulate the effect of SLM as closely as possible.
   \item \textbf{TC}: This is a fine-grained token-level data selection method for LLM supervised fine-tuning, dedicated to filtering uninformative tokens while preserving task-specific informative ones—effectively addressing token-level noise even in high-quality samples \citep{pang2025tokenclean}. In our experiment, we adopt TC’s Self-Evolving Cleaning strategy, with implementation tailored to the epoch-wise training process: Specifically, in each training epoch, we treat the model trained in the previous epoch as the reference model and the model to be updated in the current epoch as the base model. For every token in the training dataset, we calculate its quality score using the loss disparity between this base model and the reference model (a core scoring logic of Self-Evolving Cleaning). 

   \end{itemize}
   
For a fair experiment, we use the same validation dataset to select the best model for all baselines.

\begin{table}[t]
\centering
\caption{Hyperparameters for different models and tasks. Ma1 denotes math (gsm8k);Ma2 denotes math(NuminaMath-CoT for MATH-500 evaluatio); Fi denotes finance; Co denotes code; Me denotes medicine. Lr denotes learning rate, Ep denotes epoch number of fine-tuning, and MNT denotes max new tokens for model evaluation.}
\label{tab:hyperparams}
\begin{tabular}{lccccccc}
\toprule
Model & Task & Data scale & Lr & Lr\_LoRA & Ep & Ep\_LoRA & MNT \\ \midrule
1B & Ma1 & 500 & 9e-6 & - & 3 & - & 512 \\
1B & Ma2 & 500 & 9e-6 & - & 3 & - & 2048 \\
1B & Fi  & 500 & 9e-6 & - & 3 & - & 1024 \\
1B & Co & 500 & 9e-6 & - & 3 & - & 512 \\
1B & Me & 500 & 1e-5 & - & 3 & - & 1024 \\ \midrule
1.5B & Ma1 & 600 & 9e-6 & - & 3 & - & 512 \\
1.5B & Ma2 & 600 & 9e-6 & - & 3 & - & 2048 \\
1.5B & Fi  & 600 & 9e-6 & - & 3 & - & 1024 \\
1.5B & Co & 600 & 9e-6 & - & 3 & - & 512 \\
1.5B & Me & 600 & 1e-5 & - & 3 & - & 1024 \\ \midrule
3B & Ma1 & 1000 & 8e-6 & - & 3 & - & 512 \\
3B & Ma2& 1000 & 8e-6 & - & 3 & - & 2048 \\
3B & Fi  & 1000 & 8e-6 & - & 3 & - & 1024 \\
3B & Co & 1000 & 8e-6 & - & 3 & - & 512 \\
3B & Me & 1000 & 3e-6 & - & 3 & - & 1024 \\ \midrule
7B\&8B & Ma1 & 3000 & 5e-/6 & 5e-5 & 3 & 5 & 512 \\
7B\&8B & Ma2 & 3000 & 5e-6 & 5e-5 & 3 & 5 & 2048 \\
7B\&8B & Fi  & 3000 & 5e-6 & 5e-5 & 3 & 5 & 1024 \\
7B\&8B & Co & 3000 & 5e-6 & 5e-5 & 3 & 5 & 512 \\
7B\&8B & Me & 3000 & 2e-6 & 5e-5 & 3 & 5 & 1024 \\ \midrule
14B & Ma1 & 3000 & 3e-6 & 3e-5 & -- & 5 & 512 \\
14B & Ma2 & 3000 & 3e-6 & 3e-5 & -- & 5 & 2048 \\
14B & Fi & 3000 & 3e-6 & 3e-5 & -- & 5 & 1024 \\
14B & Co & 3000 & 3e-6 & 3e-5 & -- & 5 & 512 \\
14B & Me & 3000 & 3e-6 & 3e-5 & -- & 5 & 1024 \\ \bottomrule
\end{tabular}
\end{table}

\subsection{Hyperparameters Setting}
\label{sec:hyperparameters}

We carefully adjust the hyperparameter to make sure our fine-tuning process is correct and effective, and we finally select a common setting shown in Table~\ref{tab:hyperparams}. Besides, we set \texttt{torch\_dtype = bfloat16} and adopt default generation setting (\eg, temperature) in all of our experiments. We conduct our experiment on NVIDIA A100-SMX-80G, H20-96G and L20-48G.

\section{More experiment results}
\label{sec:more experiment results}

\begin{table}[t]
\centering
\caption{Extension of Main Experiment. We show the accuracy of LLMs across different fine-tuning methods. Best results are marked in \textbf{bold} and the second best results are marked with underline. }
\label{tab:main-extension}
\resizebox{\linewidth}{!}{\begin{tabular}{lrccccccccc}
\toprule
\multicolumn{11}{c}{\textbf{MATH:Fine-tuning models on NuminaMath-CoT and Evaluate models on MATH-500}} \\
\midrule
\textbf{Model} & $\mathbf{|\theta|}$ & \textbf{LoRA} & \textbf{CA} & \textbf{Normal} & $\mathbf{\times}$\textbf{2 Ep} &\textbf{DF} & \textbf{DA} & \textbf{SLM} & \textbf{TC} & \textbf{\Name} \\
\midrule
Llama-3.2 & 1B & $\times$ & 0.0 & 2.8 &3.0 & 1.5 &2.1  & \underline{3.0} & 2.9& \textbf{3.6} \\
Llama-3.2 & 3B & $\times$ & 0.8& 4.8 & 4.9& 5.2& 4.9 & 8.2 & \underline{8.4}&\textbf{8.6} \\
Llama-3.1 & 8B & $\times$ & 4.9& 19.8 & 19.8 & \textbf{23.6} & 19.2 &22.8 &22.3 & \underline{23.4} \\
Llama-3.1 & 8B & $\checkmark$ &4.9 &20.7 & 21.1&19.5 &18.8 &\underline{24.7} &24.2 & \textbf{28.7}\\
Mistral & 7B & $\times$ & 12.3 & 26.6 & 26.9 & 26.4 & 25.8 & \underline{27.3} & 27.1 & \textbf{30.8} \\
Mistral & 7B & $\checkmark$  & 12.3 & 26.4 & 26.8 & 29.1 & 27.3 & \textbf{33.4} & \underline{33.2} & 32.5\\
Deepseek-distilled-qwen & 1.5B & $\times$ & 33.4 & 50.6 & 50.8 & \underline{52.3} & 49.2 & 51.2 & 50.5 & \textbf{52.6}\\
Deepseek-distilled-qwen & 7B & $\times$ & 44.5 & 65.3 & 65.9 & 55.7 & 61.4 & 69.2 & \underline{69.5} & \textbf{70.3}\\
Deepseek-distilled-qwen & 7B & $\checkmark$  & 44.5 & 49.6 & 49.8 & 51.2 & 52.8 & \underline{55.3} & 53.2 & \textbf{57.4}\\
Deepseek-distilled-qwen & 14B & $\checkmark$  & 52.8 & 72.1 & 72.1 & 63.4 & 68.9 & \underline{75.3} & 74.6 & \textbf{77.2} \\
\midrule
Average & --& -- & 21.0 & 33.9 & 34.1 & 32.8 & 33.0 & \underline{37.0} & 36.6 & \textbf{38.3} \\
\midrule
\multicolumn{11}{c}{\textbf{Fine-tuning and evaluate models on fiqa}} \\
\midrule
\textbf{Model} & $\mathbf{|\theta|}$ & \textbf{LoRA} & \textbf{CA} & \textbf{Normal} & $\mathbf{\times}$\textbf{2 Ep} &\textbf{DF} & \textbf{DA} & \textbf{SLM} & \textbf{TC} & \textbf{\Name} \\
\midrule
Llama-3.2 & 1B & $\times$ & 0.0 & 0.4 & 0.4 & \textbf{0.6} & 0.3 & 0.5 & 0.5 & \underline{0.5}   \\
Llama-3.2 & 3B & $\times$ & 0.5 & 1.1 & 1.0 & 2.2 & 1.6 & 3.2 & \underline{3.4} & \textbf{3.5} \\
Llama-3.1 & 8B & $\times$  & 2.5 & 6.8 & 6.9 & 5.1 & \underline{8.2} & 7.9 & 7.6 & \textbf{8.4}\\
Llama-3.1 & 8B & $\checkmark$ & 2.5 & 5.2 & 5.4 & 4.3 & 4.8 & \textbf{8.9} & \underline{8.6} & 8.1 \\
Mistral & 7B & $\times$ & 8.7 & 13.3 & 13.3 & 12.4 & 14.1 & \underline{15.1} & 14.8 & \textbf{18.5}  \\
Mistral & 7B & $\checkmark$  & 8.7 & 12.8 & 12.9 & 13.6 & 12.9 & 14.2 & \underline{14.9} & \textbf{17.2}  \\ 
Deepseek-distilled-qwen & 1.5B & $\times$ & 11.9 & 18.4 & 18.7 & 15.7 & 19.2 & \underline{21.5} & 20.8 & \textbf{25.9} \\
Deepseek-distilled-qwen & 7B & $\times$ & 18.4 & 28.5 & 28.9 & 28.3 & 30.9 & 31.6 & \underline{31.9} & \textbf{36.1}\\
Deepseek-distilled-qwen & 7B & $\checkmark$ & 18.4 & 22.6 & 22.7 & 20.8 & \underline{23.9} & 22.8 & 22.1 &\textbf{29.3}  \\
Deepseek-distilled-qwen & 14B & $\checkmark$  & 26.3 & 35.8 & 36.2 & 31.5 & 34.1 & 40.7 & \underline{41.2} & \textbf{45.6} \\
\midrule
Average & --& -- & 9.8 & 14.5 & 14.6 & 13.5 & 15 & 16.6 & \underline{16.8} & \textbf{19.3} \\
\bottomrule
\end{tabular}}
\end{table}

\subsection{Extension of Main Experiment}
\label{sec:extension of main experiment}
Two experimental setups were implemented: (1) Models were fine-tuned on NuminaMath-CoT and subsequently evaluated on MATH-500; (2) Models were fine-tuned and evaluated end-to-end on the FIQA dataset. For space considerations, additional experimental results from one economics dataset are included in the appendix, where all configurations are detailed in Table~\ref{tab:hyperparams}.

The result are shown in Table~\ref{tab:main-extension}. In math task, \Name has average 4.6\% higher accuracy than normal fine-tuning and 1.3\% higher accuracy than the best baseline SLM. In finance task, \Name has average 4.8\% higher accuracy than normal fine-tuning and 2.5\% higher accuracy than the best baseline TC. The two tasks are more complex than the gsm8k and medicine tasks used in Section~\ref{sec:main experiment results}, and the final model accuracies are generally lower, especially for small-scale models. Therefore, the average advantage achieved by \Name is not as pronounced as in Section 4.2. However, \Name still achieves the best performance in 16 out of 20 cases and the second-best performance in 2 cases, which is sufficient to demonstrate the reliable advantage of the \Name method.

\subsection{Ablation Study about Threshold}
\label{sec:ablation study about threshold}
In the threshold ablation study, we explore how the three thresholds influence the percentage of filtered tokens and the final training result. This ablation study is conducted using Deepseek-distilled-qwen-1.5B on the GSM8K dataset.

\begin{figure}[t]
  \begin{subfigure}{0.32\linewidth}
    \includegraphics[width=\linewidth]{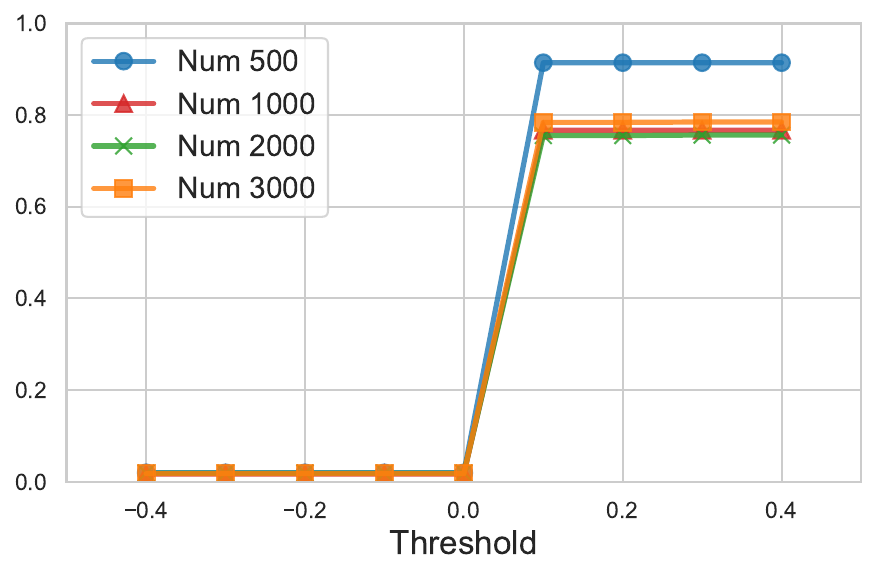}
    \caption{RI Filtering}
  \end{subfigure}
  \hfill
  \begin{subfigure}{0.32\linewidth}
    \includegraphics[width=\linewidth]{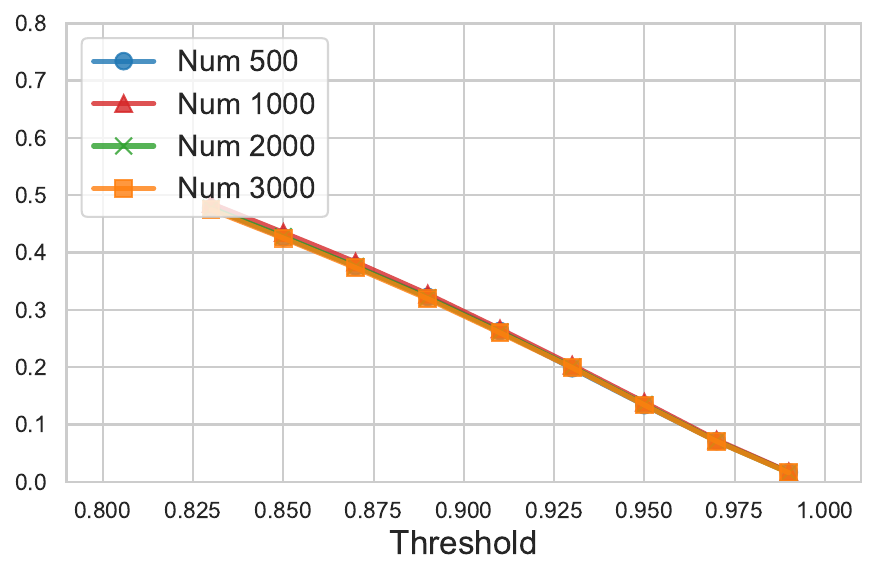}
    \caption{KN Filtering}
  \end{subfigure}
  \hfill
  \begin{subfigure}{0.32\linewidth}
    \includegraphics[width=\linewidth]{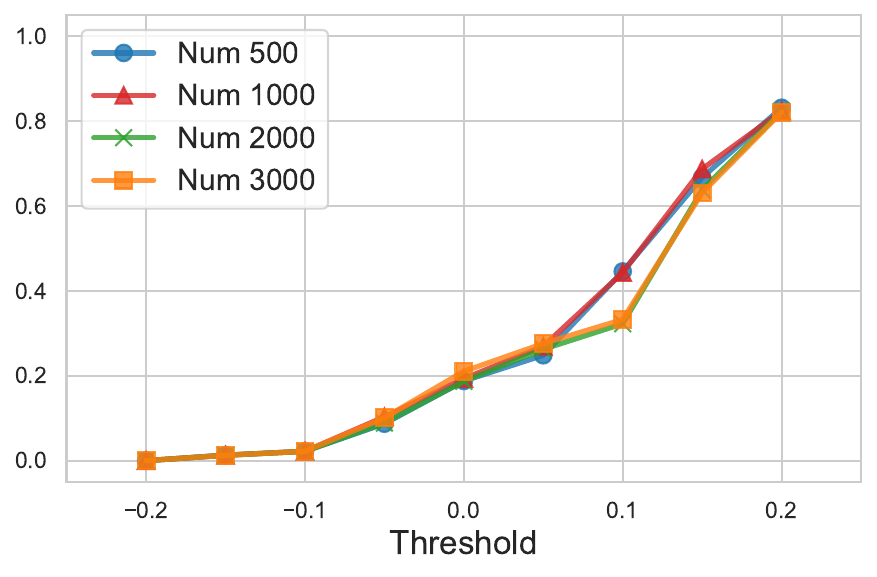}
    \caption{TR Filtering}
  \end{subfigure}

  \caption{Filtered token percentage under different threshold disturbance. In the ablation study of RI and TR, the value 0 on the horizontal axis represents the original threshold without any perturbation. In the ablation study of KN, the horizontal axis corresponds to different PCP values.}
  \label{fig:threshold ablation study}
  \vspace{-1em}
\end{figure}

\partitle{Filtered Tokens}
First, we investigate whether adjusting the XTF thresholds produces a noticeable effect on the total number of filtered tokens. Specifically, for RI and TR, we first apply normalization to the scores and then modify the thresholds obtained by XTF to compute the number of filtered tokens. For KN, we use different PCP values to perform token filtering. To evaluate the stability of the filtering effect, we compute the proportion of filtered tokens relative to all label tokens under four different training-set sizes.
%首先，我们探索调整XTF阈值是否会对被过滤的token总数产生明显影响。具体来说，对于RI和TR，我们先对分数进行Normalization，再对XTF得到的阈值进行修改，计算被过滤的token数量。对于KN，我们使用了不同的PCP进行token过滤。为了验证过滤效果的稳定性，我们分别在4种不同的训练集规模下，计算被过滤token相对于全部label token的占比。

The results are shown in Figure 6. Directly adjusting the threshold value produces a significant impact on the number of filtered tokens. For RI, a slight increase in the threshold causes a large number of tokens (>70\%) to be filtered. For KN, each 1\% change in PCP corresponds to approximately 2\% of all tokens, which exceeds 15\% of the number of tokens filtered under the 0.95 PCP threshold. For TR, both increasing and decreasing the threshold lead to substantial changes. It is noteworthy that the trends of threshold variation are highly similar across different training-set sizes, which indicates that the threshold-selection scheme based on score distributions is resistant to dataset scale.
%结果如图6所示，直接调整阈值value会对过滤token数量造成显著的影响。对%于RI，轻微地提高阈值就会导致大量token（>70%）被过滤；对于KN，每1%的PCP约对应总量2%的token，这超过了0.95PCP阈值下被过滤token数量的15%；对于TR，提高和降低阈值都会带来显著的变化。值得一提的是，阈值的变化趋势在不同规模训练数据下的是非常相似的，这说明XTF基于评分分布的阈值选取方案对于数据集规模是稳定的。

\begin{table}[t]
\centering
\caption{The ablation study on the rationality of the thresholds is conducted as follows: +3\%, -3\%, and -0\% correspond to adjustments in the number of tokens filtered under the respective attributes, based on the scores to decide whether a token should be filtered or not. The top five results are marked in the table.}
\label{tab:ablation study about thoreshold}
\resizebox{0.8\textwidth}{!}{
\begin{tabular}{c|ccccccccc}
\hline
 \multicolumn{10}{c}{} \\
\hline
RI(↓) & -3\% & -3\% & -3\% & -3\% & -3\% & -3\% & -3\% & -3\% & -3\% \\
KN(↓) & -3\% & -3\% & -3\% & -0\% & -0\% & -0\% & +3\% & +3\% & +3\% \\
TR(↓) & -3\% & -0\% & +3\% & -3\% & -0\% & +3\% & -3\% & -0\% & +3\% \\
ACC (\%) & 51.2 & 48.3 & 53.2 & 50.7 & 51.7 & 47.8 & 53.7 & 47.3 & 45.3 \\
\hline
Order & -- & -- & 5 & -- & -- & -- & 4 & -- & -- \\
\hline
 \multicolumn{10}{c}{} \\
\hline
RI(↓) & -0\% & -0\% & -0\% & -0\% & -0\% & -0\% & 0\% & 0\% & 0\% \\
KN(↓) & -3\% & -3\% & -3\% & -0\% & -0\% & -0\% & +3\% & +3\% & +3\% \\
TR(↓) & -3\% & -0\% & +3\% & -3\% & -0\% & +3\% & -3\% & -0\% & +3\% \\
ACC (\%) & 44.3 & 50.5 & 47.8 & 47.3 & 56.2 & 54.2 & 49.8 & 48.0 & 52.8 \\
\hline
Order & -- & -- & -- & -- & 1 & 3 & -- & -- & -- \\
\hline
 \multicolumn{10}{c}{} \\
\hline
RI(↓) & +3\% & +3\% & +3\% & +3\% & +3\% & +3\% & +3\% & +3\% & +3\% \\
KN(↓) & -3\% & -3\% & -3\% & -0\% & -0\% & -0\% & +3\% & +3\% & +3\% \\
TR(↓) & -3\% & -0\% & +3\% & -3\% & -0\% & +3\% & -3\% & -0\% & +3\% \\
ACC (\%) & 48.8 & 47.8 & 47.3 & 48.3 & 50.3 & 50.3 & 48.8 & 47.3 & 55.7 \\
\hline
Order & -- & -- & -- & -- & -- & -- & -- & -- & 2 \\
\hline
\end{tabular}}
\end{table}

\partitle{Final Training Result}
%由于直接调整阈值会对过滤token数量产生较大影响，我们采用更精细的过滤方案，即，直接按照分数高低扰动每个sentence被过滤token的百分比。
Since directly adjusting the threshold produces a large impact on the number of filtered tokens, we adopt a more fine-grained filtering scheme, namely, perturbing the percentage of filtered tokens in each sentence directly according to the ranking of the scores. We varied the number of filtered tokens along different attribute dimensions by ±3\% \footnote{A perturbation range of 3\% is appropriate for our setting because the number of filtered tokens in a sentence is relatively small. If a perturbation of 1 or 2\% is used, the filtered tokens in many sentences remain completely unchanged. A value of 3\% is therefore a reasonable choice based on the experimental setting and granularity.}. We systematically combined all possible threshold settings, resulting in a total of 27 cases.

As shown in Table~\ref{tab:ablation study about thoreshold}, our thresholds, which are computed based on statistical principles, consistently lead to the best fine-tuning performance across all cases. While we observe that certain alternative threshold settings could also yield competitive results, these settings do not conform to consistent patterns, such as linear or normal distributions. Therefore, we believe that it remains challenging to derive a reliable and generalizable threshold computation method through post-hoc adjustment of our statistically derived thresholds.

\section{Computational Overhead}
\label{sec:computational overhead}

\begin{table}[ht]
\centering
\caption{Memory Usage (peak/normal) for different models. Only the scoring process is considered.}
\label{tab:memory usage}
\resizebox{\textwidth}{!}{
\begin{tabular}{c|cccc}
\hline
\textbf{Model} & \textbf{Metric} & \textbf{Math} & \textbf{Code} & \textbf{Medicine} \\
\hline
Llama 1B & Mb (peak/normal) & 5362.2 / 4746.3 & 20149.0 / 4746.3 & 7617.2 / 4746.3 \\
\hline
Llama 3B & Mb (peak/normal) & 13093.7 / 12288.7 & 32465.6 / 12288.7 & 16035.3 / 12288.7 \\
\hline
Llama 8B & Mb (peak/normal) & 31740.7 / 30665.0 & 60487.2 / 30665.0 & 36525.3 / 30665.0 \\
\hline
Deepspeek 1.5B & Mb (peak/normal) & 7509.1 / 6811.8 & 18715.8 / 6811.8 & 7617.2 / 6811.8 \\
\hline
Deepspeek 7B & Mb (peak/normal) & 30439.1 / 29225.3 & 54760.8 / 29225.3 & 34626.5 / 29225.3 \\
\hline
Deepspeek 14B & Mb (peak/normal) & 58941.7 / 56375.2 & 92438.5 / 56375.2 & 68705.6 / 56375.2 \\
\hline
\end{tabular}}
\end{table}

\begin{table}[ht]
\centering
\caption{Time Usage for different models. Only the scoring process is considered.}
\label{tab:time usage}
\resizebox{0.65\textwidth}{!}{
\begin{tabular}{c|cccc}
\hline
\textbf{Model} & \textbf{Metric} & \textbf{Math} & \textbf{Code} & \textbf{Medicine} \\
\hline
Llama 1B & Time (s) & 134.62 & 150.42 & 253.68 \\
\hline
Llama 3B & Time (s) & 324.95 & 355.31 & 608.85 \\
\hline
Llama 8B & Time (s) & 1167.23 & 1541.27 & 2637.92 \\
\hline
Deepspeek 1.5B & Time (s) & 191.44 & 191.32 & 377.20 \\
\hline
Deepspeek 7B & Time (s) & 1179.11 & 1445.44 & 2276.91 \\
\hline
Deepspeek 14B & Time (s) & 3910.83 & 4132.56 & 4235.85 \\
\hline
\end{tabular}}
\end{table}

We evaluate the GPU memory usage and time consumption of \Name across 6 models from two mainstream LLM (Large Language Model) series: Llama and Deepseek. The experimental setting of this experiment is aligned with that of the main experiment in Table~\ref{tab:main}. Additionally, we provide basic information about the selected datasets.

The GPU memory consumption of \Name is at the inference-level, and remains close to the memory required for model loading (normal state), without causing a significant increase in peak memory usage. Specifically, the maximum sequence length of the Code dataset reaches 2806, which is much longer than that of the Math (529) and Medicine (1231) datasets; thus, the Code dataset exhibits a relatively higher memory peak. However, this peak is still acceptable—far below the memory usage of fine-tuning, which is generally four times that of the model loading memory usage.

Time consumption is mainly determined by the average sequence length of the dataset. The Medicine dataset has an average sequence length of 401, which is longer than that of the Math (181) and Code (236) datasets, resulting in the highest time cost. Notably, although increasing the model size leads to higher computational costs, the theoretical overhead of \Name remains significantly lower than that of the most competitive token-level baseline, i.e., SLM (Token-Level Supervised Language Model)—the latter requires training an additional reference model. Details of this experiment are provided in the appendix of the paper.

\partitle{Cost Analysis}
In data processing techniques that involve LLMs, the time and computational costs mainly arise from model inference and training, and the cost of training far exceeds that of inference. Therefore, the cost levels can be divided into two major categories: (1) training-level cost: the token-level baselines, including SLM and TC, require training a reference model to assist with filtering, and they need to perform one inference on the original model and one inference on the reference model to obtain token-level loss values before computing scores. (2) inference-level cost: two types of sample-level methods fall into this category. The DF baseline needs to compute the average perplexity for each sample, and DA requires multiple LLM inferences to perform data augmentation. Although our \Name method is a token-level method, it completes token filtering with only two inferences. The first inference produces the output logits and attention, from which the RI score, KN score, and the domain vector for TR are obtained. The second inference computes the TR score for each token. Arguably, existing token-level methods that rely on multiple rounds of training on a reference model cannot compete with \Name in terms of cost, and the cost of sample-level methods is also not substantially lower than that of \Name.
%LLM参与的数据处理技术，其时间开销/计算成本主要源于模型的推理和训练，而训练的开销又远超推理。与baselines相比，XTF在计算成本上具有优势。具体来说，开销等级可以分为两个大类：（1）训练级开销：Token-level的baselines，包含SLM和TC，均需要训练reference model来协助过滤，并且需要分别在原本模型和reference model上进行一次推理，得到token级的loss值，再计算分数。（2）推理级开销：两类sample-level的方法属于这一类。DF baseline需要对每个样本计算平均困惑度，DA则需要调用LLM多次推理进行数据增广。我们的XTF方法，虽然是token-level方法，但是只需要两次推理就可以完成token过滤。第一次推理得到输出的logits和attention，得到RI分数，KN分数以及TR的domain vector。第二次推理计算出每个token的TR分数。可以说，基于reference model多次训练的现有token-level方法，无法在成本上和XTF竞争，sample-level方法的成本也不会比XTF低多少。

\subsection{Distribution Figures}
\label{sec:distribution figures}

\begin{figure}[t]
   \begin{subfigure}{0.32\linewidth}
    \includegraphics[width=\linewidth]{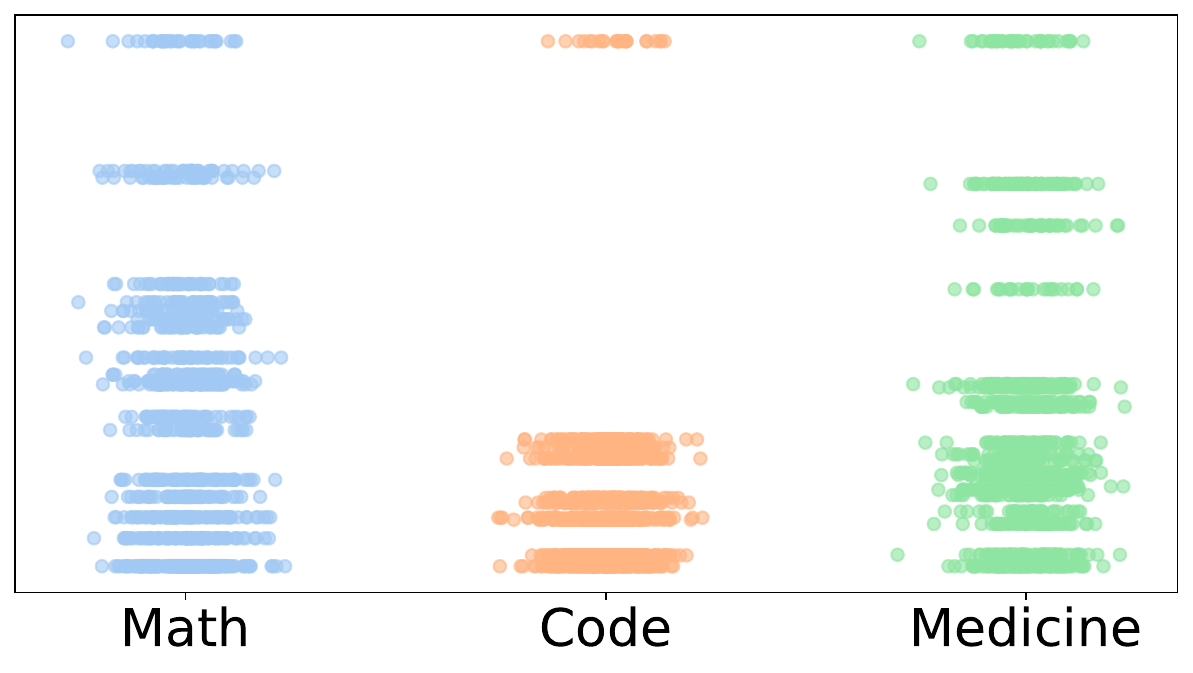}
    \caption{$\mathcal{S}_{\mathrm{RI}}$ on Deepseek7B}
  \end{subfigure}
  \hfill
  \begin{subfigure}{0.32\linewidth}
    \includegraphics[width=\linewidth]{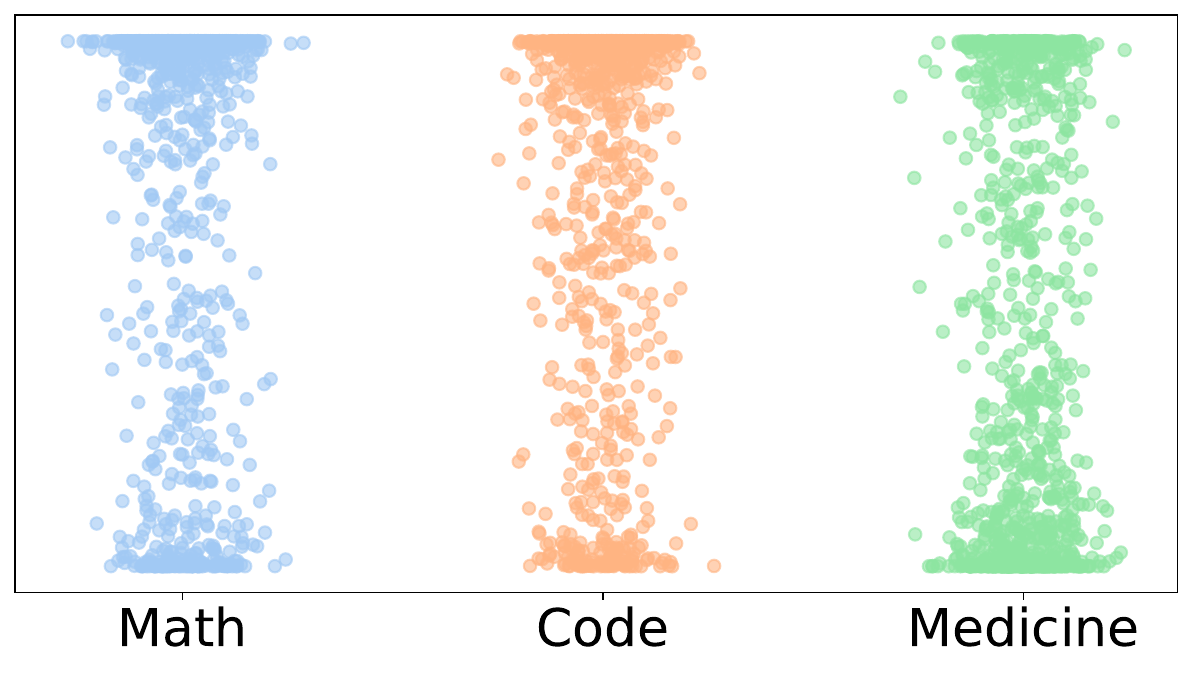}
    \caption{$\mathcal{S}_{\mathrm{KN}}$ on Deepseek7B}
  \end{subfigure}
  \hfill
  \begin{subfigure}{0.32\linewidth}
    \includegraphics[width=\linewidth]{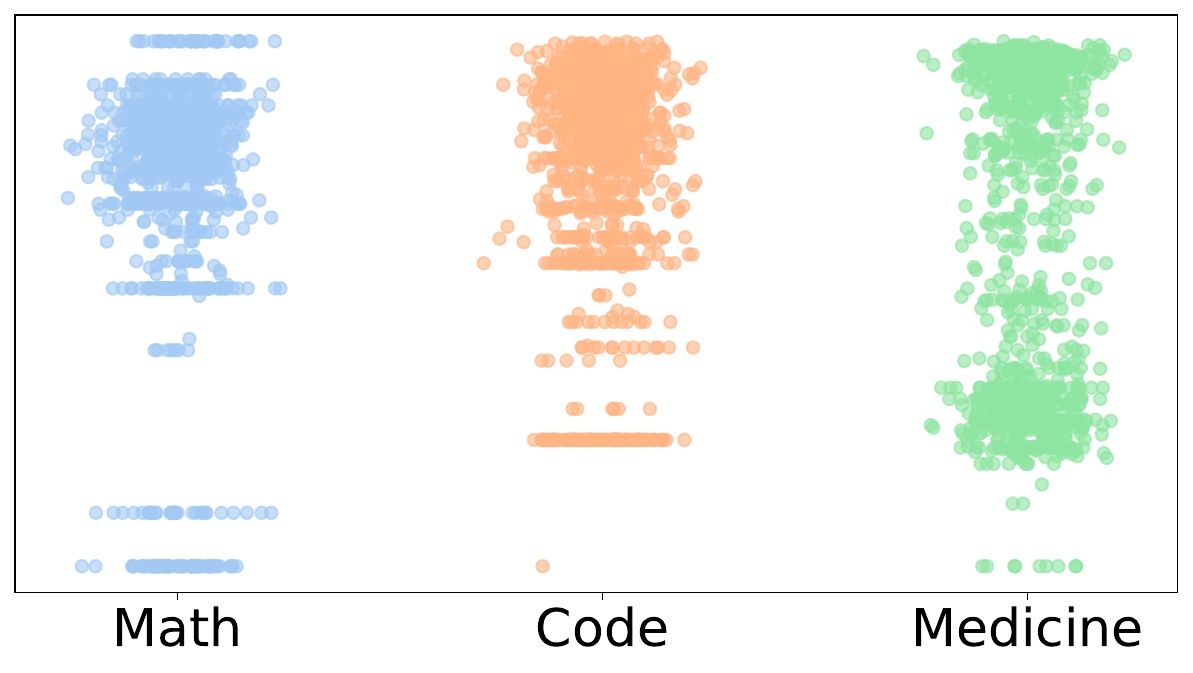}
    \caption{$\mathcal{S}_{\mathrm{TR}}$ on Deepseek7B}
  \end{subfigure}

    \begin{subfigure}{0.32\linewidth}
    \includegraphics[width=\linewidth]{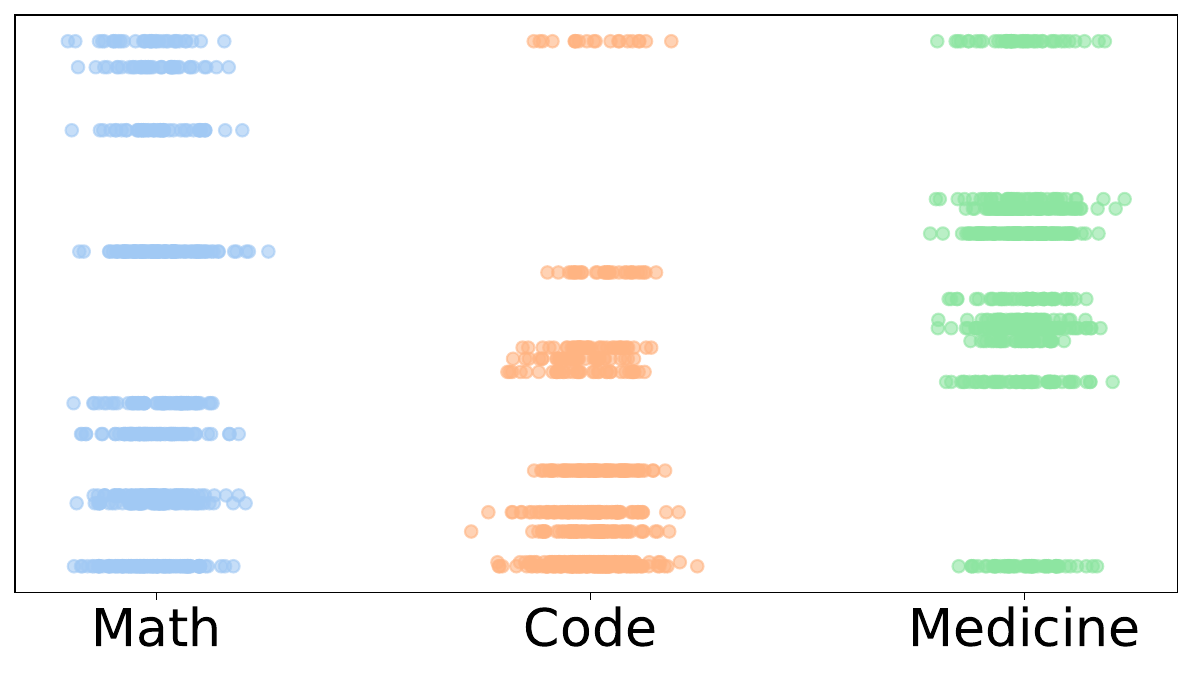}
    \caption{$\mathcal{S}_{\mathrm{RI}}$ on Llama1B}
  \end{subfigure}
  \hfill
  \begin{subfigure}{0.32\linewidth}
    \includegraphics[width=\linewidth]{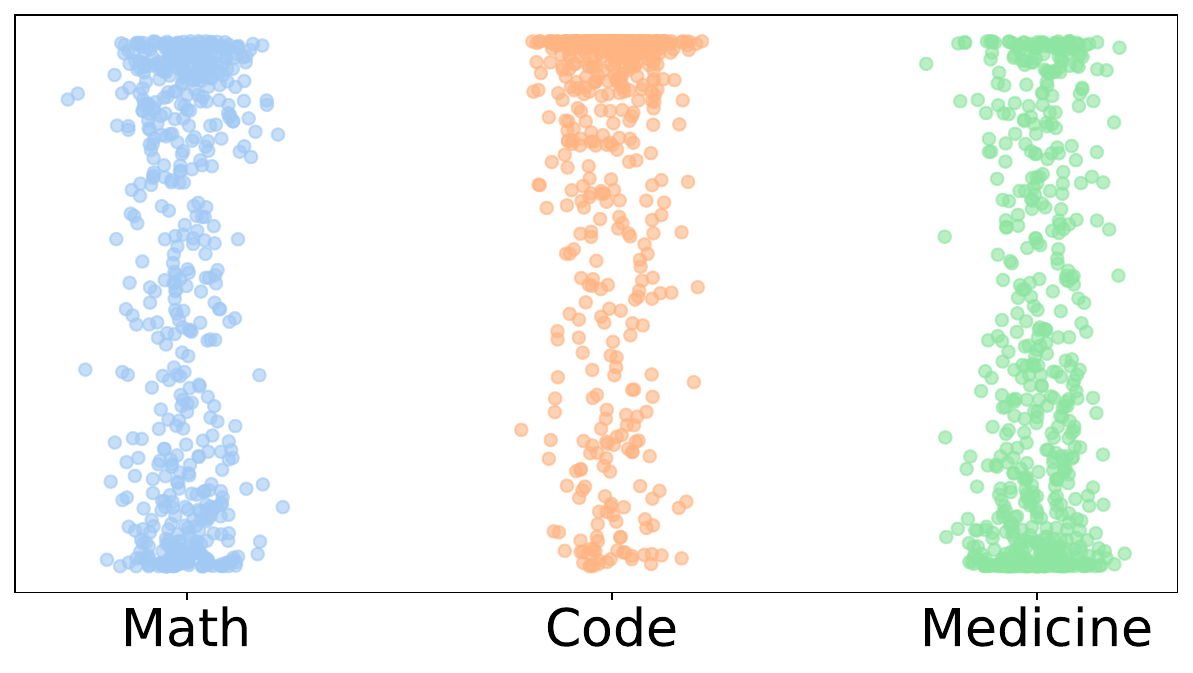}
    \caption{$\mathcal{S}_{\mathrm{KN}}$ on Llama1B}
  \end{subfigure}
  \hfill
  \begin{subfigure}{0.32\linewidth}
    \includegraphics[width=\linewidth]{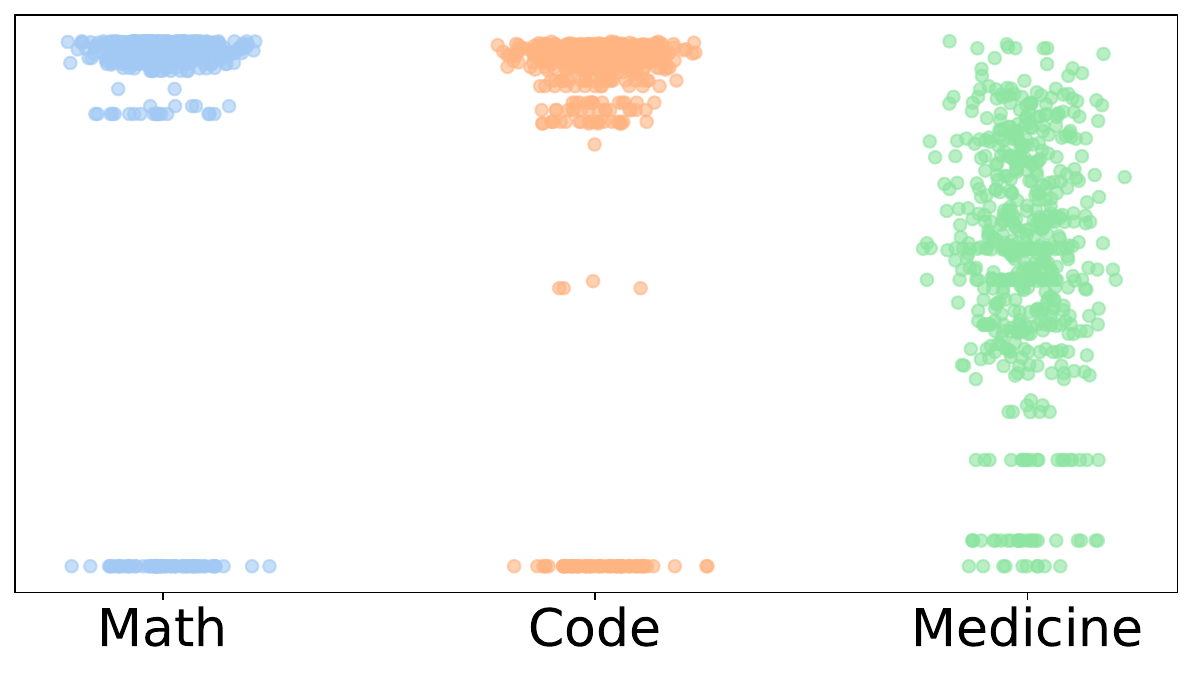}
    \caption{$\mathcal{S}_{\mathrm{TR}}$ on Llama1B}
  \end{subfigure}

  \begin{subfigure}{0.32\linewidth}
    \includegraphics[width=\linewidth]{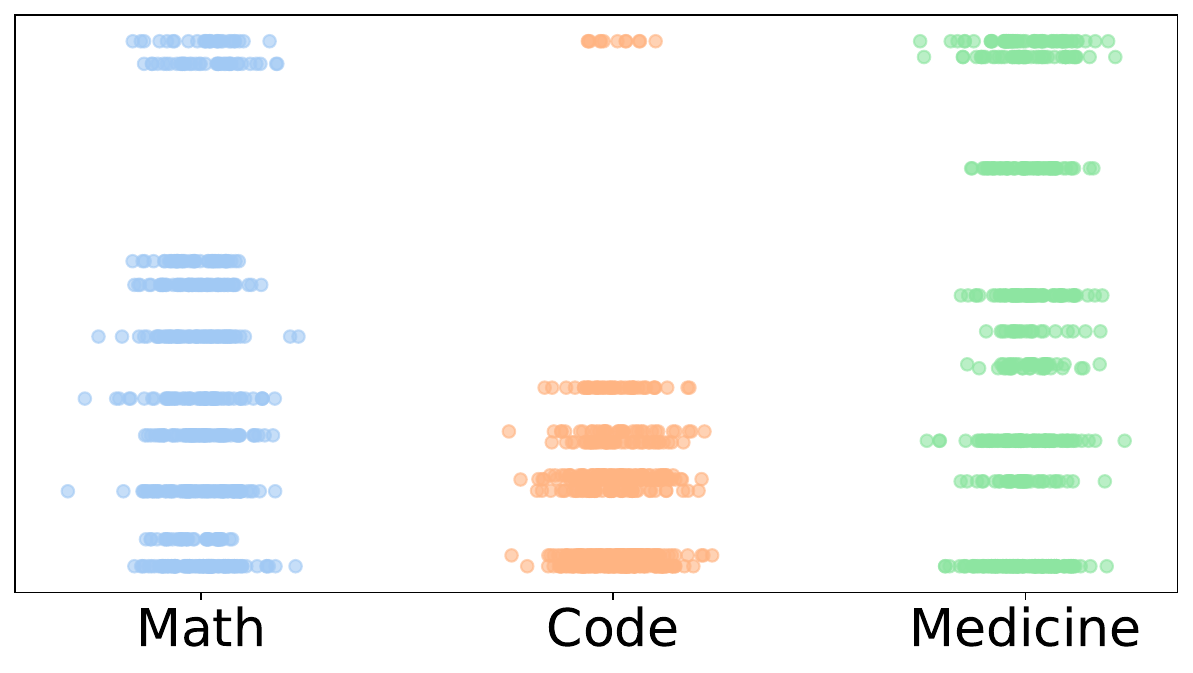}
    \caption{$\mathcal{S}_{\mathrm{RI}}$ on Llama3B}
  \end{subfigure}
  \hfill
  \begin{subfigure}{0.32\linewidth}
    \includegraphics[width=\linewidth]{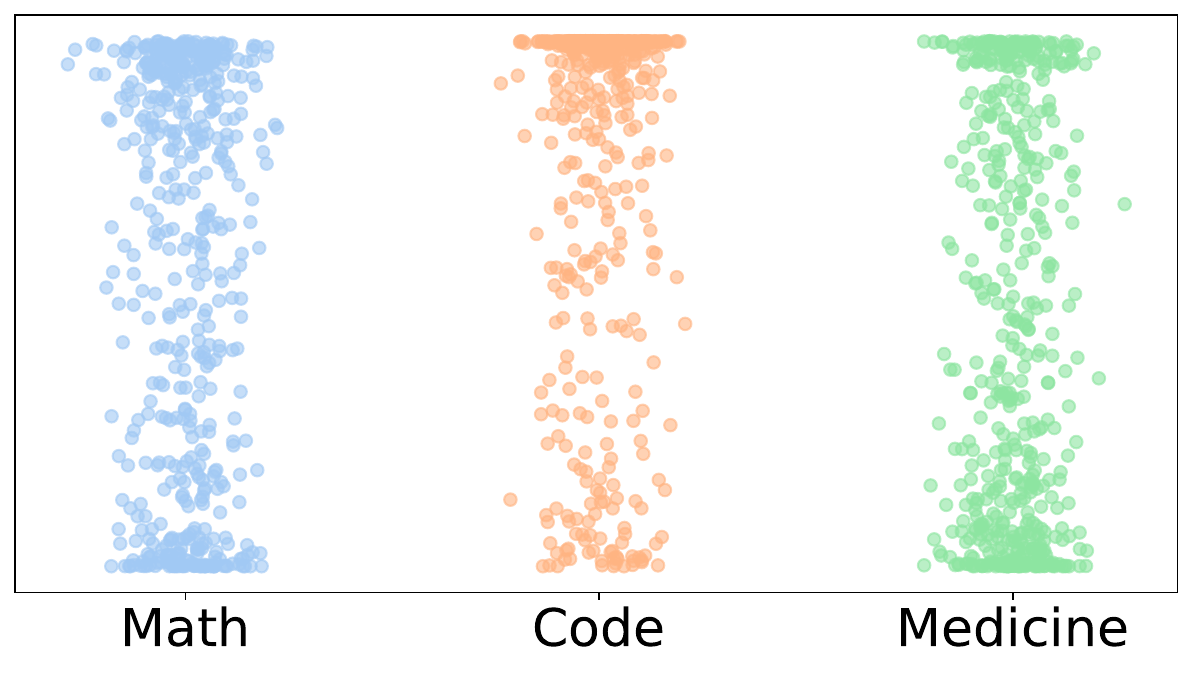}
    \caption{$\mathcal{S}_{\mathrm{KN}}$ on Llama3B}
  \end{subfigure}
  \hfill
  \begin{subfigure}{0.32\linewidth}
    \includegraphics[width=\linewidth]{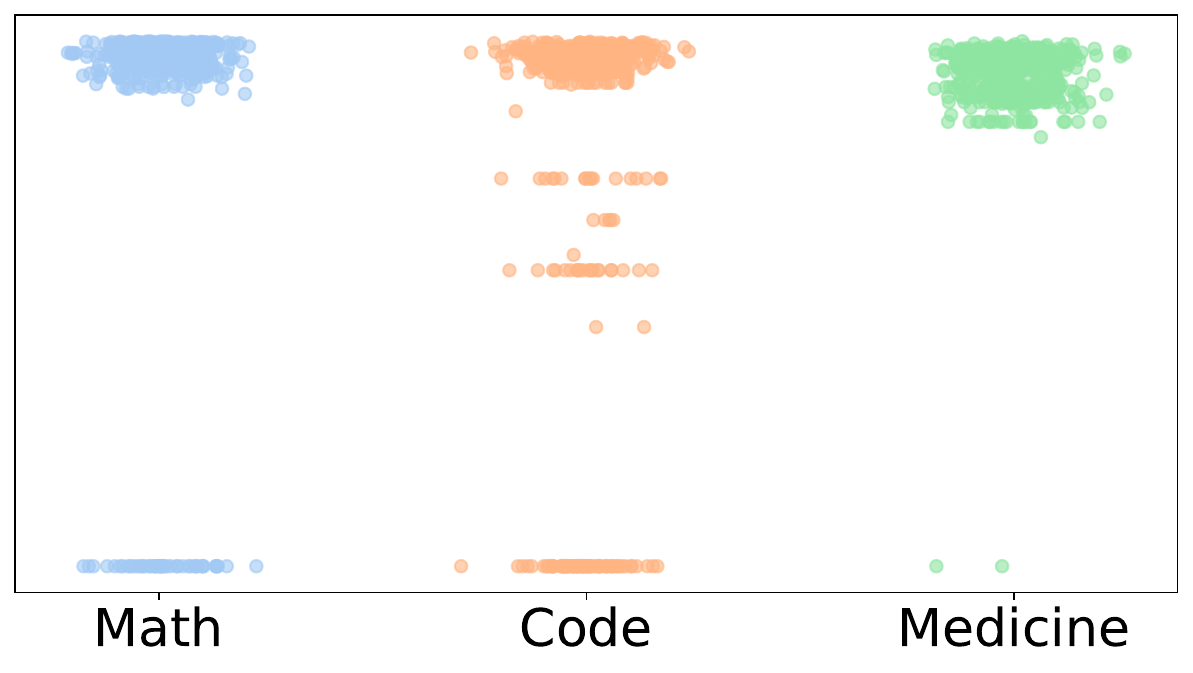}
    \caption{$\mathcal{S}_{\mathrm{TR}}$ on Llama3B}
  \end{subfigure}

  \begin{subfigure}{0.32\linewidth}
    \includegraphics[width=\linewidth]{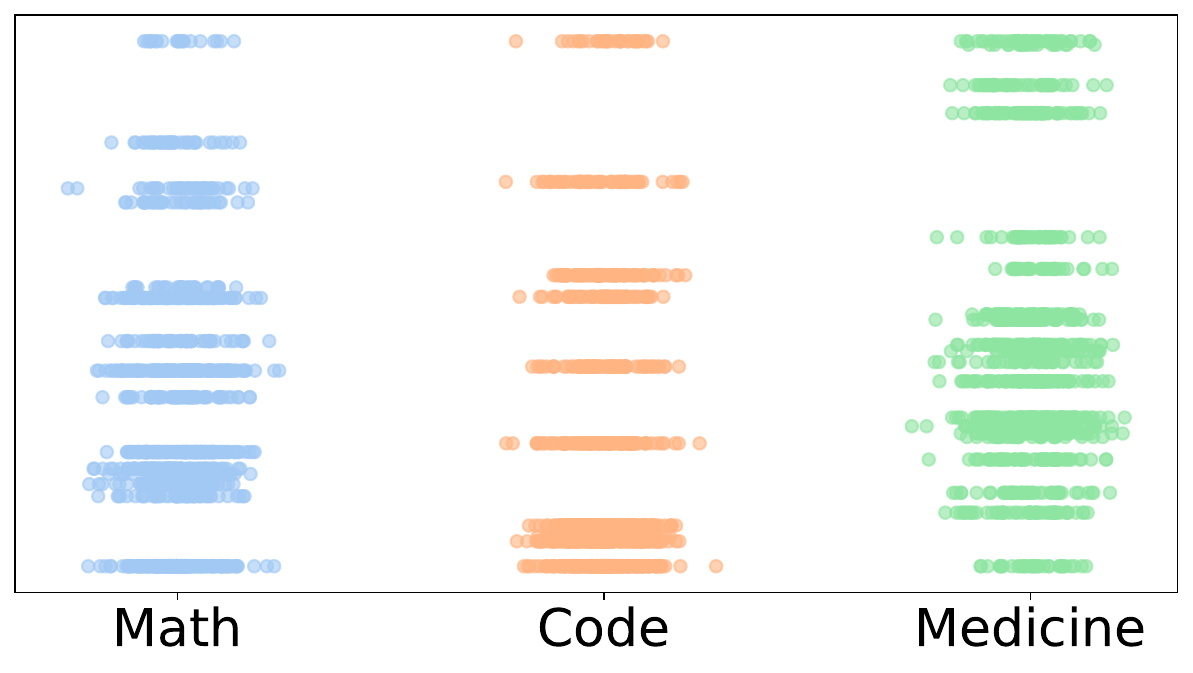}
    \caption{$\mathcal{S}_{\mathrm{RI}}$ on Llama8B}
  \end{subfigure}
  \hfill
  \begin{subfigure}{0.32\linewidth}
    \includegraphics[width=\linewidth]{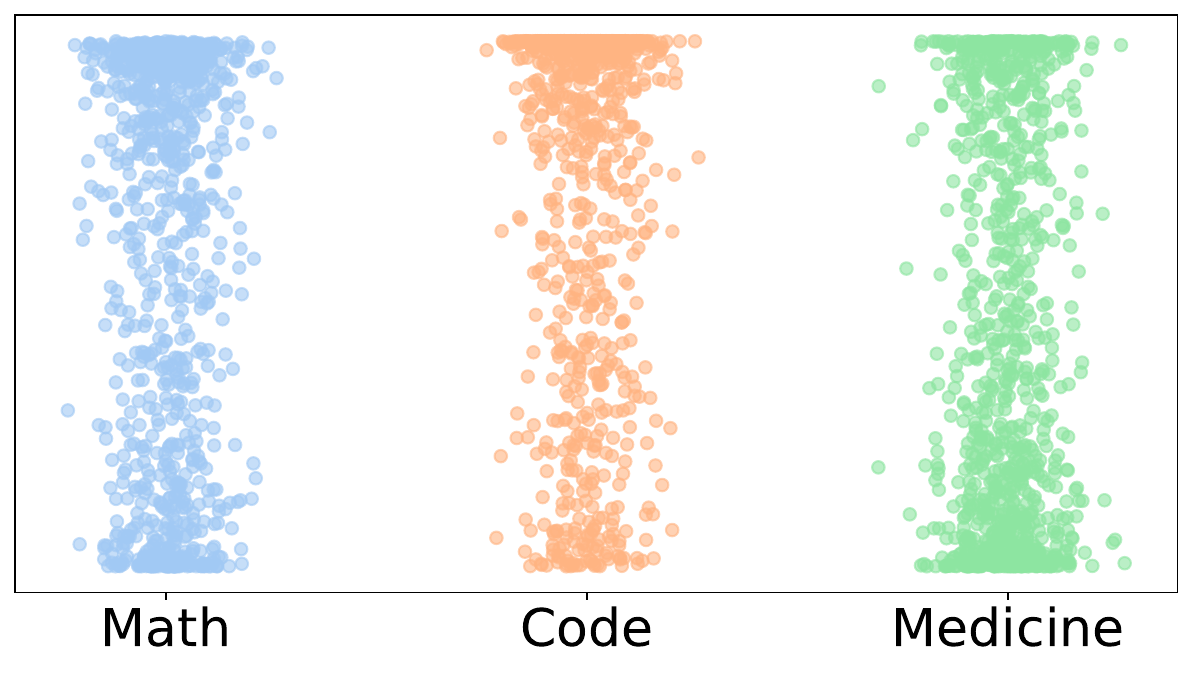}
    \caption{$\mathcal{S}_{\mathrm{KN}}$ on Llama8B}
  \end{subfigure}
  \hfill
  \begin{subfigure}{0.32\linewidth}
    \includegraphics[width=\linewidth]{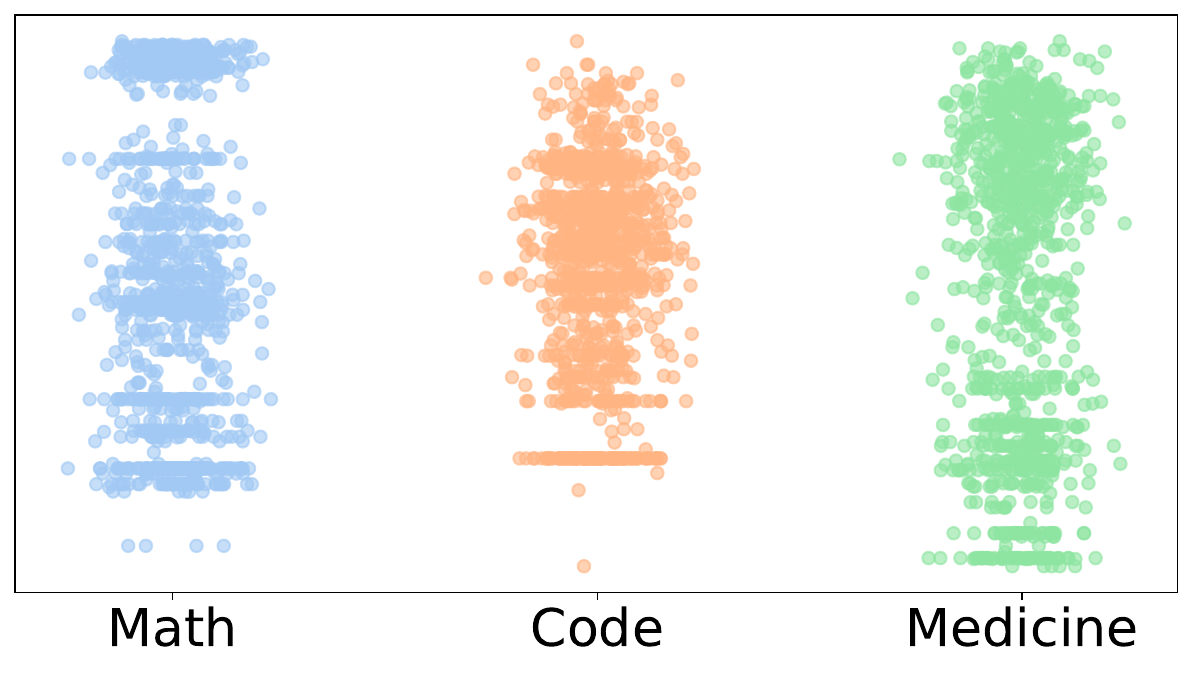}
    \caption{$\mathcal{S}_{\mathrm{TR}}$ on Llama8B}
  \end{subfigure}

  \begin{subfigure}{0.32\linewidth}
    \includegraphics[width=\linewidth]{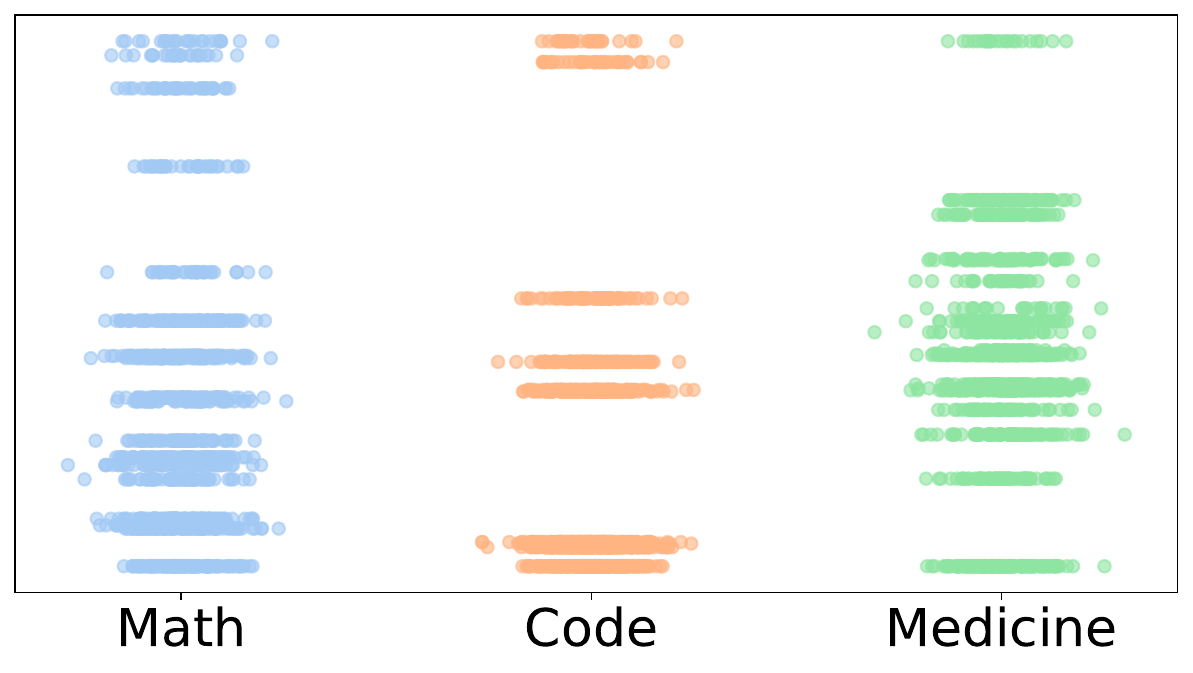}
    \caption{$\mathcal{S}_{\mathrm{RI}}$ on Mistral7B}
  \end{subfigure}
  \hfill
  \begin{subfigure}{0.32\linewidth}
    \includegraphics[width=\linewidth]{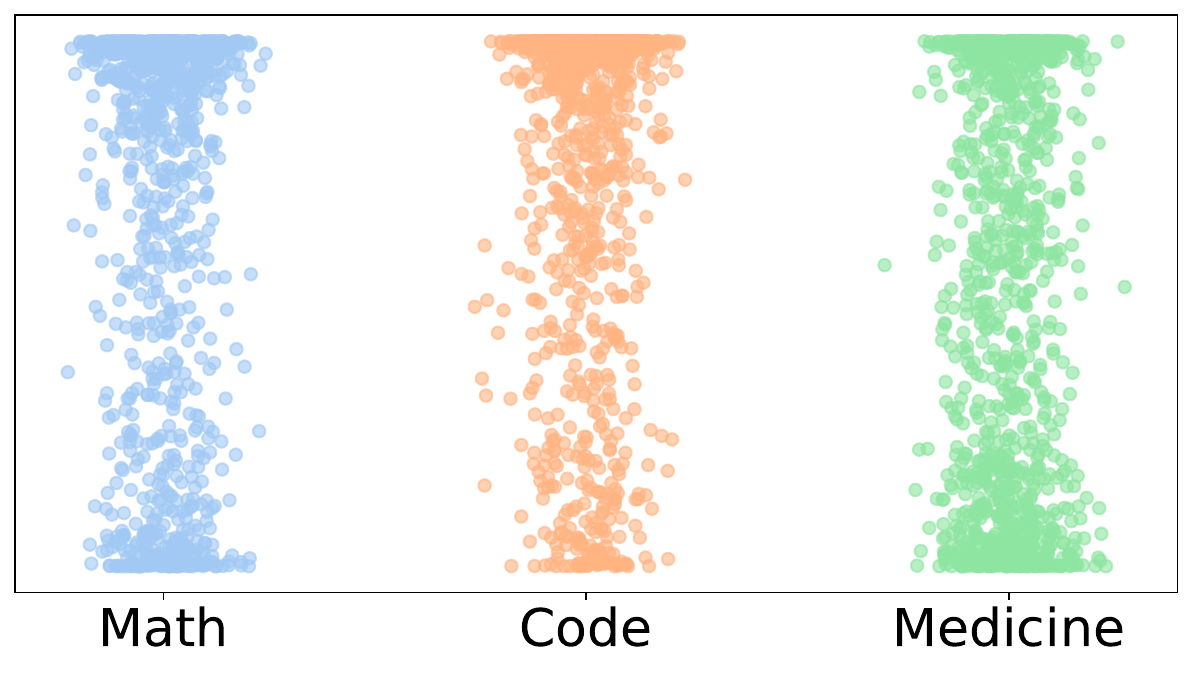}
    \caption{$\mathcal{S}_{\mathrm{KN}}$ on Mistral7B}
  \end{subfigure}
  \hfill
  \begin{subfigure}{0.32\linewidth}
    \includegraphics[width=\linewidth]{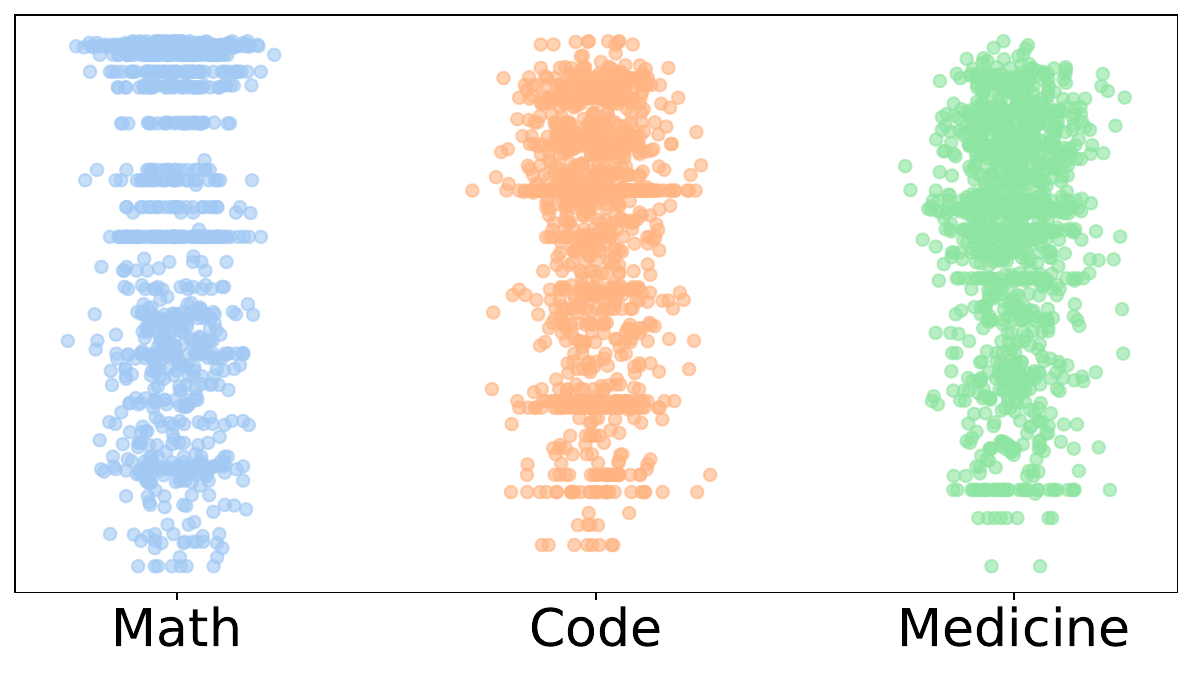}
    \caption{$\mathcal{S}_{\mathrm{TR}}$ on Mistral7B}
  \end{subfigure}
  \caption{Distribution of three scores across different LLMs.}
  \label{fig:more distribution figures}
\end{figure}

%在这里我们展示了更多三种分数的分布图。As shown in Figure~\ref{fig:more distribution figures},尽管在不同的模型或数据集上，分数分布有所差别，但是总体仍旧符合Section~\ref{sec:methodology}中所讨论的规律。具体来说，reasoning importance socre的分布经常在一些固定的数值上，并且仅有少量极端值。Knowledge novelty score的分布较为均匀，很难轻易划分。Task relevance score的分布有着集群特征，可以使用聚类方法进行划分。

Here we present more detailed distribution plots of the three scores. As shown in Figure~\ref{fig:more distribution figures}, although the score distributions differ across various models or datasets, they still generally align with the patterns discussed in Section~\ref{sec:methodology}. Specifically, the reasoning importance scores are often concentrated at certain fixed values, with only a few extreme values observed. The knowledge novelty score distribution is relatively uniform, making it difficult to partition. The task relevance score distribution exhibits distinct clustering characteristics, thus enabling effective partitioning via clustering methods.

\section{LLM usage}
An OpenAI LLM (GPT-4o) was utilized as an assistant for writing and formatting, specifically to refine and suggest edits to figure and table captions, including improvements in grammar, phrasing, clarity, and layout (e.g., column alignment, caption length, and placement). The model's role was strictly limited to surface-level text and visual edits, without contributing to research ideation, experimental design, data analysis, or technical content. All outputs were thoroughly reviewed and revised by the authors, who retain full responsibility for the final text and visuals.

\section{Examples for Token-level Noise}
\label{sec:examples}
Here we show some specific results of token-level noise filtering. As shown in Figure~\ref{fig:example_math}, Figure~\ref{fig:example_math2}, Figure~\ref{fig:example_code}, Figure~\ref{fig:example_medicine} and Figure~\ref{fig:example_fiqa}, the part marked in yellow is the noisy tokens filtered out according to the corresponding scores.

We would like to point out that the filtering decisions made by \Name may, in some cases, be more reliable than human intuition. Take GSM8K as an example: tokens related to mathematical operations (e.g., “+”, “=”) or even specific computed results are not always necessary for effective fine-tuning. From the perspective of Knowledge Novelty (KN)—a dimension easier to interpret—we show in Figure 7 that the filtering prediction probability of the third “+” in the expression “2P + P + 4 = 3P + 4 = 220” exceeds 95\%, and the token “216” in “220 - 4 = 216” is also filtered out. This outcome deviates from typical human intuition, indicating that learning reasoning logic is more critical for the model than memorizing symbols or specific results.

We do not claim that \Name’s scoring mechanism is absolutely flawless. To mitigate the risk of mistakenly removing important tokens, we intentionally adopt conservative thresholds for all three scoring dimensions. This strategy reduces the likelihood of filtering out valuable tokens, enabling \Name to maintain consistently strong performance.

\begin{figure}[t]
    \includegraphics[width=\linewidth]{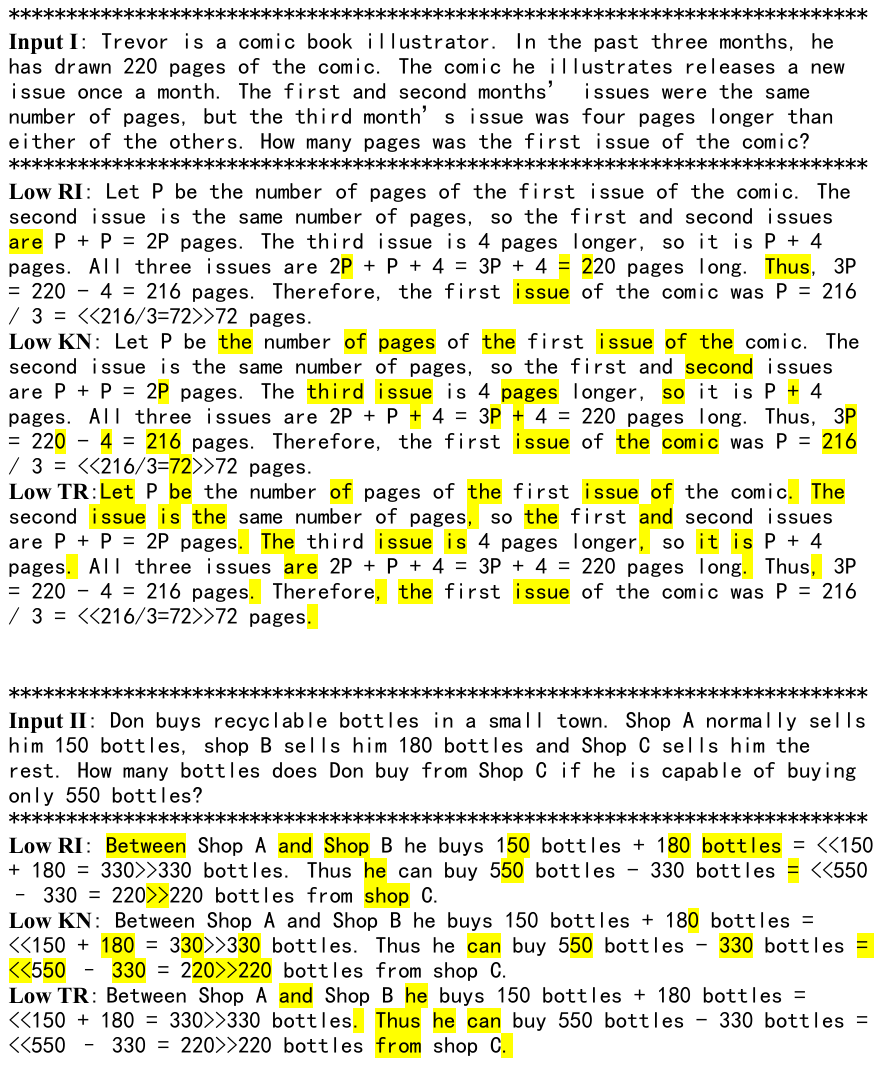}
    \caption{Low score tokens in gsm8k dataset.}
    \label{fig:example_math}
  \end{figure}
\begin{figure}[t]
    \includegraphics[width=\linewidth]{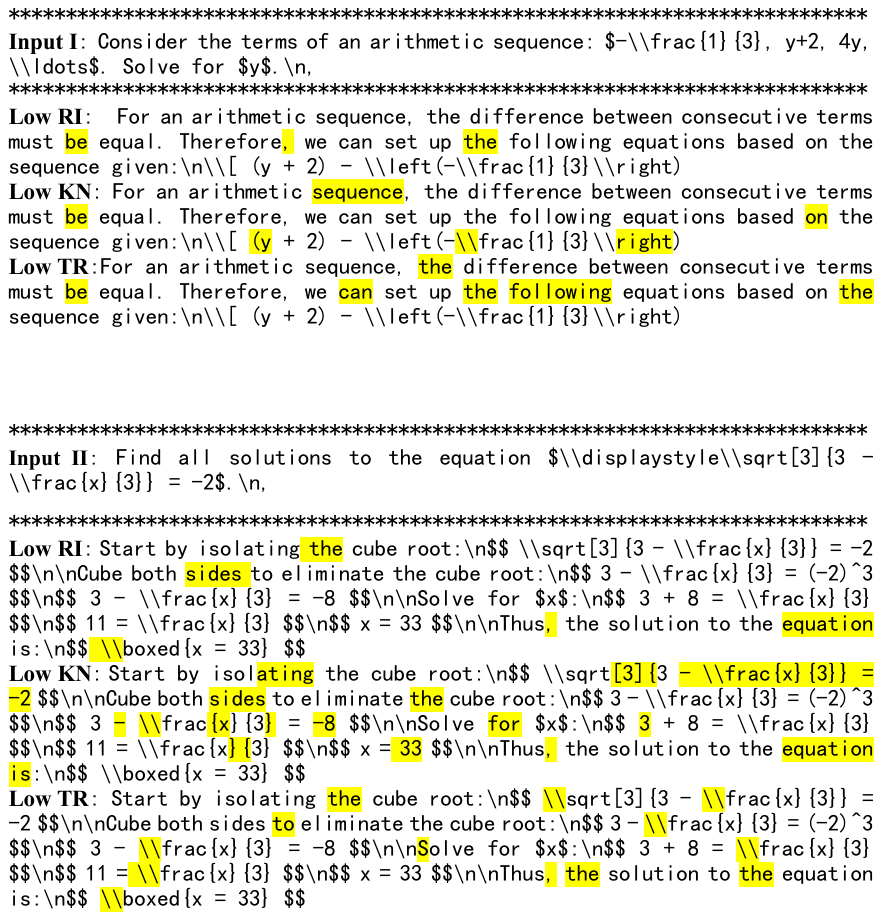}
    \caption{Low score tokens in NuminaMath-CoT dataset.}
    \label{fig:example_math2}
  \end{figure}  
\begin{figure}[t]
    \includegraphics[width=\linewidth]{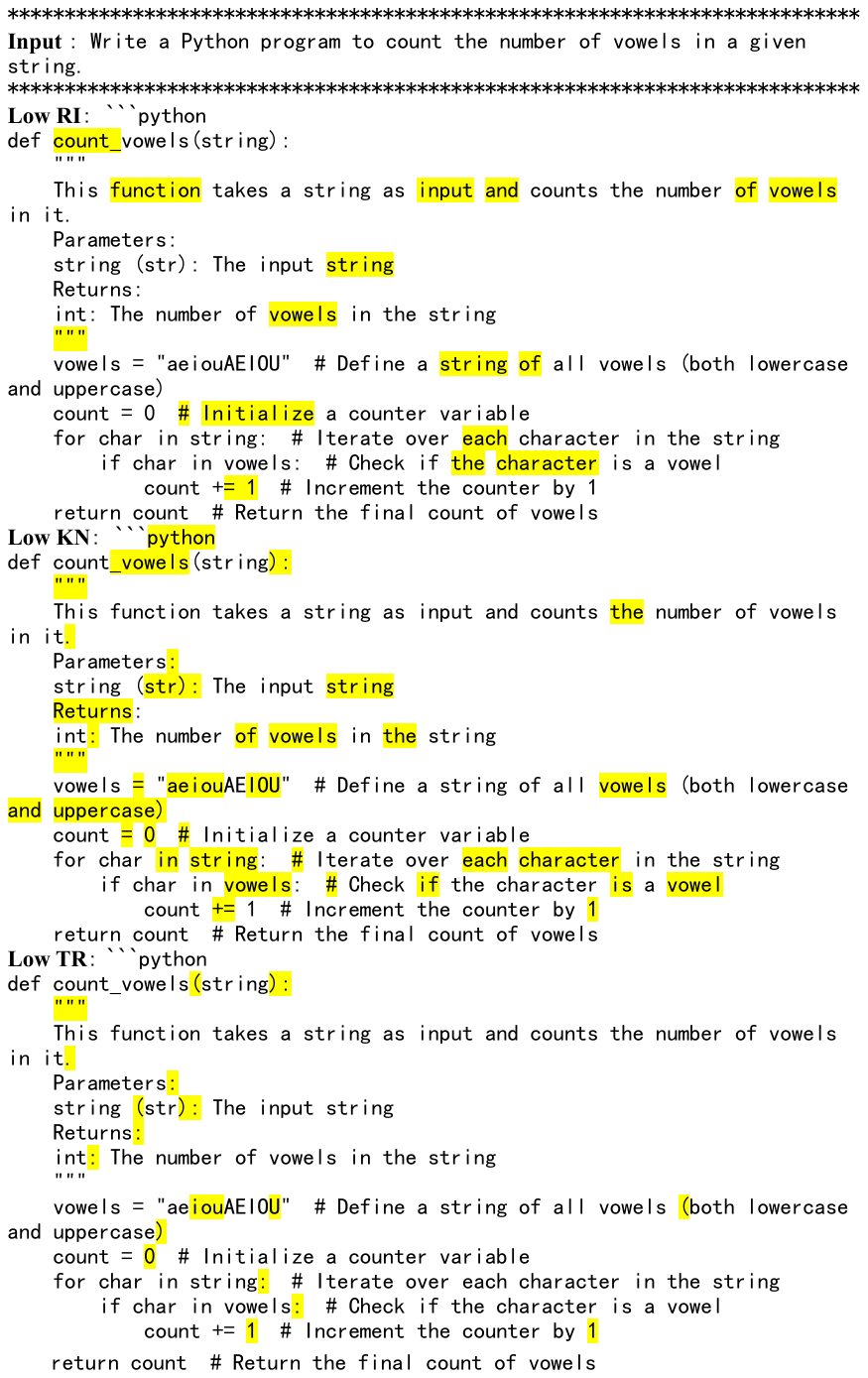}
    \caption{Low score tokens in CodeExercise-Python-27k dataset.}
    \label{fig:example_code}
  \end{figure}
\begin{figure}[t]
    \includegraphics[width=\linewidth]{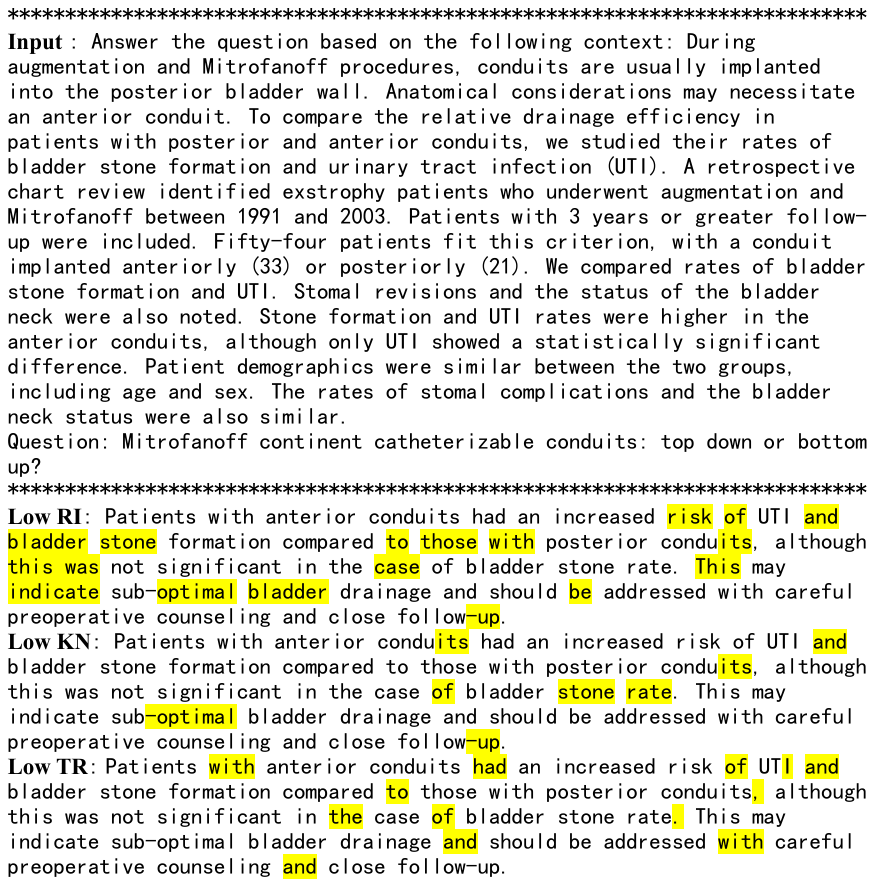}
    \caption{Low score tokens in PubMedQA dataset.}
    \label{fig:example_medicine}
  \end{figure}
\begin{figure}[t]
    \includegraphics[width=\linewidth]{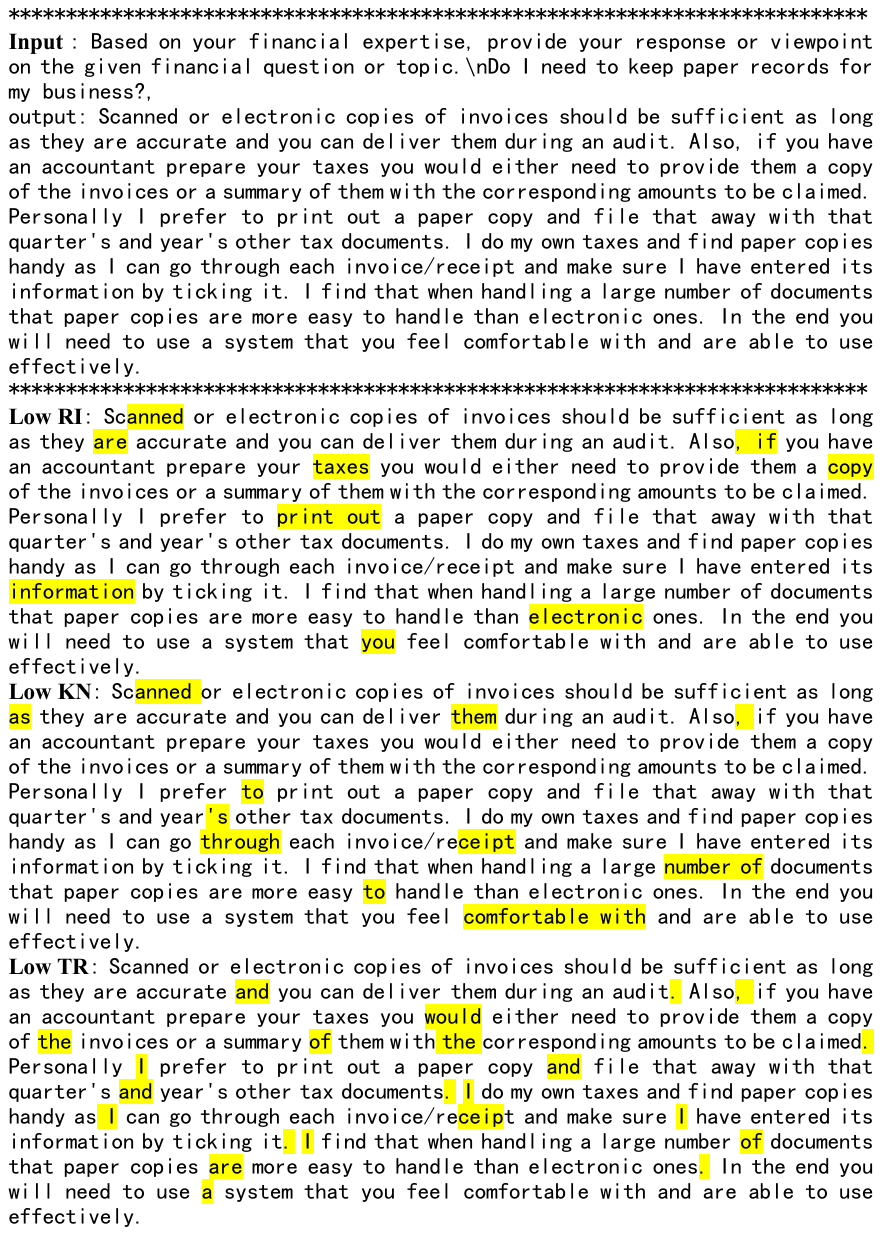}
    \caption{Low score tokens in fiqa dataset.}
    \label{fig:example_fiqa}
  \end{figure}

\end{document}